\let\ifarxiv\iftrue
\def\paragraph{\@startsection{paragraph}{4}{\z@}{0pt}{-1em}{\normalsize\bf}}
\crefname{appsec}{Appendix}{Appendices}
\DeclareMathOperator*{\argmin}{argmin}
\DeclareMathOperator{\diag}{diag}
\newcommand{\ud}{\mathrm d}
\DeclareMathOperator{\E}{\mathbb E}
\newcommand{\f}{\mathcal F}
\newcommand{\h}{\mathcal H}
\newcommand{\hs}{\mathrm{HS}}
\DeclareMathOperator{\N}{\mathcal N}
\newcommand{\bigO}{\mathcal O}
\newcommand{\bigOmega}{\Omega}
\newcommand{\littleO}{o}
\newcommand{\p}{\mathcal P}
\newcommand{\R}{\mathbb R}
\DeclareMathOperator{\range}{range}
\DeclareMathOperator{\spn}{span}
\DeclareMathOperator{\supp}{supp}
\DeclareMathOperator{\Tr}{Tr}
\newcommand{\tp}{^\mathsf{T}}
\newcommand{\lite}{\textsf{lite}}
\newcommand{\dae}{\textsf{dae}}
\newcommand{\full}{\textsf{full}}
\newcommand{\nystrom}{\textsf{nyström}}
\newcommand{\nystromd}{\textsf{nyström D}}
\newcommand{\grid}{\texttt{grid}}
\newcommand{\ring}{\texttt{ring}}
\newcommand{\httpsurl}[1]{\href{https://#1}{\nolinkurl{#1}}}
\newcommand\given{\@ifstar{\mathrel{}\middle|\mathrel{}}{\mid}}
\DeclareRobustCommand{\abs}{\@ifstar\@abs\@@abs}
\newcommand{\@abs}[1]{\left\lvert #1 \right\rvert}
\newcommand{\@@abs}[1]{\lvert #1 \rvert}
\DeclareRobustCommand{\norm}{\@ifstar\@norm\@@norm}
\newcommand{\@norm}[1]{\left\lVert #1 \right\rVert}
\newcommand{\@@norm}[1]{\lVert #1 \rVert}
\newtheorem{lemma}{Lemma}
\newtheorem{theorem}{Theorem}
\newtheorem{definition}{Definition}
\newlist{assumplist}{enumerate}{1}
\setlist[assumplist]{label=(\textbf{\Alph*})}
\Crefname{assumplisti}{Assumption}{Assumptions}
\newcommand\themaintitle{Efficient and principled score estimation\\with Nystr\"om kernel exponential families}
\newcommand\therunningtitle{Efficient and principled score estimation with Nystr\"om kernel exponential families}
\begin{document}

\begin{refsection}

\renewcommand*{\thefootnote}{\fnsymbol{footnote}}%
\twocolumn[
\runningtitle{\therunningtitle}
\aistatstitle{\themaintitle}
\aistatsauthor{Danica J.\ Sutherland\footnotemark[1] \And Heiko Strathmann\footnotemark[1] \And Michael Arbel \And Arthur Gretton}
\runningauthor{Sutherland, Strathmann, Arbel, Gretton}
\aistatsaddress{%
  Gatsby Computational Neuroscience Unit, University College London\\
  \texttt{\{danica.j.sutherland,heiko.strathmann,michael.n.arbel,arthur.gretton\}@gmail.com}}
]
\renewcommand*{\thefootnote}{\arabic{footnote}}%

\begin{abstract}
  We propose a fast method with statistical guarantees for learning an exponential family density model where the natural parameter is in a reproducing kernel Hilbert space, and may be infinite-dimensional. The model is learned by fitting the derivative of the log density, the \emph{score}, thus avoiding the need to compute a normalization constant. Our approach improves the computational efficiency of an earlier solution by using a low-rank, Nystr\"om-like solution. The new solution retains the consistency and convergence rates of the full-rank solution (exactly in Fisher distance, and nearly in other distances), with guarantees on the degree of cost and storage reduction. We evaluate the method in experiments on density estimation and in the construction of an adaptive Hamiltonian Monte Carlo sampler. Compared to an existing score learning approach using a denoising autoencoder, our estimator is empirically more data-efficient when estimating the score, runs faster, and has fewer parameters (which can be tuned in a principled and interpretable way), in addition to providing statistical guarantees.
\end{abstract}

\section{INTRODUCTION}

We address the problem of efficiently estimating the natural parameter of a density in the exponential family, where this parameter may be infinite-dimensional (a member of a function space).  While finite-dimensional exponential families are a keystone of parametric statistics \citep{Brown-86}, their generalization to the fully non-parametric setting has proved challenging, despite the benefits and applications envisaged for such models  \citep{kernel-expfam}: it is difficult to construct a practical, consistent maximum likelihood solution for infinite-dimensional natural parameters \citep{Barron-91,GuQiu93,kenji:sieves}. In the absence of a tractable estimation procedure, the infinite exponential family has not seen the widespread adoption and practical successes of other nonparametric generalizations of parametric models, for instance the Gaussian and Dirichlet processes.
\renewcommand*{\thefootnote}{\fnsymbol{footnote}}%
\footnotetext[1]{These authors contributed equally.}%
\setcounter{footnote}{0}%
\renewcommand*{\thefootnote}{\arabic{footnote}}%

Recently, \citet{infdim} developed a procedure to fit infinite exponential family models to sample points drawn i.i.d.\ from a probability density, where the natural parameter is a member of a reproducing kernel Hilbert space. The approach employs a score matching procedure  \citep{hyvarinen2005estimation}, which minimizes the {\em Fisher distance}: the expected squared distance between the model {\em score} (derivative of the log density) and the score of the (unknown) true density, which can be evaluated using integration by parts. Unlike the maximum likelihood case, a Tikhonov-regularized solution can be formulated to obtain a well-posed and straightforward solution, which is a linear system defined in terms of first and second derivatives of the RKHS kernels at the sample points. Details of the model and its empirical fit are given in Section \ref{sec:backgroundExpFam}.
\citet{infdim} established consistency in Fisher, $L^r$, Hellinger, and KL distances, with rates depending on the smoothness of the density.

\citet{strathmann2015kernel_hmc} used the infinite-dimensional exponential family to approximate Hamiltonian Markov chain Monte Carlo when gradients are unavailable. In this setting, the score of the stationary distribution of the Markov chain is learned from the chain history, and used in formulating new, more efficient proposals for a Metropolis-Hastings algorithm.
Computing the full solution from \citet{infdim} has memory cost $\bigO(n^2 d^2)$ and computational cost $\bigO(n^3 d^3)$, where $n$ is the number of training samples and $d$ is the dimension of the problem; thus approximations were needed for practical implementation.
\citeauthor{strathmann2015kernel_hmc} proposed two heuristics: one using random Fourier features \citep{rahimi2007random,SutSch15,sriperumbudur2015optimal}, and the second using a finite, random set of basis points.  While these heuristics greatly improved the runtime, no convergence guarantees are known, nor how quickly to increase the complexity of these solutions with increasing $n$.

We present an efficient learning scheme for the infinite-dimensional exponential family, using a Nystr\"om approximation to the solution established in \cref{thm:nys-form}.  Our main theoretical contribution, in Theorem \ref{thm:main-big-o}, is to prove guarantees on the convergence of this algorithm for an increasing number $m$ of Nystr\"om points and $n$ training samples. Depending on the problem difficulty, convergence is attained in the regime $m\sim n^{1/3} \log n$ to $m \sim n^{1/2} \log n$, thus yielding guaranteed cost savings. The overall Fisher distance between our  solution and the true density decreases as $m,n\rightarrow \infty$  with rates that  match those of the full solution from \citet[Theorem 6]{infdim}; convergence in other distances (e.g., KL and Hellinger) either matches or is slightly worse, depending on the problem smoothness.  These tight generalization bounds  draw on recent state-of-the-art techniques developed for least-squares regression by \citet{lessismore}, which efficiently and directly control the generalization error as a function of the Nystr\"om basis, rather than relying on indirect proofs via the reconstruction error of the Gram matrix, as in e.g.\ \citet{cortes:kernel-approx}. \cref{sec:nys-expfam,sec:theory} give details.

In our experiments (Section \ref{sec:experiments}), we compare our approach against the full solution of \citet{infdim}, the heuristics of \citet{strathmann2015kernel_hmc}, and the autoencoder score estimator of \citet{alain2014regularized} (discussed in \cref{sec:dae}). We address two problem settings.
First, we evaluate score function estimation for known, multimodal densities in high dimensions.
Second, we consider adaptive Hamiltonian Monte Carlo in the style of \citet{strathmann2015kernel_hmc}, where the score is used to propose Metropolis-Hastings moves; these will be accepted more often as the quality of the learned score improves.
Our approach is more accurate, faster, and easier to tune than the autoencoder score estimate.
Moreover, our method performs as well as the full kernel exponential family solution at a much lower computational cost, and on par with previous heuristic approximations.

\section{UNNORMALIZED DENSITY AND SCORE ESTIMATION}\label{sec:backgroundExpFam}

Suppose we are given a set of points $X = \{X_b\}_{b \in [n]} \subset \R^d$
sampled i.i.d.\ from an unknown distribution with density $p_0$.
Our setting is that of \emph{unnormalized density estimation}:
we wish to fit a model $p(\cdot) = p'(\cdot) / Z(p')$ such that $p \approx p_0$ in some sense,
but without concerning ourselves with the partition function $Z(p')$,
which normalizes $p'$ such that $\int p(x) \,\ud x = 1$.
In many powerful classes of probabilistic models,
computing the partition function is intractable,
but several interesting applications do not require it,
including mode finding and sampling via Markov Chain Monte Carlo (MCMC).
This setting is closely related to that of \emph{energy-based learning} \citep{lecun:energy}.

Exponential family models with infinite-dimensional natural parameters are a particular
case for which the partition function is problematic.
Here fitting by maximum likelihood
is difficult, and becomes completely impractical in high dimensions \citep{Barron-91,GuQiu93,kenji:sieves}.

\citet{hyvarinen2005estimation} proposed an elegant approach to estimate an {\em unnormalized} density,  by minimizing the Fisher divergence,
the expected squared distance between \emph{score functions}\footnote{Here we use \emph{score} in the sense of \citet{hyvarinen2005estimation}; in traditional statistical parlance, this is the score with respect to a hypothetical location parameter of the model.} $\nabla_x \log p(x)$.
The divergence $J(p_0 \| p)$ is given by
\[
     \frac{1}{2}\int p_0(x) \norm*{\nabla_x \log p(x) - \nabla_x \log p_0(x) }_2^2 \ud x
\label{eq:fisher_score}
,\]
which under some mild regularity conditions is equal to a constant (depending only on $p_0$) plus
\[
     \int p_0(x) \sum_{i=1}^d \left[
        \partial_i^2 \log p(x)
      + \frac12 \left( \partial_i \log p(x) \right)^2
    \right]
    \ud x
\label{eq:fisher-score-parts}
.\]
We use $\partial_i f(x)$ to mean $\frac{\partial}{\partial x_i} f(x)$.
Crucially, \cref{eq:fisher-score-parts} is independent of the normalizer $Z$
and, other than the constant, depends on $p_0$ only through an expectation,
so it can be estimated by a simple Monte Carlo average.

The score function is in itself a quantity of interest, and is employed directly in several algorithms. Perhaps
best known is Hamiltonian Monte Carlo \parencite[HMC; e.g.][]{neal2011mcmc}, where the score  is used in constructing
Hamiltonian dynamics that yield fast mixing chains. Thus, if the score can be learned
from the chain history, it can be used in constructing an approximate HMC sampler with mixing properties
close to those attainable using the population score \citep{strathmann2015kernel_hmc}.  Another application
area is in constructing control functionals for Monte Carlo
integration \citep{oates:control-functionals}: again, learned score functions could be used where
closed-form expressions do not exist.

Computing unnormalized densities from a nonparametrically learned score function can be a more challenging task.
Numerical integration of the score estimate can lead to accumulating errors;
moreover, as discussed by \citet[Section~3.6]{alain2014regularized},
a given score estimate might not correspond to a valid gradient function, or might not yield a normalizable density.
The exponential family model does not suffer these drawbacks, as we will see next.

\subsection{Kernel exponential families}

We now describe the \emph{kernel exponential family} $\p$ \citep{kernel-expfam,kenji:sieves},
and how to perform unnormalized density estimation within it.
$\p$ is an infinite-dimensional exponential family:%
\begin{align}
\p=\left\{ p_f(x):=\exp\left(f(x) -A(f)\right)q_0(x)  \mid f \in \f \right\}
,
\end{align}
where $\h$ is a reproducing kernel Hilbert space \citep{berlinet},
$\f \subseteq \h$ is the set of functions for which $A(f) = \log \int \exp(f(x)) \, q_0(x) \,\ud x$, the log-partition function, is finite,
and $q_0$ is a base measure with appropriately vanishing tails.
That this is a member of the exponential family becomes apparent when we recall the reproducing property
$f(x) = \langle f, k(x,\cdot) \rangle_{\h}$: the feature map $x \mapsto k(x,\cdot)$ is the sufficient
statistic, and $f$ is the natural parameter.

Example 1 of \citet{infdim} shows how various standard finite-dimensional members of the exponential family,
including Gamma, Poisson, Binomial and so on,
fit into this framework with particular kernel functions.
When $\h$ is infinite-dimensional, $\p$ can be very rich:
for instance, when the kernel on $\R^d$ is a continuous function vanishing at infinity and integrally strictly positive definite,
then $\p$ is dense in the family of continuous densities vanishing at infinity for which $\| p/q_0 \|_{\infty}$
is bounded, with respect to the KL, TV, and Hellinger divergences \citep[Corollary 2]{infdim}.

As discussed earlier, maximum likelihood estimation is difficult due to the intractability of $A(f)$.
Instead, \citet{infdim} propose to use a score-matching approach to find an $f$ such that $p_f$ approximates $p_0$.
Their empirical estimator of \cref{eq:fisher-score-parts} is
\[
\label{eq:empirical_fisher_score}
  \hat J(f)
  = \frac{1}{n} \sum_{b=1}^n \sum_{i=1}^d
      \partial_i^2 f(X_b)
    + \frac12\left( \partial_i f(X_b) \right)^2
;
\]
this additionally drops an additive constant from \cref{eq:fisher-score-parts} that depends on $p_0$ and $q_0$ but not $f$.
The regularized loss $\hat J(f) + \frac12 \lambda \norm{f}_\h^2$ is minimized over $f \in \h$ by
\begin{align}
     f_{\lambda,n}
  &= - \frac{\hat\xi}{\lambda} + \sum_{a=1}^n \sum_{i=1}^d \beta_{(a,i)} \partial_i k(X_a, \cdot),
\label{eq:f-sum}
\\
     \hat\xi
  &= \frac1n \sum_{a=1}^n \sum_{i=1}^d \partial_i^2 k(X_a, \cdot) + \partial_i k(X_a, \cdot) \partial_i \log q_0(X_a)
\label{eq:xi-hat}
,
\end{align}
where $\beta_{(a,i)}$ denotes the $(a-1)d + i$th entry of a vector $\beta \in \R^{nd}$.
We use $\partial_i k(x, y)$ to mean $\frac{\partial}{\partial x_i} k(x, y)$,
and $\partial_{i+d} k(x, y)$ for $\frac{\partial}{\partial y_i} k(x, y)$.
To evaluate the estimated unnormalized log-density $f_{\lambda,n}$ at a point $x$,
we take a linear combination of $\partial_i k(X_a, x)$ and $\partial_i^2 k(X_a, x)$ for each sample $X_a$.
The weights $\beta$ in \eqref{eq:f-sum} are obtained by solving the $nd$-dimensional linear system
\begin{align}
(G + n \lambda I) \beta = h / \lambda  \label{eq:linear-system}
,
\end{align}
where $G\in\R^{nd\times nd}$ is the matrix collecting partial derivatives of the kernel at the training points,
$G_{(a, i), (b, j)} = \partial_i \partial_{j+d} k(X_a, X_b)$,
and $h \in \R^{nd}$ evaluates derivatives of $\hat\xi$,
$h_{(b, i)} = \partial_i \hat\xi(X_b)$.

Solving \cref{eq:linear-system} takes $\bigO(n^3 d^3)$ time and $\bigO(n^2 d^2)$ memory,
which quickly becomes infeasible as $n$ grows, especially for large $d$.
We will propose a more scalable approximation in \cref{sec:nys-expfam}.

\subsection{Fast approximate kernel regression}

The system of \cref{eq:linear-system} is related to the problem of kernel ridge regression,
which suffers from similar $\bigO(n^3)$ computational cost.
Thus we will briefly review methods for speeding up kernel regression.

\paragraph{Nystr\"om methods}
We refer here to a class of broadly related  Nystr\"om-type methods  \citep{williams-seeger,sgma,lessismore}.
The representer theorem \citep{representer} guarantees that
the minimizer of the empirical regression loss  for a training set $X = \{X_b\}_{b \in [n]}$ over the RKHS $\h$ with kernel $k$
will lie in the subspace $\h_X = \spn \{ k(X_b, \cdot) \}_{b \in [n]}$.
Nystr\"om methods find an approximate solution by optimizing over a smaller subspace $\h_Y$,
usually given by $\h_Y = \spn \{ k(y, \cdot) \}_{y \in Y}$
for a set of $m$ points $Y \subseteq X$ chosen uniformly at random.
This decreases the computational burden both of training
($\bigO(n^3)$ to $\bigO(n m^2)$ time, $\bigO(n^2)$ to $\bigO(n m)$ memory)
and testing
($\bigO(n)$ to $\bigO(m)$ time and memory).

Guarantees on the performance of Nystr\"om methods have been the topic of considerable study.
Earlier approaches have worked by
first bounding the error in a Nystr\"om approximation of the kernel matrix on the sample \citep{drineas:nys},
and then separately evaluating the impact of regression with an approximate kernel matrix \citep{cortes:kernel-approx}.
This approach, however, results in suboptimal rates;
better rates can be obtained by considering the whole problem at once \citep{elalaoui},
including its direct impact on generalization error \citep{lessismore}.

\paragraph{Random feature approximations}
Another popular method for scaling up kernel methods is to use random Fourier features \citep{rahimi2007random,SutSch15,sriperumbudur2015optimal} and their variants.
Rather than finding the best solution in a subspace of $\h$,
these methods choose a set of parametric features, often independent of the data, such that expected inner products between the features coincide with the kernel.
These methods have some attractive computational properties but generally also require the number of features to increase with the data size in a way that can be difficult to analyze: see \citet{rudi:rff} for such an analysis in regression.

\paragraph{Sketching}
 Another scheme for improving the speed of kernel ridge regression, sketching \citep{yang:sketching,woodruff:sketching} compresses the kernel matrix and the labels by multiplying with a sketching matrix.
These methods have some overlap with Nystr\"om-type approaches, and our method will encompass certain classes of sketches \citep[Appendix C.1]{lessismore}.

\subsection{Prior methods for direct score estimation} \label{sec:dae}

\citet{alain2014regularized} proposed a deep learning-based approach to directly learn a score function from samples.
Denoising autoencoders are networks trained to recover the original inputs from versions with noise added.
A denoising autoencoder  trained with $L_2$ loss and noise $\N(0, \sigma^2 I)$
can be used to construct a score estimator:
$(r_\sigma(x) - x) / \sigma^2 \approx \nabla_x \log p_0(x)$,
where $r_\sigma$ is the autoencoder's reconstruction function.
When the autoencoder has infinite capacity and reaches its global optimum,
\citet{alain2014regularized} show that this estimator is consistent as $\sigma \to 0$.
For realistic autoencoders with finite representation capacity, however, the consistency of this approach remains an open question.
Moreover, this technique has many hyperparameters to choose, both in the architecture of the network and in its trained, with no theory yet available to guide those choices.

\section{NYSTR\"OM METHODS FOR ESTIMATION IN KERNEL EXPONENTIAL FAMILIES} \label{sec:nys-expfam}
To alleviate the computational costs of the linear system in \cref{eq:linear-system}, we apply the Nystr\"om idea to the estimator of the full kernel exponential family model in \cref{eq:f-sum}.
More precisely, we select a set of $m$ ``basis'' points $Y = \{ Y_a \}_{a \in [m]}$,
and restrict the optimization in \cref{eq:f-sum} to
\[
    \h_Y := \spn \left\{ \partial_i k(Y_a, \cdot) \right\}{\substack{i \in [d] \\ a \in [m]}}
,\label{eq:h-Y}
\]
which is a subspace of $\h$ with elements that can be represented using $md$ coefficients, similar to \cref{eq:f-sum}.
Typically $Y \subset X$; in particular, $Y$ is usually chosen as a uniformly random subset of $X$.
We could, however, use any set of points $Y$ different from $X$, or even a different set of spanning vectors than $\partial_i k(Y_a, \cdot)$.

\paragraph{Dimension subsampling}
A further reduction of the computational load can be achieved by only using certain components,  $\mathcal{I}  \subset [n] \times[d]$ with $|\mathcal{I}|\leq md$, of the basis points $Y$.
Thus \eqref{eq:f-sum} is optimized over
\[
 \spn \left\{ \partial_i k(Y_a, \cdot) \mid (a,i) \in \mathcal{I}  \right\}
 .\label{eq:h-Y_dimension_subsample}
\]
In this case, each double sum over all basis points' components $\sum_{a=1}^m \sum_{i=1}^d$, such as in \cref{eq:f-sum}, would be replaced by $\sum_{(a,i)\in\mathcal{I}}$.
Our theoretical framework will support choosing whether or not to include each of the $md$ component according to user-specified probability $\rho$, such that the expected number of components $|\mathcal{I}|$ is $\rho m d$,
or of choosing exactly $\ell \le d$ components from each of $m$ points.
For the sake of notational simplicity, we will give all results in the main body for the case of using all components as in \cref{eq:h-Y}, i.e.\ $\abs{\mathcal{I}} = md$, commenting on implications for subsampling across dimensions where appropriate.
We will explore the practical impact of subsampling in the experiments.

\begin{theorem} \label{thm:nys-form}
The regularized minimizer of the empirical Fisher divergence \cref{eq:empirical_fisher_score}
over $\h_Y$ \cref{eq:h-Y} is
\begin{align}
     f_{\lambda,n}^m
  &= \sum_{a=1}^m \sum_{i=1}^d (\beta_Y)_{(a,i)} \partial_i k(Y_b, \cdot),
\label{eq:nystrom-f-sum}
\\
     \beta_Y
  &= -(\tfrac1n B_{XY}\tp B_{XY} + \lambda G_{YY})^\dagger h_Y
.
\label{eq:nystrom-linear-system}
\end{align}
Here $^\dagger$ denotes the pseudo-inverse,
and $B_{XY}\in \R^{nd \times md}, G_{YY}\in\R^{md \times md}$, $h_Y \in \R^{md}$ are given by
\begin{gather}
(B_{XY})_{(b,i),(a,j)} = \partial_i \partial_{j+d} k(X_b, Y_a) \\
(G_{YY})_{(a,i),(a',j)} = \partial_i \partial_{j+d} k(Y_a, Y_{a'}) \\
\begin{aligned}
(h_Y)_{(a,i)}
    &= \frac1n \sum_{b=1}^n \sum_{j=1}^d \partial_i \partial^2_{j+d} k(Y_a, X_b)
 \\ &\phantom{= } + \partial_i \partial_{j+d} k(Y_a, X_b) \partial_j \log q_0(X_b)
.\end{aligned}
\end{gather}
\end{theorem}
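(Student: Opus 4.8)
The plan is to turn the infinite-dimensional variational problem into a finite-dimensional convex quadratic in the $md$ coefficients parametrizing $\h_Y$, mirroring the full-rank derivation of \citet{infdim} but with the representer subspace replaced by $\h_Y$.

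First I would invoke the derivative reproducing properties implied by the assumed smoothness of $k$: for every $f \in \h$, $\partial_i f(x) = \langle f, \partial_i k(x,\cdot)\rangle_\h$ and $\partial_i^2 f(x) = \langle f, \partial_i^2 k(x,\cdot)\rangle_\h$, with the derivatives on the right acting on the first argument of $k$. Substituting these into the regularized empirical Fisher divergence rewrites it, following \citet{infdim}, as a single quadratic form on $\h$, namely $\hat J(f) + \tfrac12 \lambda \| f \|_\h^2 = \langle f, \hat\xi\rangle_\h + \tfrac12 \langle f, (\hat C + \lambda I) f\rangle_\h$, where $\hat\xi$ is the element of \cref{eq:xi-hat} and $\hat C = \tfrac1n \sum_{b,i} \partial_i k(X_b, \cdot) \otimes \partial_i k(X_b, \cdot)$ is bounded, self-adjoint and positive semidefinite. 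Because the Hessian $\hat C + \lambda I \succeq \lambda I$ is strictly positive, this functional has a unique minimizer over the (finite-dimensional, hence closed) subspace $\h_Y$.

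Next I would introduce the synthesis operator $\Phi_Y \colon \R^{md} \to \h$ defined by $\Phi_Y \gamma = \sum_{a,i} \gamma_{(a,i)} \partial_i k(Y_a, \cdot)$, whose range is $\h_Y$, and substitute $f = \Phi_Y \gamma$. Expanding the inner products with the reproducing properties once more yields the adjoint identities $\Phi_Y^* \Phi_Y = G_{YY}$, $\Phi_Y^* \hat\xi = h_Y$, and $\Phi_Y^* \hat C\, \Phi_Y = \tfrac1n B_{XY}\tp B_{XY}$; matching each kernel-derivative entry to the matrices in the statement uses only the chain rule and the symmetry $k(x,y) = k(y,x)$ to track which argument each $\partial$ hits. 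The objective over $\h_Y$ then becomes $\gamma\tp h_Y + \tfrac12 \gamma\tp \bigl(\tfrac1n B_{XY}\tp B_{XY} + \lambda G_{YY}\bigr) \gamma$, a convex quadratic whose stationarity condition is $\bigl(\tfrac1n B_{XY}\tp B_{XY} + \lambda G_{YY}\bigr) \gamma = -h_Y$; any solution $\gamma$ gives $f_{\lambda,n}^m = \Phi_Y \gamma$, which is \cref{eq:nystrom-f-sum,eq:nystrom-linear-system} as soon as $\beta_Y$ is shown to be a valid solution.

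The one delicate point — the step I expect to need the most care — is that $\Phi_Y$ may fail to be injective (the $\partial_i k(Y_a,\cdot)$ can be linearly dependent, e.g.\ when $md$ is large or under dimension subsampling), so $\tfrac1n B_{XY}\tp B_{XY} + \lambda G_{YY}$ is only positive semidefinite and the pseudo-inverse must be justified. Here I would note that this matrix equals $\Phi_Y^* (\hat C + \lambda I) \Phi_Y$; since $\hat C + \lambda I$ is strictly positive its kernel is exactly $\ker \Phi_Y$, so, being symmetric, its range is $(\ker \Phi_Y)^\perp = \range \Phi_Y^*$, which contains $h_Y = \Phi_Y^* \hat\xi$. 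Hence the normal equations are consistent, $\beta_Y = -\bigl(\tfrac1n B_{XY}\tp B_{XY} + \lambda G_{YY}\bigr)^\dagger h_Y$ is a particular solution, every other solution differs from it by an element of $\ker \Phi_Y$, and $\Phi_Y$ sends all of them to the same $f_{\lambda,n}^m \in \h_Y$, which must be the unique minimizer identified above. Expanding the Gram-type entries, the index bookkeeping, and the corresponding statements under dimension subsampling are then routine.
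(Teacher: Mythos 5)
Your proof is correct, and the key delicate step is handled cleanly. The identity $\tfrac1n B_{XY}\tp B_{XY} + \lambda G_{YY} = \Phi_Y^*(\hat C + \lambda I)\Phi_Y$ is right, the observation that $\ker\bigl(\Phi_Y^*\hat C_\lambda \Phi_Y\bigr) = \ker\Phi_Y$ follows precisely as you say from strict positivity of $\hat C_\lambda$, and $h_Y = \Phi_Y^*\hat\xi \in \range\Phi_Y^* = \bigl(\ker\Phi_Y\bigr)^\perp = \range\bigl(\Phi_Y^*\hat C_\lambda\Phi_Y\bigr)$, so the normal equations are consistent and the pseudo-inverse returns an actual solution; since all solutions differ by $\ker\Phi_Y$, they all synthesize to the same $f_{\lambda,n}^m$, which is the unique minimizer over the closed subspace $\h_Y$ by strict convexity. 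Your route is genuinely different from the paper's. The paper (\cref{thm:nys-forms-h,thm:nys-form-general}) first manipulates the objective by completing the square so that the unconstrained minimizer over all of $\h$ lands in $\h_Y$, differentiates to obtain the \emph{operator} form $f_{\lambda,n}^m = -(P_Y\hat C P_Y + \lambda I)^{-1}P_Y\hat\xi = -g_Y(\hat C_\lambda)\hat\xi$, and only then derives the matrix expression by writing $Z_Y$ in singular-value form and invoking the full-rank factorization rule $A^\dagger = G^\dagger F^\dagger$ for $A = FG$. You instead parametrize $\h_Y$ directly by coefficients and justify the pseudo-inverse by a range argument, sidestepping the SVD and the factorization lemma entirely; this is more elementary and arguably easier to audit. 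What the paper's detour buys is the closed operator form $-g_Y(\hat C_\lambda)\hat\xi$, which is the object the entire convergence analysis in \cref{sec:bound-proof} is built on (compare \cref{thm:decomp}), so the intermediate lemma is not wasted work in the paper's larger architecture — but for proving \cref{thm:nys-form} alone, your argument suffices.
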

The proof, which is similar to the kernel ridge regression analogue \citep{lessismore},
is given in \cref{sec:nys-form}.
In fact, we show a slight generalization (\cref{thm:nys-form-general}),
which also applies to more general subspaces $\h_Y$.

It is worth emphasizing that in order to evaluate an estimate $f_{\lambda,n}^m$,
we  need only evaluate derivatives of the kernel between the basis points $Y$ and the test point $x$.
We no longer need $X$ at all: its full contribution is summarized in $\beta_Y$.
The same is true when subsampling across dimensions, but we need to keep all points with any used components; we will come back to this in the experiments.

When $Y \subseteq X$, the above quantities are simply block-subsampled versions of the terms in the full solution \cref{eq:linear-system}.
When using dimension subsampling with $\abs{\mathcal{I}} < md$, we subsample further within the blocks.
Note, however, that when $Y = X$ we do not exactly recover the solution \cref{eq:linear-system},
because $\hat\xi$ contains components of the form $\partial_i^2 k(X_b, \cdot) \notin \h_Y$ even when $Y=X$.

Computing the $md \times md$ matrix in \cref{eq:nystrom-linear-system} takes $\bigO(nmd^2)$ memory and $\bigO(n m^2 d^3)$ time, both linear in $n$.
Computing the pseudo-inverse
takes $\bigO(m^3 d^3)$ computation, independent of $n$.
Evaluating $f_{\lambda,n}^m$ takes $\bigO(m d)$ time, as opposed to the $\bigO(n d)$ time for $f_{\lambda,n}$.
All matrix computations can be reduced further by not using all $d$ components as in \cref{eq:h-Y_dimension_subsample}, resulting in a $|\mathcal{I}| \times |\mathcal{I}|$ matrix with $|\mathcal{I}|<md$ in \cref{eq:nystrom-linear-system}.

\paragraph{Finite and lite kernel exponential families}
\citet{strathmann2015kernel_hmc} proposed two alternative approximations to the full model of \cref{sec:backgroundExpFam},
used for efficient score learning in adaptive HMC.
Both approaches currently lack convergence guarantees.

The \emph{finite} form uses an $m$-dimensional $\h$, defined e.g.\ by random Fourier features \citep{rahimi2007random},
where \cref{eq:f-sum} can be computed directly in $\h$ in time linear in $n$.
Such parametric features limit the expressiveness of the model:
\citet{strathmann2015kernel_hmc} observed that the score estimate oscillates in regions where little or no data has been observed,
leading to poor HMC behavior when the sampler enters those regions.
We thus do not further pursue this approach in the present work.

The \emph{lite} approximation instead finds the best estimator $f \in \spn\{ k(x, \cdot) \}_{x \in X}$.
This has a similar spirit to Nystr\"om approaches,
but note the differing basis from \cref{eq:f-sum},
which is based on kernel \emph{derivatives},
and that it uses the entirety of $X$,
so the dependence on $n$ is improved only by simple subsampling.
\citet{strathmann2015kernel_hmc} derived an estimator only for Gaussian kernels.
Our generalized version of \cref{thm:nys-form} (\cref{thm:nys-form-general} in the appendix)
covers the basis used by the lite approximation,
allowing us to generalize this method to basis sets $Y \ne X$ and to kernels other than the Gaussian;
\cref{sec:lite-generalization} discusses this in more detail.

\section{THEORY} \label{sec:theory}
We analyze the performance of our estimator in the well-specified case:
assuming that the true density $p_0$ is in $\p$ (and thus corresponds to some $f_0 \in \h$),
we obtain both the parameter convergence of $f_{\lambda,n}^m$ to $f_0$
and the convergence of the corresponding density $p_{f_{\lambda,n}^m}$ to the true density $p_0$.

\begin{theorem} \label{thm:main-big-o}
  Assume the conditions listed in Appendix \ref{sec:assumptions} (similar to those of \citet{infdim} for the well-specified case),
  and use the $\h_Y$ of $\cref{eq:h-Y}$ with
  the basis set $Y$ chosen uniformly at random from the size-$m$ subsets of the training set $X$,
  and all $m d$ components included.
  Let $\beta \ge 0$ be the range-space smoothness parameter of the true density $f_0$,
  and define $b = \min\left(\beta,\, \frac12\right)$,
  $\theta = \frac{1}{2 (b+1)} \in [\frac13, \frac12]$.
  As long as $m = \Omega\left( n^\theta \log n \right)$,
  then with $\lambda = n^{-\theta}$ we obtain
  \begin{gather}
    \norm{f_{\lambda,n}^m - f_0}_\h = \bigO_{p_0}\left( n^{-\frac{b}{2(b+1)}} \right),
    \\
    J(p_0 \| p_{f_{\lambda,n}^m}) = \bigO_{p_0}\left( n^{-\frac{2b+1}{2(b+1)}} \right)
  .\end{gather}
  The first statement implies that
  $p_{f_{\lambda,n}^m}$ also converges to $p_0$
  in $L_r$ ($1 \le r \le \infty$) and Hellinger distances
  at a rate $\bigO_{p_0}\left( n^{-\frac{b}{2(b+1)}} \right)$,
  and that $\mathrm{KL}(p_0 \| p_{f_{\lambda,n}^m}), \mathrm{KL}(p_{f_{\lambda,n}^m} \| p_0)$
  are each $\bigO_{p_0}\left( n^{-\frac{b}{b+1}} \right)$.
\end{theorem}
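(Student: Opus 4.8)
The plan is to recast the estimator in operator form and then combine the bias--variance analysis of \citet{infdim} with a Nyström subspace-preservation argument in the style of \citet{lessismore}. Following \citet{infdim}, introduce the bounded self-adjoint operator $C := \E_{p_0}\sum_{i=1}^d \partial_i k(X,\cdot)\otimes\partial_i k(X,\cdot)$ on $\h$ and the element $\xi\in\h$ whose empirical version is $\hat\xi$ of \cref{eq:xi-hat}; since $p_0 = p_{f_0}$ is well specified we have $Cf_0 = -\xi$, and the population Fisher divergence is the quadratic form $J(p_0\|p_f) = \tfrac12\norm{C^{1/2}(f-f_0)}_\h^2$. Write $\hat C := \tfrac1n\sum_{b,i}\partial_i k(X_b,\cdot)\otimes\partial_i k(X_b,\cdot)$ and let $P$ be the orthogonal projection of $\h$ onto $\h_Y$ of \cref{eq:h-Y}; unpacking \cref{thm:nys-form} shows $f_{\lambda,n}^m = -(P\hat C P + \lambda I)^{-1}P\hat\xi$ (the operator $P\hat C P + \lambda I$ satisfies $P\hat CP+\lambda I \succeq \lambda I$, is invertible on all of $\h$, and maps the image of $P$ to itself, which contains $P\hat\xi$), and $\range P = \range\hat C_Y$ where $\hat C_Y := \tfrac1{md}\sum_{a,i}\partial_i k(Y_a,\cdot)\otimes\partial_i k(Y_a,\cdot)$.

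Next, decompose $f_{\lambda,n}^m - f_0 = (f_{\lambda,n}^m - f_\lambda^P) + (f_\lambda^P - f_0)$ against the noiseless projected reference $f_\lambda^P := -(PCP + \lambda I)^{-1}P\xi$. The bias-plus-Nyström term is handled deterministically: substituting $\xi = -Cf_0$, using the range-space condition $f_0 \in \range(C^\beta)$ from Appendix~\ref{sec:assumptions}, together with the Nyström estimate $\norm{(I-P)(C+\lambda I)^{1/2}}\lesssim\sqrt\lambda$ (established next) and standard resolvent and compression manipulations as in \citet{lessismore}, one gets $\norm{f_\lambda^P - f_0}_\h = \bigO(\lambda^{b})$ and $\norm{C^{1/2}(f_\lambda^P - f_0)}_\h = \bigO(\lambda^{b + 1/2})$ --- the same orders as the regularization bias of the full estimator. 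The stochastic term $f_{\lambda,n}^m - f_\lambda^P$ is expanded into a part linear in $\hat\xi - \xi$ and a part linear in $\hat C - C$, then bounded using Hilbert-space Bernstein inequalities $\norm{\hat\xi - \xi}_\h = \bigO_{p_0}(n^{-1/2})$ and $\norm{\hat C - C}_{\mathrm{op}} = \bigO_{p_0}(n^{-1/2})$ (valid under the moment and boundedness assumptions on the first two kernel derivatives in Appendix~\ref{sec:assumptions}); since $\theta \le \tfrac12$ the latter is $\le \lambda/2$ with high probability, giving the operator equivalence $\hat C + \lambda I \asymp C + \lambda I$ used to control the various weighted operator norms, and the stochastic error comes out $\bigO_{p_0}(\lambda^{-1}n^{-1/2})$ in $\h$-norm and $\bigO_{p_0}(\lambda^{-1/2}n^{-1/2})$ in $C^{1/2}$-norm (using crucially that $f_{\lambda,n}^m - f_\lambda^P$ lies in the image of $P$). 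Choosing $\lambda = n^{-\theta}$, $\theta = \tfrac1{2(b+1)}$, balances $\lambda^{b}$ against $\lambda^{-1}n^{-1/2}$ and yields the two displayed rates.

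The one genuinely new ingredient --- and the step I expect to be the main obstacle --- is the Nyström estimate: with probability at least $1-\delta$ over the random basis set, $\norm{(I-P)(C+\lambda I)^{1/2}}\lesssim\sqrt\lambda$ whenever $m = \Omega\left(\lambda^{-1}\log\tfrac{1}{\lambda\delta}\right) = \Omega(n^\theta\log n)$. I would adapt the corresponding lemma of \citet{lessismore}, which proceeds via an operator Bernstein bound relating $P$ to $\hat C_Y + \lambda I$ and this in turn to $C + \lambda I$; the two new difficulties are that each sampled basis point $Y_a$ contributes $d$ \emph{correlated} rank-one terms $\partial_i k(Y_a,\cdot)\otimes\partial_i k(Y_a,\cdot)$ rather than a single one --- which is exactly why the effective sampling budget is $md$ and not $m$ --- and that $Y$ is drawn uniformly without replacement from the i.i.d.\ sample $X$ rather than directly from $p_0$; handling the within-point correlation and the finite-population sampling inside the operator concentration argument is the crux, and is also where the extension to dimension subsampling enters, replacing $md$ throughout by $\abs{\mathcal I}$ with per-component inclusion probability $\rho$. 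Using the crude uniform bound $\sup_{x,i}\norm{(C+\lambda I)^{-1/2}\partial_i k(x,\cdot)}^2 \le \kappa^2/\lambda$, rather than a sharp effective-dimension estimate, is precisely what produces the requirement $m \gtrsim n^\theta\log n$ and is consistent with the non-sharp effective-dimension dependence already present in \citet{infdim}.

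Finally, the remaining claims follow from the $\h$-norm rate through the deterministic comparison inequalities of \citet{infdim}: the $L_r$ ($1\le r\le\infty$) and Hellinger distances between $p_f$ and $p_0$, as well as $\norm{f - f_0}_\infty$, are each bounded by a constant times $\norm{f - f_0}_\h$, giving the $\bigO_{p_0}(n^{-b/(2(b+1))})$ rate; and $\mathrm{KL}(p_0\|p_f)$ and $\mathrm{KL}(p_f\|p_0)$ are each bounded by a constant times $\norm{f - f_0}_\h^2$ (via a second-order expansion of the log-partition function $A$ about $f_0$, whose Hessian is the covariance of the sufficient statistic and is thus uniformly bounded by $\sup_x k(x,x)$, valid on the high-probability event on which $\norm{f - f_0}_\h$ is small), giving $\bigO_{p_0}(n^{-b/(b+1)})$.
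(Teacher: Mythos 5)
Your outline matches the paper's proof essentially step for step: the same operator formulation $f_{\lambda,n}^m = -(P_Y\hat C P_Y + \lambda I)^{-1}P_Y\hat\xi$, the same decomposition against the population-level projected solution $f_\lambda^m = -(P_Y C P_Y + \lambda I)^{-1}P_Y\xi$, the same split into a deterministic approximation error (controlled by the Nystr\"om projection bound $\norm{C_\lambda^{1/2}(I-P_Y)}\lesssim\sqrt\lambda$ under $m = \Omega(\lambda^{-1}\log\lambda^{-1})$, using the crude $\N_\infty'(\lambda)\le\kappa_1^2/\lambda$) and a stochastic estimation error (Hilbert-space concentration of $\hat\xi-\xi$ and $\hat C - C$ together with $\norm{C^{1/2}\hat C_\lambda^{-1/2}}\lesssim 1$), the same $\lambda = n^{-\theta}$ balancing with $\theta = \frac{1}{2(b+1)}$, and the same passage to $L_r$, Hellinger, and KL via the comparison inequalities of \citet{infdim}. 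You have also correctly pinpointed the genuinely new technical ingredient --- an operator Bernstein bound for the sum of $d$ \emph{correlated} rank-one terms per sampled basis point, generalized to random dimension masking --- which is precisely what the paper develops in its \cref{sec:sumCorrelatedOperators}.
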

The rate of convergence in $J$ exactly matches the rate for the full-data estimator $f_{\lambda,n}$ shown by \citet{infdim} in $J$;
the rates in other divergences essentially match, except that ours saturate slightly sooner as $\beta$ increases.
Thus, for any problem satisfying the assumptions, we can achieve the same statistical properties as the full-data setting with $m = \Omega\left( \sqrt n \log n \right)$,
while in the smoothest problems we need only $m = \bigOmega\left( n^{1/3} \log n \right)$.

This substantial reduction in computational expense is in contrast to the comparable analysis for kernel ridge regression \citep{lessismore},
which for the hardest problems requires $m = \bigOmega(n \log n)$, giving no computational savings at all.
In the best general case, it also needs $m = \bigOmega(n^{1/3} \log n)$.
This rate was  itself a significant advance:
a prior analysis based on stability of the kernel approximation \citep{cortes:kernel-approx}
results in a severe additional penalty when using Nystr\"om, matching the worst-case error rates for
the full solution, yet still requiring $m=\bigOmega(n)$ (i.e., according to the earlier reasoning, we would not be guaranteed to benefit from improved rates in easier problems).

A finite-sample version of \cref{thm:main-big-o}, with explicit constants, is shown in \cref{sec:bound-proof} (and used to prove \cref{thm:main-big-o}).
That version also includes rates for dimension subsampling.

\paragraph{Proof outline}
Each of the losses considered in \cref{thm:main-big-o} can be bounded in terms of
$\norm{f - f_0}_\h$.
We decompose this loss relative to $f_\lambda^m = \argmin_{f \in \h_Y} J(f) + \frac12 \lambda \norm{f}_\h^2$, the best regularized estimator in population with the particular basis $Y$.
That is,
\[
  \norm{f_{\lambda,n}^m - f_0}_\h
  \le \norm{f_{\lambda,n}^m - f_\lambda^m}_\h + \norm{f_\lambda^m - f_0}_\h
\label{eq:decomp-hi}
.\]

The first term on the right-hand side of \cref{eq:decomp-hi} is the \emph{estimation error},
which represents our error due to having a finite number of samples $n$:
this term decreases as $n \to \infty$,
but it will increase as $\lambda \to 0$.
It could conceivably increase as $m \to \infty$ as well,
but we show using concentration inequalities in $\h$ that no matter the $m$,
the estimation error is $\bigO_{p_0}\left( \frac{1}{\lambda \sqrt n} \right)$.

The last term of \cref{eq:decomp-hi} is the \emph{approximation error},
where ``approximation'' refers both to the regularization by $\lambda$
and the restriction to the subspace $\h_Y$.
This term is independent of $n$;
it decreases as $\h_Y$ grows (i.e.\ as $m \to \infty$),
and also with $\lambda \to 0$,
as we allow ourselves to more directly minimize the population risk.
The key to bounding this term is to exploit the nature of the space $\h_Y$.
This can be done by analogy with the treatment of the ``computational error'' term of \citet{lessismore},
where we show that any components of $f_0$ not lying within $\h_Y$
are relatively small in the parts of the space we observe; this is the only step of the proof
that depends on the specific basis $\h_Y$.
Having handled this contribution,
we show that the approximation error term is $\bigO_{p_0}\left( \lambda^b \right)$
as long as $m = \bigOmega\left( \frac1\lambda \log\frac1\lambda \right)$.

The decay of the two terms is then optimized when $\lambda = n^\theta$, with $\theta$ as given in the proof.

The rate in Fisher divergence $J$ is better because that metric
is weighted towards parts of the space where we actually see data,
as opposed to uniformly across $\h$ as in \cref{eq:decomp-hi}.
Our proof technique, similarly to that of \citet{infdim}, allows us to account for this with an improved dependence on $\lambda$ in the evaluation of both estimation and approximation errors.

\paragraph{Remarks}
Our proof uses techniques both from the analysis of the full-data estimator \citep{infdim}
and from an analysis of generalization error for Nystr\"om-subsampled kernel ridge regression \citep{lessismore}.
There are some major differences from the regression case, however.
The decomposition \eqref{eq:decomp-hi} differs from the regression decomposition (\citeauthor{lessismore}'s Appendix E),
as differences in the structure of the problem make the latter inapplicable.
Correlations between dimensions in our setup also make certain concentration results much more difficult:
compare our \cref{sec:sumCorrelatedOperators} to \citeauthor{lessismore}'s Proposition 8.

Approaches like those of \citet{elalaoui,yang:sketching},
which bound the difference in training error of Nystr\"om-type approximations to kernel ridge regression,
are insufficient for our purposes:
we need to ensure that the estimated function $f_{\lambda,n}^m$ converges to $f_0$ everywhere,
so that the full distribution matches, not just its values at the training points.
In doing so, our work is heavily indebted to \citet{cdv}, as are \citet{lessismore,infdim}.

We previously noted that using $Y = X$ does not yield an identical estimator,
$f_{\lambda,n}^n \ne f_{\lambda,n}$.
In fact, we could achieve this by additionally including $\hat\xi$ within \cref{eq:h-Y},
but since evaluating $\hat\xi$ requires touching all the data points
we would lose the test-time improvements achieved by the estimator of \cref{thm:nys-form}.
Alternatively, we could still ``forget'' points, but double the size of the basis, by including $\partial_i^2 k(X_a, \cdot)$.
In practice, $f_{\lambda,n}^n$ performs about as well as $f_{\lambda,n}$,
so neither method seems necessary.
See also \cref{remark:y=x} for more theoretical intution on why this may not be needed.

\section{EXPERIMENTS} \label{sec:experiments}
We now validate our estimator empirically.
We first consider synthetic densities in \cref{sec:toy},
where we know the true densities  and can evaluate convergence of the score estimates analytically with \cref{eq:fisher_score}, including a case with subsampled basis components in \cref{sec:exp:dim_sub_samp}.
In \cref{sec:hmc} we evaluate our estimator in the gradient-free Hamiltonian Monte Carlo setting of \citet{strathmann2015kernel_hmc}, where (in the absence of a ground truth) we compare the efficiency of the resulting sampler.

For all exponential family variants, we take $q_0$ to be a
uniform distribution with support encompassing the samples,
and use a Gaussian kernel $k(x, y) = \exp\left( - \norm{x - y}^2 / \sigma \right)$,
tuning the bandwidth $\sigma$ and regularization parameter $\lambda$ via a validation set.
We compare the following models:
\begin{description}[itemsep=0pt,topsep=0pt,leftmargin=1em]
\item[\full:]
\citet{infdim}'s model, \cref{eq:linear-system,eq:f-sum}.

\item[\lite:]
\citet{strathmann2015kernel_hmc}'s heuristic approximation,
which subsamples the dataset $X$ to size $m$, and uses the basis $\{ k(X_a, \cdot) \}$,
ignoring the remaining datapoints.
We use the regularization from their latest code,
$\lambda (\norm{f}_\h^2 + \norm{\beta}_2^2)$.

\item[\nystrom:]
The estimator of \cref{thm:nys-form}, choosing $m$ distinct data points uniformly at random for $Y$.
For numerical stability, we add $10^{-5} I$ to the matrix being inverted in \cref{eq:nystrom-linear-system}, corresponding to a small $L_2$ regularizer on the weights $\beta$.

\item[\dae:]
  The model of \citet{alain2014regularized}, where
  we train a two-layer denoising autoencoder,
  with $\tanh$ code activations and linear decoding.
  We train with decreasing noise levels
  ($100\sigma$, $10\sigma$, $\sigma$),
  using up to 1000 iterations of BFGS each.
  We tune the number of hidden units and $\sigma$;
  while \citet{alain2014regularized} recommend simply choosing some small $\sigma$,
  this plays a similar role to a bandwidth,
  and its careful choice is essential.
  We differentiate the score estimate to obtain the second derivative needed in \cref{eq:fisher-score-parts}.
\end{description}
See
\httpsurl{github.com/karlnapf/nystrom-kexpfam}
for code for the models and to reproduce the experiments.

\subsection{Score convergence on synthetic densities} \label{sec:toy}

  We first consider two synthetic densities, where the true score is available.
  The \ring{} dataset takes inspiration from the ``spiral'' dataset of \citet[Figure 5]{alain2014regularized},
  being a similarly-shaped distribution but possessing a probability density for evaluation purposes.
  We sample points uniformly along three circles with radii $(1,3,5)$ in $\R^2$ and add $\mathcal{N}(0, 0.1^2)$ noise in the radial direction.
  We then add extra dimensions consisting of independent Gaussian noise with standard deviation $0.1$.
  The \grid{} dataset is a more challenging variant of
   the 2-component mixture example of \citet[Figure 1]{infdim}.
  We fix $d$ random vertices of a $d$-dimensional hypercube;
  the target is a mixture of normal distributions, one at each vertex.

  For each run, we generate $n=500$ training points and estimate the score on $1500$ (\grid) or $5000$ (\ring) newly generated test points.
  We estimate the true score \cref{eq:fisher_score} on these test points to ensure a ``best case'' comparison of the models, though using \cref{eq:fisher-score-parts} leads to indistinguishable parameter selections and performance.
  For \lite{} and \nystrom{}, we independently evaluated the parameters for each subsampling level.
  We report performances for the best parameters found for each method.
  All experiments were conducted in a single CPU thread for timing comparisons, although multi-core parallelization is straightforward for each model.

  \cref{fig:convergence} shows convergence of the score as the dimension increases.
  On both the \ring{} and \grid{} datasets, \nystrom{} performs very close to the full solution,
  while showing  large computational savings.
  With a reasonable drop in score at $m=42$, we achieve a major reduction in cost and storage over the original $n=500$ sample size.
  The  \lite{} performance is similar to that of \nystrom{} at comparable levels of data retention.
  As expected, the performance of \nystrom{} gets closer to that of \full{} as $m$ increases towards $n$.
  The autoencoder  performs consistently worse than any of the kernel models, on both datasets.
  Autoencoder results are also strongly clustered, with only small performance improvements as the number of hidden units increases.
  As the \grid{} data reaches $20$ dimensions, all solutions start to converge to a similar score.
  None of the methods are able to learn the structure for this number of training points and dimensions; all solutions effectively revert to smooth, uninformative estimates.

  The \lite{} solution is fastest, followed by \nystrom{} for low to moderate $m$, with significant savings over the full solution even at $m=167$ on \grid{}, and across all $m$ on \ring{}.
  The additional cost of \nystrom{} over \lite{} arises since it computes all derivatives at the retained samples.
  Autoencoder runtimes are longer than the other methods, although we point out that the settings of \citet{alain2014regularized} are not optimized for runtime.
  We observed, however,
  that replacing BFGS with stochastic gradient descent
  or avoiding the decreasing noise schedule
  both lead to instabilities in the solution.

  \begin{figure}[ht]
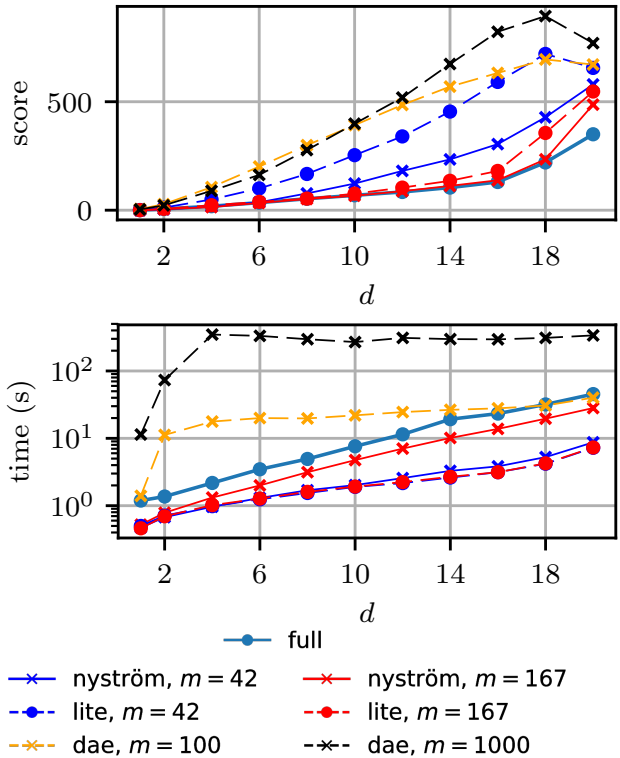
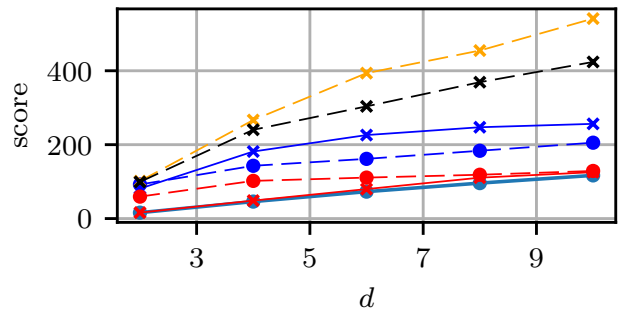

  \centering
  \begin{subfigure}[b]{\columnwidth}
      \includegraphics[width=\textwidth]{figures/comparison_score_GaussianGridWrapped}
      \includegraphics[width=\textwidth]{figures/comparison_runtime_GaussianGridWrapped}
      \includegraphics[width=.9\columnwidth]{figures/comparison_GaussianGridWrapped_legend}
      \caption{Scores and runtimes on the \grid{} dataset.}
  \end{subfigure}
  \begin{subfigure}[b]{\columnwidth}
      \includegraphics[width=\columnwidth]{figures/comparison_score_Ring}
      \caption{Scores on the \ring{} dataset; same labels as for \grid{}.}
  \end{subfigure}
  \caption{Convergence and timing on synthetic data.}
  \label{fig:convergence}
  \end{figure}

  \subsubsection{Dimension subsampling}
  \label{sec:exp:dim_sub_samp}
  To quantify the effect of subsampling components of the Nystr\"om basis in \cref{eq:h-Y_dimension_subsample}, we repeat the previous \grid{} experiment with another version of our estimator:
  \nystromd{} has the same number $md$ of basis functions as \nystrom{}, but rather than using all $d$ components of $m$ uniformly chosen training points, we uniformly choose $md$ of {\em all} available components. That is, we pick $Y=X$ in \cref{eq:h-Y_dimension_subsample} and $\mathcal{I}\subset [n] \times [d], |\mathcal{I}|=md$.
  This equalizes the cost of \cref{eq:nystrom-linear-system} for  \nystrom{} and \nystromd{}.

  \Cref{fig:dim_sub_samp} shows that distributing the used components across all training data helps slightly when $m$ is small.
  Yet this benefit comes at a cost: as mentioned in \cref{sec:nys-expfam}, \nystrom{} can discard training data not used in the basis after fitting.
  For \nystromd{}, however, we can only discard training data if \emph{no} components were chosen,
  so we must retain many more points.

  \begin{figure}
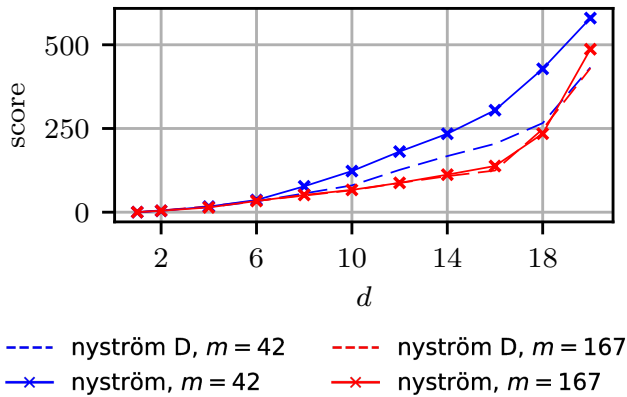

  \centering
  \includegraphics[width=\columnwidth]{figures/comparison_grid_all_comps_vs_bernoulli_comps}
  \\[2ex]
  \includegraphics[width=\columnwidth]{figures/comparison_grid_all_comps_vs_bernoulli_comps_legend}
  \caption{Dimension sub-sampling for \grid{}.}
  \label{fig:dim_sub_samp}
  \end{figure}

  \subsection{Gradient-free Hamiltonian Monte Carlo} \label{sec:hmc}

  Our final experiment follows methodology and code by \citet{sejdinovic_kernel_2014,strathmann2015kernel_hmc} in constructing a gradient-free HMC sampler using score estimates learned on the previous MCMC samples.
  Our goal is to efficiently sample from the marginal posterior over hyperparameters of a Gaussian process (GP) classifier on the UCI Glass dataset \citep{uci}.
  Closed-form expressions for the score (and therefore HMC itself) are unavailable, due to the intractability of the marginal data likelihood given the hyperparameters.
  But one can construct a Pseudo-Marginal MCMC method using an Expectation Propagation approximation to the GP posterior and importance sampling \citep{FilipponeIEEETPAMI13}.
  We compare all score estimators' ability to generate an HMC-like proposal as in \citet{strathmann2015kernel_hmc}.
  An accurate score estimate would give proposals close to an HMC move, which would have high acceptance probability.
  Thus higher acceptance rates indicate better score estimates.

  Our experiment assumes the idealized scenario where a burn-in is successfully completed.
  We run 40 random walk adaptive-Metropolis MCMC samplers for $30\,000$ iterations, discard the first $10\,000$ samples, and thin by a factor of $400$. Merging these samples results in $2\,000$ posterior samples.
  We fit all score estimators on a random subset of $n=500$ of these samples,
  and use the remaining  1500 samples to tune the model hyperparameters.
  The validation  surface obtained for \nystrom{} by the estimated score objective on the held-out set
   is shown in \cref{fig:hmc_glass}: it is smooth and easily optimized. For \dae{} (not shown here), a well-tuned level of corruption noise is essential.
  Starting from a random point of the initial posterior sketch, we construct trajectories along the surrogate Hamiltonian using $100$ steps of size $0.1$, and a standard Gaussian momentum.
  We compute the hypothetical acceptance probability for each step, and average over the trajectory.

  \Cref{fig:hmc_glass} shows the results averaged over $200$ repetitions.
  As before, \nystrom{} matches the performance of \full{} for $m = n=500$, while for $m=100$ it attains a high acceptance rate at a considerably reduced computational cost.
  It also reliably outperforms \lite{} for lower $m$,
  which might occur since \lite{} sub-samples the data while \nystrom{} only sub-samples the basis.
  \dae{} does relatively poorly, despite a large grid-search for its hyperparameters.
  For any of the models, untuned hyperparameters yield an acceptance rate close to zero.

  \begin{figure}
  \begin{subfigure}[b]{\columnwidth}
      \centering
      \includegraphics[width=.8\columnwidth]{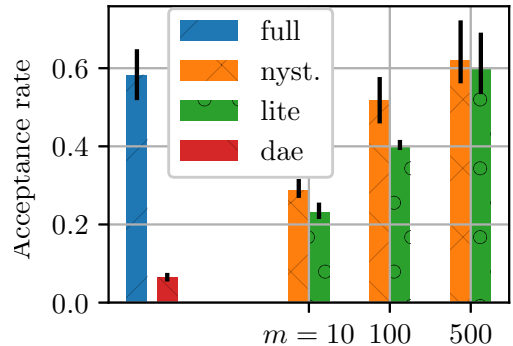}
      \caption{HMC acceptance rates, with 90\% quantiles.}
  \end{subfigure}

  \begin{subfigure}[b]{\columnwidth}
      \centering
      \includegraphics[width=.7\columnwidth,trim={0 0 0 4mm},clip]{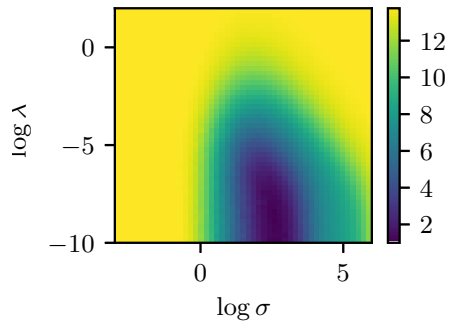}
      \caption{Log scores for various hyperparameters, for \nystrom{} with $m = 42$.}
  \end{subfigure}
  \caption{Results for GP hyperparameter optimization on the UCI Glass dataset.}
  \label{fig:hmc_glass}
  \end{figure}

  \section{CONCLUSION}
  We proposed a Nystr\"om approximation for score matching in kernel exponential families.
  \Cref{thm:main-big-o} establishes that the proposed algorithm can achieve the same or nearly the same bound on convergence as the full algorithm, with $m \ll n$.
  We also demonstrated the efficacy of the approach on challenging synthetic datasets and on an approximate HMC problem for optimizing GP hyperparameters.
  These cost reductions help make estimation in this rich family of distributions practical.

  \clearpage
  \subsubsection*{Acknowledgements}
  The authors would like to thank Mladen Kolar for productive discussions.

  \printbibliography

  \end{refsection}

\onecolumn
\begin{refsection}

\begin{center}
\ifarxiv
{\Large \textbf{Appendices}}
\else
{\LARGE \textbf{\themaintitle:\\Supplementary material}}

\fancyfoot[C]{\thepage}
\fi
\end{center}
\begin{appendices}
\crefalias{section}{appsec}
\crefalias{subsection}{appsec}
\crefalias{subsubsection}{appsec}

We now
prove \cref{thm:nys-form,thm:main-big-o}, as well as providing a finite-sample bound with explicit constants (\cref{thm:finite-bound}).

In \cref{sec:prelims}, we begin with a review of necessary notation and definitions of all necessary objects, as well as an overview of relevant theory for the full kernel exponential family estimator by \citet{infdim}.
In \cref{sec:nys-form}, we establish a representer theorem for our Nyström estimator and prove \cref{thm:nys-form}.
We address consistency and convergence in \cref{sec:bound-proof}, by first decomposing and bounding the error in \cref{sec:decomp}, then developing probabilistic inequalities in \cref{sec:prob}, and finally collecting everything into a final bound to prove \cref{thm:main-big-o} in \cref{sec:final-bound}.
\Cref{sec:aux} establishes auxiliary results used in the proofs, including tools for dimension subsampling, and in particular a concentration inequality for sums of correlated random operators in \cref{sec:sumCorrelatedOperators}.

\section{Preliminaries} \label{sec:prelims}
We will first establish some definitions that will be useful throughout,
as well as overviewing some relevant results from \citet{infdim}.

\subsection{Notation}
Our notation is mostly standard:
$\h$ is a reproducing kernel Hilbert space of functions $\Omega \subseteq \R^d \to \R$
with inner product $\langle \cdot, \cdot \rangle_\h$ and norm $\norm{\cdot}_\h$,
with a kernel $k : \Omega \times \Omega \to \R$ given by the reproducing property,
$k(x, y) = \langle k(x, \cdot), k(y, \cdot) \rangle_\h$.
The reproducing property for kernel derivatives \citep[Lemma 4.34]{steinwart} will also be important:
$\langle \partial_i k(x, \cdot), f \rangle_\h = \partial_i f(x)$
as long as $k$ is differentiable; the same holds for higher-order derivatives.

We use $\norm{\cdot}$ to denote the operator norm $\norm{A} = \sup_{f : \norm{f}_\h \le 1} \abs{\langle f, A f \rangle_\h}$,
and $A^*$ for the adjoint of an operator $A : \h_1 \to \h_2$, $\langle A f, g \rangle_{\h_2} = \langle f, A^* g \rangle_{\h_1}$.
$\lambda_{\max}(A)$ denotes the algebraically largest eigenvalue of $A$.
For elements $f \in \h_1$, $g \in \h_2$ we define $f \otimes g$ to be the tensor product,
viewed as an operator from $\h_2$ to $\h_1$ with $(f \otimes g) h = f \langle g, h \rangle_{\h_2}$;
note that $(f \otimes g)^* = g \otimes f$
and that $A (f \otimes g) B = (A f) \otimes (B^* g)$.

$C^1(\Omega)$ denotes the space of continuously differentiable functions on $\Omega$,
and $L^r(\Omega)$ the space of $r$-power Lebesgue-integrable functions.

As in the main text, $x_{(a,i)}$ will denote $x_{(a-1)d + i}$.

\subsection{Operator definitions}
The following objects will be useful in our study:
$C$, $\xi$, and their estimators were defined by \citet{infdim}.
$C$ is similar to the standard covariance operator in similar analyses \citep{cdv,lessismore}.

\begin{definition}
Suppose we have a sample set $X = \{ X_a \}_{a \in [n]} \subset \R^d$.
For any $\lambda > 0$, define the following:
\begin{align}
     C
  &= \E_{x \sim p_0}\left[ \sum_{i=1}^d \partial_i k(x, \cdot) \otimes \partial_i k(x, \cdot) \right]
     : \h \to \h
;\qquad C_\lambda = C + \lambda I
\label{eq:c}
\\   \xi
  &= - C f_0 = \E_{x \sim p_0}\left[ \sum_{i=1}^d \partial_i k(x, \cdot) \partial_i \log q_0(x) + \partial_i^2 k(x, \cdot) \right]
     \in \h
\label{eq:xi}
\\    Z_X
  &= \sum_{b=1}^n \sum_{i=1}^d e_{(b,i)} \otimes \partial_i k(X_b, \cdot)
     : \h \to \R^{nd}
\label{eq:z-x}
;\end{align}
    here $e_{(b,i)} \in \R^{nd}$ has component $(b-1)d + i$ equal to 1 and all others 0.

Define estimators of \cref{eq:c,eq:xi} by
\begin{align}
     \hat C
  &= \frac1n Z_X^* Z_X
   = \frac1n \sum_{a=1}^n \sum_{i=1}^d \partial_i k(X_a, \cdot) \otimes \partial_i k(X_a, \cdot)
   : \h \to \h
\label{eq:c-hat}
;\qquad
     \hat C_\lambda
   = \hat C + \lambda I
\\   \hat\xi
  &= \frac1n \sum_{a=1}^n \sum_{i=1}^d \partial_i k(X_a, \cdot) \partial_i \log q_0(X_a) + \partial_i^2 k(X_a, \cdot)
   \in \h
\label{eq:xi-hat-app}
.\end{align}

Further define:
\begin{align}
\N_\infty(\lambda)
&:= \sup_{x \in \Omega} \sum_{i=1}^d \norm*{C_\lambda^{-\frac12} \partial_i k(x, \cdot)}_\h^2
\\
\N'_\infty(\lambda)
&:= \sup_{\substack{x \in \Omega \\ i \in [d]}} \norm*{C_\lambda^{-\frac12} \partial_i k(x, \cdot)}_\h^2
.\end{align}

\end{definition}

Here, $Z_X$ evaluates derivatives of its input at the points of $X$, $(Z_X f)_{(b,i)} = \partial_i f(X_b)$,
whereas $Z_X^*$ constructs linear combinations: for $\alpha \in \R^{nd}$,
$Z_X^* \alpha = \sum_{b=1}^n \sum_{i=1}^d \alpha_{(b,i)} \partial_i k(X_b, \cdot)$.

\subsection{Assumptions}\label{sec:assumptions}
We will need the following assumptions on $p_0$, $q_0$, and $\h$:
\begin{assumplist}
  \item \label{assump:well-spec}
    (Well-specified)
    The true density is $p_0 = p_{f_0} \in \p$, for some $f_0 \in \f$.
  \item \label{assump:aapo-1}
    $\supp p_0 = \Omega$ is a non-empty open subset of $\R^d$,
    with a piecewise smooth boundary $\partial\Omega := \bar{\Omega} \setminus \Omega$,
    where $\bar\Omega$ denotes the closure of $\Omega$.
  \item \label{assump:aapo-2}
    $p_0$ is continuously extensible to $\bar\Omega$.
    $k$ is twice continuously differentiable on $\Omega \times \Omega$,
    with $\partial^{\alpha,\alpha} k$ continuously extensible to $\bar\Omega \times \bar\Omega$ for $\abs{\alpha} \le 2$.
  \item \label{assump:aapo-3}
    $\partial_i \partial_{i+d} k(x, x') \rvert_{x' = x} p_0(x) = 0$ for $x \in \partial\Omega$,
    and for all sequences of $x \in \Omega$ with $\norm{x}_2 \to \infty$
    we have have
    $\left. p_0(x) \sqrt{\partial_i \partial_{i+d} k(x, x')} \right\rvert_{x' = x} = \littleO\left( \norm{x}^{1-d} \right)$
    for each $i \in [d]$.
  \item (Integrability) \label{assump:1-integrable}
    For all $i \in [d]$, each of
    \[
      \left. \partial_i \partial_{i+d} k(x, x') \right\rvert_{x' = x},
      \left. \sqrt{\partial_i^2 \partial_{i+d}^2 k(x, x')} \right\rvert_{x' = x},
      \left. \partial_i \log q_0(x) \sqrt{\partial_i^2 \partial_{i+d}^2 k(x, x')} \right\rvert_{x' = x}
      \label{eq:1-integrable}
    \]
      are in $L^1(\Omega, p_0)$. Moreover, $q_0 \in C^1(\Omega)$.
  \item (Range space) \label{assump:range-space}
    $f_0 \in \range(C^\beta)$ for some $\beta \ge 0$,
    and $\norm*{C^{-\beta} f_0}_\h < R$ for some $R < \infty$.
    The operator $C$ is defined by \cref{eq:c}.
  \item (Bounded derivatives) \label{assump:bounded}
    $\supp(q_0) = \h$,
    and the following quantities are finite:
    \begin{gather}
       \kappa_1^2 := \sup_{\substack{x \in \Omega \\ i \in [d]}}
          \left. \partial_i \partial_{i+d} k(x, x')  \right\rvert_{x' = x}
,\ \kappa_2^2 := \sup_{\substack{x \in \Omega \\ i \in [d]}}
          \left. \partial_i^2 \partial_{i+d}^2 k(x, x') \right\rvert_{x' = x}
,\ Q := \sup_{\substack{x \in \Omega \\ i \in [d]}}
          \abs*{\partial_i \log q_0(x)}
    .\end{gather}
  \item (Bounded kernel) \label{assump:bounded-kernel}
    $\kappa^2 := \sup_{x \in \Omega} k(x, x)$ is finite.
\end{assumplist}
These assumptions, or closely related ones, were all used by \citet{infdim} for various parts of their analysis.
\cref{assump:aapo-1,assump:aapo-2,assump:aapo-3} ensure that the form for $J(p_0 \| p)$ in \cref{eq:fisher-score-parts} is valid.
\cref{assump:1-integrable} implies $J(p_0 \| p_f)$ is finite for any $p_f \in \p$.
\cref{assump:bounded} is used to get probabilistic bounds on the convergence of the estimators, and implies \cref{assump:1-integrable}. Note that $\kappa_2^2 < \infty$ and $Q < \infty$ can be replaced by $L^2(\Omega, p_0)$ integrability assumptions as in \citet{infdim} without affecting the asymptotic rates, but $\kappa_1^2 < \infty$ is used to get Nyström-like rates.
\cref{assump:bounded-kernel} is additionally needed for the convergence in $L^r$, Hellinger, and KL distances.

Note that under \ref{assump:bounded},
$
  \N_\infty(\lambda)
    \le d \N'_\infty(\lambda)
    \le \frac{d \kappa_1^2}{\lambda}
,$
and $\norm C \le d \kappa_1^2$.

\subsection{Full-data result}
This result is essentially Theorem 3 of \citet{infdim}.
\begin{lemma} \label{thm:full-data-form}
Under \cref{assump:well-spec,assump:aapo-1,assump:aapo-2,assump:aapo-3,assump:1-integrable},
\begin{align}
     J(f)
  &= J(p_0 \| p_f)
   = \frac12 \langle f - f_0, C (f - f_0) \rangle_\h
   = \frac12 \langle f, C f \rangle_\h + \langle f, \xi \rangle_\h + J(p_0 \| q_0)
.\end{align}
Thus for $\lambda > 0$,
the unique minimizer of the regularized loss function
$J_\lambda(f) = J(f) + \frac12 \lambda \norm{f}_\h^2$
is
\[
  f_\lambda
  = \argmin_{f \in \h} J_\lambda(f)
  = - C_\lambda^{-1} \xi
  = C_\lambda^{-1} C f_0
.\]

Using the estimators \cref{eq:c-hat,eq:xi-hat-app},
define an empirical estimator of the loss function \cref{eq:empirical_fisher_score}, up to the additive constant $J(p_0 \| q_0)$, as
\[
  \hat J(f) = \frac12 \langle f, \hat C f \rangle_\h + \langle f, \hat\xi \rangle_\h
.\]
There is a unique minimizer of
$\hat J_\lambda(f) = \hat J(f) + \frac12\lambda \norm{f}_\h^2$:
\[
  f_{\lambda,n}^m = \argmin_{f \in \h} \hat J_\lambda(f) = - \hat C_\lambda^{-1} \hat\xi
.\]
\end{lemma}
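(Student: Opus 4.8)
The plan is to first establish the three equivalent expressions for $J(f)$, then read off both minimizers as the unique solutions of strictly convex quadratic problems in $\h$. For the first expression I would apply \cref{eq:fisher-score-parts} to $p = p_f$ and substitute $\log p_f(x) = f(x) - A(f) + \log q_0(x)$, so that $\partial_i^2 \log p_f(x) = \partial_i^2 f(x) + \partial_i^2 \log q_0(x)$ and $(\partial_i \log p_f(x))^2 = (\partial_i f(x))^2 + 2\partial_i f(x)\,\partial_i\log q_0(x) + (\partial_i\log q_0(x))^2$; every term not involving $f$ collapses into a constant $c = c(p_0,q_0)$. Rewriting the remaining terms via the reproducing property for kernel derivatives \citep[Lemma 4.34]{steinwart}, i.e.\ $\partial_i g(x) = \langle \partial_i k(x,\cdot), g\rangle_\h$ (valid by \cref{assump:aapo-2}), and interchanging $\E_{p_0}$ with the inner product and with differentiation — a Bochner-integral argument licensed by \cref{assump:1-integrable}, which makes $\E_{p_0}[\sum_i \partial_i\partial_{i+d}k(x,x)]$ and $\E_{p_0}[\sum_i \sqrt{\partial_i^2\partial_{i+d}^2 k(x,x)}]$ finite (hence $C$ bounded, indeed trace-class, and $\xi \in \h$) — yields $J(p_0\|p_f) = \tfrac12\langle f, C f\rangle_\h + \langle f, \xi\rangle_\h + c$ with $C$, $\xi$ as in \cref{eq:c,eq:xi}.

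Next I would use well-specification to identify $\xi$. From $p_0 = p_{f_0}$ we get $\partial_i f_0 = \partial_i\log p_0 - \partial_i\log q_0$, hence $C f_0 = \E_{p_0}[\sum_i \partial_i k(x,\cdot)\,\partial_i f_0(x)]$; integrating by parts in the $\partial_i\log p_0$ term, with boundary and tail contributions vanishing by \cref{assump:aapo-1,assump:aapo-2,assump:aapo-3} (precisely the manipulation behind the equivalence of \cref{eq:fisher_score} and \cref{eq:fisher-score-parts}), gives $C f_0 = -\xi$. Substituting $\xi = -C f_0$ and completing the square rewrites $\tfrac12\langle f, C f\rangle_\h + \langle f,\xi\rangle_\h + c$ as $\tfrac12\langle f - f_0, C(f-f_0)\rangle_\h + \bigl(c - \tfrac12\langle f_0, C f_0\rangle_\h\bigr)$. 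Evaluating at $f = f_0$ forces the bracketed constant to be $0$, since $J(p_0\|p_0) = 0$, so $c = \tfrac12\langle f_0, C f_0\rangle_\h$; evaluating at $f = 0$, where $p_f = q_0$ up to normalization, identifies $c = J(p_0\|q_0)$. Together these give all three displayed forms.

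For the minimizers: $C_\lambda = C + \lambda I \succeq \lambda I \succ 0$ is boundedly invertible, so $J_\lambda(f) = \tfrac12\langle f, C_\lambda f\rangle_\h + \langle f,\xi\rangle_\h + c$ is a strictly convex quadratic with Fréchet derivative $C_\lambda f + \xi$; setting it to zero gives the unique minimizer $f_\lambda = -C_\lambda^{-1}\xi = C_\lambda^{-1} C f_0$. The empirical case is structurally identical: $\hat C = \tfrac1n Z_X^* Z_X$ is finite-rank and positive, the same reproducing-property computation with the empirical average in place of $\E_{p_0}$ (and with no integration by parts needed) shows $\hat J(f) = \tfrac12\langle f, \hat C f\rangle_\h + \langle f,\hat\xi\rangle_\h$ equals the empirical objective \cref{eq:empirical_fisher_score} up to the additive constant $J(p_0\|q_0)$, and strict convexity of $\hat J_\lambda$ yields the unique minimizer $-\hat C_\lambda^{-1}\hat\xi$; rearranging $\hat C_\lambda(-\hat C_\lambda^{-1}\hat\xi) = -\hat\xi$ recovers the explicit representation \cref{eq:f-sum} together with the linear system \cref{eq:linear-system}.

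The substantive work lies entirely in the first two steps: justifying that the $\h$-valued integrals defining $C$ and $\xi$ converge and commute with $\langle\cdot,\cdot\rangle_\h$ and with $\partial_i$, and carrying out the integration by parts that produces $\xi = -C f_0$ with no surviving boundary terms — exactly the places where \cref{assump:aapo-1,assump:aapo-2,assump:aapo-3,assump:1-integrable} are used. Everything downstream is routine convex optimization in a Hilbert space, and since this statement is essentially Theorem 3 of \citet{infdim} I would lean on their verification of these analytic details.
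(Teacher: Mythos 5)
Your proposal is correct and takes the natural route: expand \cref{eq:fisher-score-parts} for $\log p_f = f - A(f) + \log q_0$, pass to the quadratic form $\tfrac12\langle f, Cf\rangle_\h + \langle f,\xi\rangle_\h$ via the derivative reproducing property, use integration by parts and well-specification to get $\xi = -Cf_0$ and hence complete the square, and then read off the unique minimizers of the strictly convex (population and empirical) quadratics. The paper itself does not prove this lemma — it simply defers to Theorem 3 of \citet{infdim} with the remark that it is ``essentially'' that result — so there is no in-text proof to compare against, but your reconstruction is the argument that reference gives, and you correctly flag that the analytic burden (Bochner-integrability of $C$ and $\xi$ from \cref{assump:1-integrable} and vanishing boundary terms from \cref{assump:aapo-1,assump:aapo-2,assump:aapo-3}) is what the assumptions are for.
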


$f_{\lambda,n}^m$ can be computed according to Theorem 4 of \citet{infdim},
using \cref{eq:f-sum,eq:linear-system}.

\subsection{Subsampling}

In our Nyström projections,
we will consider a more general $\h_Y$ than \cref{eq:h-Y},
allowing any finite-dimensional subspace of $\h$.

\begin{definition}[Subsampling operators]
Let $Y = \{ y_a \}_{a \in [m]} \subset \h$ be some basis set,
and let its span be $\h_Y = \spn(Y)$;
note that \cref{eq:h-Y} uses $y_{(a,i)} = \partial_i k(Y_a, \cdot)$.
Then define
\[
      Z_Y
   = \sum_{a=1}^m e_{a} \otimes y_a
     : \h \to \R^{m}
\label{eq:z-y}
;\]
let $Z_Y$ have singular value decomposition $Z_Y = U \Sigma V^*$,
where $\Sigma \in \R^{t \times t}$ for some $t \le M$.
Note that $V V^* = P_Y$ is the orthogonal projection operator onto $\h_Y$,
while $V^* V$ is the identity on $\R^t$.

For an operator $A : \h \to \h$, let
\[
  g_Y(A) = V (V^* A V)^{-1} V^*
  \label{eq:g-y}
.\]
\end{definition}

The projected inverse function $g_Y$, defined by \citet{lessismore}, will be crucial in our study,
and so we first establish some useful properties of it.
\begin{lemma}[Properties of $g_Y$]\label{thm:gy-props}
Let $A : \h \to \h$ be a positive operator,
and define $A_\lambda = A + \lambda I$ for any $\lambda > 0$.
The operator $g_Y$ of \cref{eq:g-y} satisfies the following:
\begin{enumerate}[label=(\roman*)]
  \item $g_Y(A) P_Y = g_Y(A)$,
  \item $P_Y g_Y(A) = g_Y(A)$,
  \item $g_Y(A_\lambda) A_\lambda P_Y = P_Y$, \label{part:gy-invish}
  \item $g_Y(A_\lambda) = (P_Y A P_Y + \lambda I)^{-1} P_Y$, and  \label{part:gy-p}
  \item $\norm{ A_\lambda^\frac12 g_Y(A_\lambda) A_\lambda^\frac12 } \le 1$. \label{part:gy-norm}
\end{enumerate}
\end{lemma}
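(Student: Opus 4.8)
The plan is to read off everything from the singular value decomposition $Z_Y = U\Sigma V^*$. The two facts that carry the argument are $V^* V = I_t$ (the columns of $V$ are orthonormal in $\h$) and $V V^* = P_Y$, the orthogonal projection onto $\h_Y = \range V$. Since $A$ is positive, $V^* A_\lambda V = V^* A V + \lambda I_t \succeq \lambda I_t$ is a positive-definite $t \times t$ matrix and hence invertible, so $g_Y(A_\lambda) = V(V^* A_\lambda V)^{-1} V^*$ is well defined (and likewise $g_Y(A)$ whenever $V^* A V$ is invertible; if not, replacing the inverse by the pseudo-inverse leaves the identities below unchanged, again because $V^* V = I_t$).

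Parts (i)--(iii) are then one-line computations. For (i), $g_Y(A) P_Y = V(V^* A V)^{-1} V^* \cdot V V^* = V(V^* A V)^{-1} V^* = g_Y(A)$, using $V^* V = I_t$; (ii) is symmetric. For (iii), substitute $P_Y = V V^*$ and cancel the bracket: $g_Y(A_\lambda) A_\lambda P_Y = V (V^* A_\lambda V)^{-1} (V^* A_\lambda V) V^* = V V^* = P_Y$.

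For (iv), write $B = V^* A V$, so that $P_Y A P_Y = V B V^*$ and $g_Y(A_\lambda) = V(B + \lambda I_t)^{-1} V^*$. A short computation gives $(P_Y A P_Y + \lambda I)\, g_Y(A_\lambda) = V B (B+\lambda I_t)^{-1} V^* + \lambda V (B+\lambda I_t)^{-1} V^* = V(B+\lambda I_t)(B+\lambda I_t)^{-1} V^* = P_Y$. Since $P_Y A P_Y$ is positive, $P_Y A P_Y + \lambda I$ is invertible on all of $\h$; left-multiplying by its inverse yields (iv). For (v), set $W = A_\lambda^{1/2} V : \R^t \to \h$; then $W^* W = V^* A_\lambda V$ is invertible, and $A_\lambda^{1/2} g_Y(A_\lambda) A_\lambda^{1/2} = W(W^* W)^{-1} W^*$, which is self-adjoint and idempotent, i.e.\ the orthogonal projection onto $\range W$, so its operator norm is at most $1$. (Equivalently, $g_Y(A_\lambda) \preceq A_\lambda^{-1}$ by the variational identity $\langle f, M^{-1} f\rangle = \sup_g \left[ 2\langle f, g\rangle - \langle g, M g\rangle \right]$ with the supremum restricted from $\h$ to $\h_Y$.)

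I do not expect a real obstacle: once the SVD bookkeeping is in place, each part is elementary linear algebra. The only thing to handle with a little care is that $\h$ may be infinite-dimensional while $Z_Y$ has finite rank $t$, so one should confirm that $V$ is a genuine partial isometry with finite-dimensional domain $\R^t$ and range exactly $\h_Y$, and that every ``inverse'' appearing is the inverse of an honestly invertible finite matrix (which it is, thanks to positivity of $A$ and $\lambda > 0$). After that, all of the displayed manipulations are valid verbatim.
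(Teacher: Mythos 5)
Your proof is correct and takes essentially the same SVD-based route as the paper: (i)–(iii) are the same one-line cancellations, (iv) is the same identity verified by a slightly more direct computation of $(P_Y A P_Y + \lambda I)\,g_Y(A_\lambda) = P_Y$ rather than the paper's chain starting from $P_Y = VV^*$, and (v) is the paper's projection argument repackaged cleanly as $W(W^*W)^{-1}W^*$ with $W = A_\lambda^{1/2}V$. No gaps; the extra care about $V$ being a finite-rank partial isometry and the invertibility of $V^*A_\lambda V$ is exactly the right thing to note.
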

\begin{proof}
(i) and (ii) follow from $V^* P_Y = V^* V V^* = V^*$
and $P_Y V = V V^* V = V$, respectively.
(iii) is similar:
$g_Y(A_\lambda) A_\lambda P_Y = V (V^* A_\lambda V)^{-1} V^* A_\lambda V V^* = V V^*$.
For (iv),
\[
  P_Y
  = V V^*
  = V (V^* A_\lambda V) (V^* A_\lambda V)^{-1} V^*
  = V (V^* A_\lambda V) V^* V (V^* A_\lambda V)^{-1} V^*
.\]
But $V (V^* A_\lambda V) V^* = V (V^* A V + \lambda V^* V) V^* = (P_Y A P_Y + \lambda I) P_Y$,
so we have
\[
  P_Y = (P_Y A P_Y + \lambda I) P_Y g_Y(A_\lambda)
;\]
left-multiplying both sides by $(P_Y A P_Y + \lambda I)^{-1}$
and using (ii) yields the desired result.
Finally,
\begin{align}
     \left(A_\lambda^\frac12 g_Y(A_\lambda) A_\lambda^\frac12 \right)^2
  &= A_\lambda^\frac12 g_Y(A_\lambda) A_\lambda g_Y(A_\lambda) A_\lambda^\frac12
\\&= A_\lambda^\frac12 V (V^* A_\lambda V)^{-1} V^* A_\lambda V (V^* A_\lambda V)^{-1} V^* A_\lambda^\frac12
\\&= A_\lambda^\frac12 V (V^* A_\lambda V)^{-1} V^* A_\lambda^\frac12
\\&= A_\lambda^\frac12 g_Y(A_\lambda) A_\lambda^\frac12
,\end{align}
so that $A_\lambda^\frac12 g_Y(A_\lambda) A_\lambda^\frac12$ is a projection.
Thus its operator norm is either 0 or 1, and (v) follows.
\end{proof}

\section{Representer theorem for Nyström optimization problem (Theorem \ref{thm:nys-form})}
\label{sec:nys-form}

We will first establish some representations for $f_{\lambda,n}^m$ in terms of operators on $\h$
(in \cref{thm:nys-forms-h}),
and then show \cref{thm:nys-form-general}, which generalizes \cref{thm:nys-form}.
This parallels Appendix C of \citet{lessismore}.

\begin{lemma}\label{thm:nys-forms-h}
  Under \cref{assump:well-spec,assump:aapo-1,assump:aapo-2,assump:aapo-3,assump:1-integrable},
  the unique minimizer of $\hat J(f) + \lambda \norm{f}_\h^2$ in $\h_Y$ is
  \[
    f_{\lambda,n}^m
    = - (P_Y \hat C P_Y + \lambda I)^{-1} P_Y \hat \xi
    = - g_Y(\hat C_\lambda) \hat\xi
    \label{eq:nys-forms}
  .\]
\end{lemma}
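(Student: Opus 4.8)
The plan is to turn the optimization over the finite-dimensional subspace $\h_Y$ into an ordinary least-squares problem in $\R^t$, using the orthonormal system supplied by the SVD $Z_Y = U \Sigma V^*$. Since $Z_Y^* = V \Sigma U^*$ with $\Sigma$ invertible, $\range(Z_Y^*) = \spn(Y) = \h_Y$, so the columns of $V$ form an orthonormal basis of $\h_Y$; in particular $V^* V = I_t$, $V V^* = P_Y$, and every $f \in \h_Y$ may be written as $f = V c$ for a unique $c \in \R^t$, with $\norm{f}_\h^2 = \norm{c}_2^2$.

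First I would substitute $f = V c$ into the regularized objective $\hat J_\lambda(f) = \tfrac12 \langle f, \hat C f \rangle_\h + \langle f, \hat\xi \rangle_\h + \tfrac12 \lambda \norm{f}_\h^2$ and use $V^* V = I_t$ to collect the quadratic and regularization terms, obtaining the finite-dimensional quadratic
\[
  \hat J_\lambda(V c) = \tfrac12 \big\langle c, (V^* \hat C_\lambda V)\, c \big\rangle_{\R^t} + \big\langle c, V^* \hat\xi \big\rangle_{\R^t}
,\]
where $V^* \hat C_\lambda V = V^* \hat C V + \lambda I_t$. Since $\hat C \succeq 0$ we have $V^* \hat C_\lambda V \succeq \lambda I_t \succ 0$, so this quadratic is strictly convex with unique minimizer $c_\star = -(V^* \hat C_\lambda V)^{-1} V^* \hat\xi$. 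Mapping back through the bijection $c \mapsto V c$ onto $\h_Y$ yields both the existence and the uniqueness of the minimizer over $\h_Y$, namely $f_{\lambda,n}^m = V c_\star = -V (V^* \hat C_\lambda V)^{-1} V^* \hat\xi = -g_Y(\hat C_\lambda)\, \hat\xi$ by the definition \cref{eq:g-y}. The first expression in \cref{eq:nys-forms} then follows by rewriting $g_Y(\hat C_\lambda) = (P_Y \hat C P_Y + \lambda I)^{-1} P_Y$ via \cref{thm:gy-props}.

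The computation is essentially finite-dimensional least squares, so I do not expect a deep obstacle; the points requiring care are that the Nyström spanning set $Y$ may be linearly dependent, which is why one works with the orthonormalized basis $V$ (of rank $t \le m$) rather than with the $y_a$ directly, and that $\hat\xi$ need not lie in $\h_Y$, so its projection — equivalently the factor $V^*$, or the $P_Y$ in the other form — appears in the solution. One should also check that $(P_Y \hat C P_Y + \lambda I)^{-1} P_Y \hat\xi \in \h_Y$, which is immediate since $P_Y \hat C P_Y$ maps $\h_Y^\perp$ to $0$ and $\h_Y$ into $\h_Y$, so $(P_Y \hat C P_Y + \lambda I)^{-1}$ leaves $\h_Y$ invariant. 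An equivalent derivation projects the stationarity condition $\hat C_\lambda f + \hat\xi = 0$ onto $\h_Y$, yielding $(P_Y \hat C P_Y + \lambda I) f = -P_Y \hat\xi$ for $f \in \h_Y$, paralleling Appendix C of \citet{lessismore}.
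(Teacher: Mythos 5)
Your proof is correct, and it takes a genuinely different (and arguably cleaner) route than the paper's. The paper works entirely in $\h$: it rewrites $\hat J_\lambda(f)$ for $f \in \h_Y$ as a sum of two squared norms, $\tfrac12 \norm{\tfrac{1}{\sqrt n} Z_X P_Y f}^2 + \tfrac12\lambda \norm{f + \tfrac1\lambda P_Y\hat\xi}^2$ (up to a constant), extends the minimization to all of $\h$ and observes via the orthogonal decomposition $\norm{f + \tfrac1\lambda P_Y\hat\xi}^2 = \norm{P_Y f + \tfrac1\lambda P_Y\hat\xi}^2 + \norm{(I - P_Y)f}^2$ that the unconstrained minimizer must lie in $\h_Y$, then differentiates to obtain $(P_Y\hat C P_Y + \lambda I)f_{\lambda,n}^m = -P_Y\hat\xi$, and finally invokes \cref{thm:gy-props} to get the $g_Y$ form. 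You instead parametrize $\h_Y$ explicitly by the orthonormal columns of $V$ (from the SVD of $Z_Y$), reduce to a strictly convex $t$-dimensional quadratic, solve it directly to get the $g_Y$ form first, and only then recover the $P_Y$ form via \cref{thm:gy-props}. Your route makes the uniqueness argument immediate (positive-definiteness of $V^*\hat C_\lambda V$) and handles the possible linear dependence of the spanning set $Y$ transparently; the paper's route is closer in spirit to the completion-of-squares arguments of \citet{lessismore} and avoids explicit coordinates, but needs the slightly subtler step of extending the optimization to $\h$ before differentiating. Both are valid proofs of the lemma.
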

\begin{proof}
We begin by rewriting the minimization using \cref{thm:full-data-form} as
\begin{align}
     f_{\lambda,n}^m
  &= \argmin_{f \in \h_Y} \hat J_\lambda(f)
\\&= \argmin_{f \in \h_Y}
          \frac12 \langle f, \hat C f \rangle_\h
        + \langle f, \hat\xi \rangle_\h
        + \frac12 \lambda \norm{f}_\h^2
\\&= \argmin_{f \in \h_Y}
          \frac12 \langle P_Y f, \hat C P_Y f \rangle_\h
        + \langle P_Y f, \hat\xi \rangle_\h
        + \frac12 \lambda \norm{f}_\h^2
\\&= \argmin_{f \in \h_Y} \frac12
          \left\langle \frac{1}{\sqrt n} Z_X P_Y f, \frac{1}{\sqrt n} Z_X P_Y f \right\rangle_\h
        + \langle f, P_Y \hat\xi \rangle_\h
        + \frac12 \lambda \norm{f}_\h^2
\\&= \argmin_{f \in \h_Y} \frac12
          \norm*{\frac{1}{\sqrt n} Z_X P_Y f}_\h^2
        + \lambda \left\langle f, \frac1\lambda P_Y \hat\xi \right\rangle_\h
        + \frac12 \lambda \norm{f}_\h^2
        + \frac12 \lambda \norm*{\frac1\lambda P_Y \hat\xi}_\h^2
\\&= \argmin_{f \in \h_Y} \frac12
          \norm*{\frac{1}{\sqrt n} Z_X P_Y f}_\h^2
        + \frac12 \lambda \norm*{ f + \frac1\lambda P_Y \hat\xi}_\h^2
.\end{align}
This problem is strictly convex and coercive, thus a unique $f_{\lambda,n}^m$ exists.
Now, for any $f \in \h$, we have
\[
    \norm*{f + \frac1\lambda P_Y \hat\xi}_\h^2
  = \norm*{P_Y f + \frac1\lambda P_Y \hat\xi}_\h^2
  + \norm*{(I - P_Y) f}_\h^2
,\]
so that the problem
\[
     \argmin_{f \in \h} \frac12
          \norm*{\frac{1}{\sqrt n} Z_X P_Y f}_\h^2
        + \frac12 \lambda \norm*{ f + \frac1\lambda P_Y \hat\xi}_\h^2
\]
will yield a solution in $\h_Y$.
This problem is also strictly convex and coercive,
so its unique solution must be $f_{\lambda,n}^m$.
By differentiating the objective, we can then see that
\begin{gather}
  \tfrac1n P_Y Z_X^* Z_X f_{\lambda,n}^m + \lambda f_{\lambda,n}^m + P_Y \hat \xi = 0
\\\left( P_Y \hat C P_Y + \lambda I \right) f_{\lambda,n}^m = - P_Y \hat\xi
,\end{gather}
which since $\hat C$ is positive yields the first equality of \cref{eq:nys-forms}.
The second follows from \cref{thm:gy-props} \ref{part:gy-p}.
\end{proof}

\begin{lemma}[Generalization of \cref{thm:nys-form}] \label{thm:nys-form-general}
  Under \cref{assump:well-spec,assump:aapo-1,assump:aapo-2,assump:aapo-3,assump:1-integrable},
  $f_{\lambda,n}^m$ can be computed as
\begin{align}
     f_{\lambda,n}^m
  &= Z_Y^* \beta_Y
   = \sum_{a=1}^m (\beta_Y)_{a} y_a
\\
     \beta_Y
  &= -(\tfrac1n B_{XY}\tp B_{XY} + \lambda G_{YY})^\dagger h_Y
,
\label{eq:nystrom-linear-system-general}
\end{align}
where $B_{XY}\in \R^{nd \times m}, G_{YY}\in\R^{m \times m}$, $h_Y \in \R^{m}$ are given by
\begin{align}
(B_{XY})_{(b,i),a} &= \langle \partial_i k(X_b, \cdot), y_a \rangle_\h
\label{eq:bxy}
\\
(G_{YY})_{a,a'} &= \langle y_a, y_{a'} \rangle_\h
\label{eq:gyy}
\\
(h_Y)_{a} &= \langle \hat\xi, y_a \rangle_\h
\label{eq:hy}
.
\end{align}
\end{lemma}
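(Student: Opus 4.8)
The plan is to start from the operator characterization $f_{\lambda,n}^m = -(P_Y \hat C P_Y + \lambda I)^{-1} P_Y \hat\xi = -g_Y(\hat C_\lambda)\hat\xi$ of \cref{thm:nys-forms-h} and translate it into coordinates with respect to the spanning set $Y = \{y_a\}$. First I would set up the dictionary between the operators on $\h$ and the matrices in the statement. With $Z_Y = \sum_a e_a \otimes y_a$ as in \cref{eq:z-y}, so that $(Z_Y f)_a = \langle y_a, f \rangle_\h$ and $Z_Y^* \alpha = \sum_a \alpha_a y_a$, one reads off directly from \cref{eq:bxy,eq:gyy,eq:hy} that $G_{YY} = Z_Y Z_Y^*$, $B_{XY} = Z_X Z_Y^*$, and $h_Y = Z_Y \hat\xi$; combined with $\hat C = \tfrac1n Z_X^* Z_X$ this gives
\[
  \tfrac1n B_{XY}\tp B_{XY} + \lambda G_{YY}
  = Z_Y\!\left(\tfrac1n Z_X^* Z_X + \lambda I\right)\! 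Z_Y^*
  = Z_Y \hat C_\lambda Z_Y^* =: M
.\]
For the concrete basis $y_{(a,i)} = \partial_i k(Y_a,\cdot)$ of \cref{thm:nys-form}, each of these inner products is evaluated using the reproducing property for kernel derivatives, e.g.\ $\langle \partial_i k(X_b,\cdot), \partial_j k(Y_a,\cdot)\rangle_\h = \partial_i \partial_{j+d} k(X_b, Y_a)$; this step is routine bookkeeping.

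The heart of the argument is a short equivalence. Since $f_{\lambda,n}^m \in \h_Y = \range Z_Y^*$, it equals $Z_Y^* \beta$ for some $\beta \in \R^m$, and $P_Y Z_Y^* = Z_Y^*$, $Z_Y P_Y = Z_Y$. I would show that for $\beta \in \R^m$, the element $Z_Y^*\beta$ is the (unique) minimizer from \cref{thm:nys-forms-h} if and only if $M\beta = -h_Y$. In one direction, applying $Z_Y$ to the defining equation $(P_Y\hat C P_Y + \lambda I)Z_Y^*\beta = -P_Y\hat\xi$ and simplifying with the projection identities above yields $M\beta = -h_Y$. Conversely, $M\beta = -h_Y$ says $\hat C_\lambda Z_Y^*\beta + \hat\xi \in \ker Z_Y = \h_Y^\perp$; projecting with $P_Y$ and using $(P_Y\hat C P_Y + \lambda I)Z_Y^* = P_Y \hat C_\lambda Z_Y^*$ recovers the defining equation, whose solution in $\h_Y$ is unique because $P_Y\hat C P_Y + \lambda I$ is positive definite on $\h_Y$.

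Finally I would check that $\beta_Y := -M^\dagger h_Y$ actually solves $M\beta_Y = -h_Y$, which is the only place the pseudo-inverse (as opposed to an inverse) is needed: the $y_a$ may be linearly dependent, making $G_{YY}$ and hence $M$ singular. The key observation is that $M = Z_Y\hat C_\lambda Z_Y^*$ is symmetric with $v\tp M v \ge \lambda\norm{Z_Y^* v}_\h^2 \ge 0$, so $\ker M = \ker Z_Y^*$ and therefore $\range M = (\ker M)^\perp = (\ker Z_Y^*)^\perp = \range Z_Y$. Since $h_Y = Z_Y\hat\xi \in \range Z_Y = \range M$ we get $M M^\dagger h_Y = h_Y$, hence $M(-M^\dagger h_Y) = -h_Y$; combined with the equivalence this gives $f_{\lambda,n}^m = Z_Y^*\beta_Y = \sum_a (\beta_Y)_a y_a$ with $\beta_Y$ as claimed, and \cref{thm:nys-form} follows by specializing the basis. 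I do not anticipate a serious obstacle; the trickiest points are tracking which $P_Y$'s can be inserted or removed and justifying the pseudo-inverse step through the range identity rather than taking it for granted. This parallels Appendix C of \citet{lessismore}.
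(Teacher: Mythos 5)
Your proof is correct, and it takes a genuinely different route to the same endpoint. Both you and the paper start from \cref{thm:nys-forms-h} and set up the same matrix dictionary ($B_{XY} = Z_X Z_Y^*$, $G_{YY} = Z_Y Z_Y^*$, $h_Y = Z_Y\hat\xi$, and hence $\frac1n B_{XY}\tp B_{XY} + \lambda G_{YY} = Z_Y\hat C_\lambda Z_Y^* =: M$). The paper then plugs in the SVD $Z_Y = U\Sigma V^*$ and repeatedly applies the full-rank factorization rule for pseudo-inverses to show directly that $-Z_Y^* M^\dagger Z_Y\hat\xi$ collapses to $-V(V^*\hat C_\lambda V)^{-1}V^*\hat\xi = -g_Y(\hat C_\lambda)\hat\xi$. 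You avoid both the SVD and the full-rank factorization lemma, instead characterizing $f_{\lambda,n}^m = Z_Y^*\beta$ via the normal equations $M\beta = -h_Y$ (the two directions of your equivalence track cleanly through the projection identities $Z_Y P_Y = Z_Y$ and $P_Y Z_Y^* = Z_Y^*$), and then justifying the pseudo-inverse by a range argument: $M$ is symmetric PSD with $\ker M = \ker Z_Y^*$, so $\range M = \range Z_Y \ni h_Y$, hence $MM^\dagger h_Y = h_Y$. The paper's approach is more mechanical and self-contained at the level of matrix identities; yours is more structural and arguably more illuminating, since it isolates exactly why the pseudo-inverse is the right object (it selects a solution of the consistent linear system $M\beta = -h_Y$) and makes clear that any solution $\beta$ of $M\beta = -h_Y$ yields the same $Z_Y^*\beta$, a fact the SVD computation leaves implicit.
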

\begin{proof}
First, $B_{XY} = Z_X Z_Y^*$,
$G_{YY} = Z_Y Z_Y^*$,
and $h_Y = Z_Y \hat\xi$.
For example, \cref{eq:bxy} agrees with
\begin{align}
     Z_X Z_Y^*
  &= \left[ \sum_{b=1}^n \sum_{i=1}^d e_{(b,i)} \otimes \partial_i k(X_b, \cdot) \right]
     \left[ \sum_{a=1}^m y_a \otimes e_{a} \right]
\\&= \sum_{b=1}^n \sum_{i=1}^d \sum_{a=1}^m
     \langle \partial_i k(X_b, \cdot), y_a \rangle_\h
     \left[ e_{(b,i)} \otimes e_{a} \right]
.\end{align}

Recall the full-rank factorization of pseudo-inverses:
if a matrix $A$ of rank $r$ can be written as $A = FG$ for $F$, $G$ each of rank $r$,
then $A^\dagger = G^\dagger F^\dagger$ \citep[chap.\ 1, sec.\ 6, ex.\ 17]{gen-invs}.

Now we can show that the claimed form \cref{eq:nystrom-linear-system-general} matches $f_{\lambda,n}^m$ from \cref{eq:nys-forms}:
\begin{align}
     - Z_Y^* \left( \tfrac1n B_{XY}\tp B_{XY} + \lambda G_{YY} \right)^\dagger h_Y
  &= - Z_Y^* \left( \tfrac1n Z_Y Z_X^* Z_X Z_Y^* + \lambda Z_Y Z_Y^* \right)^\dagger Z_Y \hat\xi
\\&= - Z_Y^* \left( Z_Y \hat C_\lambda Z_Y^* \right)^\dagger Z_Y \hat\xi
\\&= - V \Sigma U^* \left( (U \Sigma) (V^* \hat C_\lambda V) \Sigma U^* \right)^\dagger U \Sigma V^* \hat\xi
\\&= - V \Sigma U^* (\Sigma U^*)^\dagger ( V^* \hat C_\lambda V )^\dagger (U \Sigma)^\dagger U \Sigma V^* \hat\xi
\\&= - V \Sigma U^* U \Sigma^{-1} ( V^* \hat C_\lambda V )^{-1} \Sigma^{-1} U^* U \Sigma V^* \hat\xi
\\&= - V ( V^* \hat C_\lambda V )^{-1} V^* \hat\xi
\\&= - g_Y(\hat C_\lambda) \hat\xi
   = f_{\lambda,n}^m
.\qedhere\end{align}
\end{proof}

\Cref{thm:nys-form} is the specialization of \cref{thm:nys-form-general} to $y_{(a,i)} = \partial_i k(Y_a, \cdot)$.

\subsection{Relationship to ``lite'' kernel exponential families} \label{sec:lite-generalization}
The lite kernel exponential family of \citet{strathmann2015kernel_hmc}
obtains a solution in $\h'_Y = \spn\{ k(y, \cdot) \}_{y \in Y}$,
where in that paper it was assumed that
$Y = X$,
$k(x, y) = \exp\left( - \tau^{-1} \norm{x - y}^2 \right)$,
and $q_0$ was uniform.
Their estimator, given by their Proposition 1, is
\begin{gather}
     \alpha
   = - \frac{\tau}{2} (A + \lambda I)^{-1} b
\label{eq:lite-theirs}
\\   A
   = \sum_{i=1}^d -[D_{x_i} K - K D_{x_i}]^2
\qquad
     b
   = \sum_{i=1}^d \left( \frac2\tau (K s_i + D_{s_i} K \mathbf{1} - 2 D_{x_i} K x_i) - K \mathbf{1} \right)
\end{gather}
where $x_i = \begin{bmatrix} X_{1i} & \dots & X_{ni} \end{bmatrix}\tp$,
$s_i = x_i \odot x_i$ with $\odot$ the elementwise product,
$D_x = \diag(x)$,
and $K \in \R^{m \times m}$ has entries $K_{aa'} = k(X_a, X_{a'})$.

\cref{thm:nys-form-general} allows us to optimize over $\h'_Y$;
we need not restrict ourselves to $Y = X$, uniform $q_0$, or a Gaussian kernel.
Here $y_a = k(Y_a, \cdot)$, and we obtain
\[
  \beta_Y' = - \left(\frac1n (B'_{XY})\tp B'_{XY} + \lambda G'_{YY} \right)^\dagger h'_Y
.\]
Using that for the Gaussian kernel $k$
\[
  \partial_{i} k(x, y)
  = -\frac2\tau (x_i - y_i) k(x, y)
\qquad
  \partial_{i+d}^2 k(x, y)
  = \frac{2}{\tau} \left[ \frac2\tau (x_i - y_i)^2 - 1 \right] k(x, y)
,\]
we can obtain with some algebra similar to the proof of \citet{strathmann2015kernel_hmc}'s Proposition 1 that when $Y = X$ and $q_0$ is uniform,
\[
  h'_X = \frac{2}{n \tau} b
  \qquad
  (B'_{XX})\tp B'_{XX} = \frac{4}{\tau^2} A
  \qquad
  G'_{XX} = K
.\]
Thus
\begin{align}
     \beta_X'
  &= - \left( \frac{4}{n \tau^2} A + \lambda K \right)^\dagger \frac{2}{n \tau} b
   = - \frac{\tau}{2} \left( A + \frac14 n \tau^2 \lambda K \right)^\dagger b
\label{eq:lite-ours}
.\end{align}
\cref{eq:lite-ours} resembles \cref{eq:lite-theirs},
except that our approach regularizes $A$ with $\frac14 n \tau^2 \lambda K$
rather than $\lambda I$.
This is because,
despite claims by \citet{strathmann2015kernel_hmc} in both the statement and the proof of their Proposition 1 that they minimize
$\hat J(f) + \lambda \norm{f}_\h^2$,
they in fact minimize
$\hat J(f) + \frac12 n \tau^2 \lambda \norm{\alpha}_2^2$.
Our solutions otherwise agree.

\section{Consistency and convergence rate of the estimator (Theorem \ref{thm:main-big-o})}
\label{sec:bound-proof}

To prove the consistency and convergence of $f_{\lambda,n}^m$,
we will first bound the difference between $f_{\lambda,n}^m$ in terms of various quantities (\cref{sec:decomp}),
which we will then study individually in \cref{sec:prob}
to yield the final result in \cref{sec:final-bound}.
\Cref{sec:aux} gives auxiliary results used along the way.

\subsection{Decomposition} \label{sec:decomp}

We care both about the parameter convergence
$\norm{f_{\lambda,n}^m - f_0}_\h$
and the convergence of $p_{\lambda,n}^m = p_{f_{\lambda,n}^m}$ to $p_0$ in various distances.
But by \cref{thm:full-data-form},
we know that
$J(p_0 \| p_{\lambda,n}^m) = \frac12 \norm*{C^\frac12 (f_{\lambda,n}^m - f_0)}_\h^2$.
\Cref{thm:dist-distances} additionally shows that the $L^r$, KL, and Hellinger distances between the distributions can be bounded in terms of $\norm{f_{\lambda,n}^m - f_0}_\h$.
Thus it suffices to bound
$\norm{C^\alpha (f_{\lambda,n}^m - f_0)}_\h$
for $\alpha \ge 0$.

\begin{lemma} \label{thm:decomp}
  Under \cref{assump:range-space,assump:aapo-1,assump:aapo-2,assump:aapo-3,assump:well-spec,assump:1-integrable},
  let $\alpha \ge 0$
  and define
  \[
    c(a) := \lambda^{\min\left( 0,\ a - \frac12 \right)}
            \norm{C}^{\max\left( 0,\ a - \frac12 \right)}
    ,\qquad
    \mathcal C_Y := \norm{C_\lambda^\frac12 (I - V V^*)}^2
  .\]
  Then
  \begin{multline}
    \norm{C^\alpha (f_{\lambda,n}^m - f_0)}_\h
    \le R \left( 2 \mathcal C_Y + \lambda \right) c(\alpha) c(\beta)
\\    + \frac{1}{\sqrt\lambda} \norm*{C^\alpha \hat C_\lambda^{-\frac12}}
        \Big(
          \norm{\hat\xi - \xi}_\h
        + \norm{\hat C - C} R \left(
            \left( \frac{2 \mathcal C_Y}{\sqrt\lambda} + \sqrt\lambda \right) c(\beta)
          + \norm{C}^\beta
          \right)
       \Big)
  .\end{multline}
\end{lemma}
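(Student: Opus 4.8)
The plan is to compare $f_{\lambda,n}^m$ against the population-optimal estimator over the \emph{same} subspace, $f_\lambda^m := \argmin_{f \in \h_Y}\big(J(f) + \tfrac12\lambda\norm{f}_\h^2\big)$. Running the argument of \cref{thm:nys-forms-h} with $C,\xi$ in place of $\hat C,\hat\xi$ shows $f_\lambda^m = -g_Y(C_\lambda)\xi = g_Y(C_\lambda)Cf_0$, the last step using $\xi = -Cf_0$ from \cref{eq:xi}. The triangle inequality then splits $\norm{C^\alpha(f_{\lambda,n}^m - f_0)}_\h$ into the \emph{estimation error} $\norm{C^\alpha(f_{\lambda,n}^m - f_\lambda^m)}_\h$ and the \emph{approximation error} $\norm{C^\alpha(f_\lambda^m - f_0)}_\h$, which I bound separately. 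Throughout I will use the elementary spectral bounds $\norm{C^a C_\lambda^{-1/2}} = \norm{C_\lambda^{-1/2}C^a} \le c(a)$ (checked by splitting on $a\ge\tfrac12$ versus $a<\tfrac12$), $\norm{CC_\lambda^{-1}}\le 1$, $\norm{\hat C_\lambda^{-1/2}}\le\lambda^{-1/2}$, and the identity $\norm{C_\lambda^{1/2}(I-P_Y)C_\lambda^{1/2}} = \norm{C_\lambda^{1/2}(I-P_Y)}^2 = \mathcal C_Y$, writing $P_Y := VV^*$.

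For the approximation error, the crux is the decomposition
\[
  f_\lambda^m - f_0
  = -(I-P_Y)f_0 + g_Y(C_\lambda)\,C\,(I-P_Y)f_0 - \lambda\, g_Y(C_\lambda)f_0 ,
\]
which follows from $Cf_0 = C_\lambda f_0 - \lambda f_0$ together with $g_Y(C_\lambda)C_\lambda P_Y = P_Y$ (\cref{thm:gy-props}~\ref{part:gy-invish}) and $g_Y(C_\lambda)(I-P_Y)=0$ (immediate from $g_Y(A)P_Y = g_Y(A)$ in \cref{thm:gy-props}). Writing $f_0 = C^\beta v$ with $\norm{v}_\h < R$ (\cref{assump:range-space}) and $g_Y(C_\lambda) = C_\lambda^{-1/2}\Pi C_\lambda^{-1/2}$ with $\norm{\Pi}\le 1$ (\cref{thm:gy-props}~\ref{part:gy-norm}), I ``sandwich'' each term between $C_\lambda^{\pm1/2}$ factors: the first becomes $C^\alpha C_\lambda^{-1/2}\cdot C_\lambda^{1/2}(I-P_Y)C_\lambda^{1/2}\cdot C_\lambda^{-1/2}C^\beta v$, of norm at most $R\,\mathcal C_Y\, c(\alpha)c(\beta)$; the second is bounded identically after also absorbing a factor $\norm{CC_\lambda^{-1}}\le 1$; the third gives $\lambda R\, c(\alpha)c(\beta)$. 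Summing yields $\norm{C^\alpha(f_\lambda^m - f_0)}_\h \le R(2\mathcal C_Y + \lambda)\,c(\alpha)c(\beta)$, the first summand of the claimed bound. Applying this at $\alpha=0$ (where $c(0)=\lambda^{-1/2}$) and using $\norm{f_0}_\h = \norm{C^\beta v}_\h \le \norm{C}^\beta R$ also gives $\norm{f_\lambda^m}_\h \le \norm{f_\lambda^m - f_0}_\h + \norm{f_0}_\h \le R\big((\tfrac{2\mathcal C_Y}{\sqrt\lambda}+\sqrt\lambda)c(\beta)+\norm{C}^\beta\big)$, which is the quantity multiplying $\norm{\hat C - C}$ in the statement.

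For the estimation error, $f_{\lambda,n}^m$ and $f_\lambda^m$ solve $(P_Y\hat C P_Y+\lambda I)f_{\lambda,n}^m = -P_Y\hat\xi$ and $(P_Y C P_Y+\lambda I)f_\lambda^m = -P_Y\xi$ (rearranging \cref{thm:nys-forms-h} and its population analogue). Subtracting these, using $P_Y f_\lambda^m = f_\lambda^m$ (as $f_\lambda^m\in\h_Y$), and applying $(P_Y\hat C P_Y+\lambda I)^{-1}$ with \cref{thm:gy-props}~\ref{part:gy-p} gives
\[
  f_{\lambda,n}^m - f_\lambda^m = -g_Y(\hat C_\lambda)\big[(\hat\xi-\xi) + (\hat C - C)f_\lambda^m\big] .
\]
Applying $C^\alpha$, factoring $g_Y(\hat C_\lambda) = \hat C_\lambda^{-1/2}\hat\Pi\hat C_\lambda^{-1/2}$ with $\norm{\hat\Pi}\le 1$ (\cref{thm:gy-props}~\ref{part:gy-norm}), and using $\norm{\hat C_\lambda^{-1/2}}\le\lambda^{-1/2}$ bounds the estimation error by $\tfrac{1}{\sqrt\lambda}\norm{C^\alpha\hat C_\lambda^{-1/2}}\big(\norm{\hat\xi-\xi}_\h + \norm{\hat C-C}\norm{f_\lambda^m}_\h\big)$. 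Substituting the bound on $\norm{f_\lambda^m}_\h$ from the previous paragraph and adding the approximation-error bound gives exactly the stated inequality. The one genuinely delicate point is getting the approximation-error decomposition right and routing each $C_\lambda^{\pm1/2}$ to the correct sub-term so that precisely $c(\alpha)$, $c(\beta)$ and $\mathcal C_Y$ come out; everything else is bookkeeping with the spectral inequalities and \cref{thm:gy-props}.
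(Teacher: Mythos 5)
Your proof is correct and follows essentially the same route as the paper: the same split into approximation and estimation errors about $f_\lambda^m = g_Y(C_\lambda) C f_0$, the same use of \cref{thm:gy-props} to sandwich each piece with $C_\lambda^{\pm 1/2}$, and the same range-space argument. The only cosmetic differences are that you write the projection term as $g_Y(C_\lambda)C(I-P_Y)f_0$ rather than $g_Y(C_\lambda)C_\lambda(I-P_Y)f_0$ (these coincide since $g_Y(C_\lambda)(I-P_Y)=0$), and you bound $\norm{f_\lambda^m}_\h$ at the end rather than carrying $f_\lambda^m - f_0$ and $f_0$ separately through the estimation-error algebra — both yielding the identical final expression.
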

\begin{proof}
We will decompose the error with respect to the best estimator for a fixed basis:
\begin{align}
  f_\lambda^m
  &:= \argmin_{f \in \h_Y} \frac12 \langle f, P_Y C P_Y f \rangle_\h + \langle f, P_Y \xi \rangle_\h + \frac12 \lambda \norm{f}_\h^2
\\&= - (P_Y C P_Y + \lambda I)^{-1} P_y \xi
   = - g_Y(C_\lambda) \xi
   = g_Y(C_\lambda) C f_0
.\end{align}
Then we have
\[
    \norm{C^\alpha (f_{\lambda,n}^m - f_0)}_\h
    \le \norm{C^\alpha (f_{\lambda,n}^m - f_\lambda^m)}_\h
      + \norm{C^\alpha (f_{\lambda}^m - f_0)}_\h
\label{eq:decomp-init}
.\]
We'll tackle the second term first.

\paragraph{Approximation error}
This term covers both approximation due to the basis $\h_Y$ and the bias due to regularization.
We'll break it down using some ideas from the proof of \citet{lessismore}'s Theorem 2:
\begin{align}
     f_0 - f_\lambda^m
  &= (I - g_Y(C_\lambda) C) f_0
\\&= \left( I - g_Y(C_\lambda) C_\lambda + \lambda g_Y(C_\lambda) \right) f_0
\\&= \left( I - g_Y(C_\lambda) C_\lambda (V V^*) - g_Y(C_\lambda) C_\lambda (I - V V^*) + \lambda g_Y(C_\lambda) \right) f_0
\\&= \left( (I - V V^*) - g_Y(C_\lambda) C_\lambda (I - V V^*) + \lambda g_Y(C_\lambda) \right) f_0
,\end{align}
where in the last line we used \cref{thm:gy-props} \ref{part:gy-invish}.
Thus,
using \cref{assump:range-space} and \cref{thm:gy-props} \ref{part:gy-norm},
\begin{align}
       \norm*{C^\alpha (f_\lambda^m - f_0)}_\h
  &\le \norm*{C^\alpha (I - V V^*) f_0}_\h
     + \norm*{C^\alpha g_Y(C_\lambda) C_\lambda (I - V V^*) f_0}_\h
     + \lambda \norm*{C^\alpha g_Y(C_\lambda) f_0}_\h
\\&\le \underbrace{\norm*{C^\alpha C_\lambda^{-\frac12}}}_{\mathcal S_\alpha}
       \norm*{C_\lambda^\frac12 (I - V V^*) C^\beta}
       \underbrace{\norm*{C^{-\beta} f_0}_\h}_{\le R}
\\&\quad
     + \underbrace{\norm*{C^\alpha C_\lambda^{-\frac12}}}_{\mathcal S_\alpha}
       \underbrace{\norm*{C_\lambda^\frac12 g_Y(C_\lambda) C_\lambda^\frac12}}_{\le 1}
       \norm*{C_\lambda^\frac12 (I - V V^*) C^\beta}
       \underbrace{\norm*{C^{-\beta} f_0}_\h}_{\le R}
\\&\quad
     + \lambda
       \underbrace{\norm*{C^\alpha C_\lambda^{-\frac12}}}_{\mathcal S_\alpha}
       \underbrace{\norm*{C_\lambda^\frac12 g_Y(C_\lambda) C_\lambda^\frac12}}_{\le 1}
       \underbrace{\norm*{C_\lambda^{-\frac12} C^\beta}_\h}_{\mathcal S_\beta}
       \underbrace{\norm*{C^{-\beta} f_0}_\h}_{\le R}
.\end{align}

Because $(I - V V^*)$ is a projection, we have
\[
       \norm*{C_\lambda^\frac12 (I - V V^*) C^\beta}
   \le \norm*{C_\lambda^\frac12 (I - V V^*)^2 C_\lambda^\frac12} \norm*{C_\lambda^{-\frac12} C^\beta}
   \le \norm*{C_\lambda^\frac12 (I - V V^*)}^2 \mathcal S_\beta
.\]

We can also bound the terms $\mathcal S_a$ as follows.
When $a \ge \frac12$,
the function $x \mapsto x^a / \sqrt{x + \lambda}$ is increasing on $[0, \infty)$,
so that
\[
  \mathcal S_a
  = \norm*{C_\lambda^{-\frac12} C^a}_\h
  = \frac{\norm{C}^a}{\sqrt{\norm{C} + \lambda}}
  \le \norm{C}^{a - \frac12}
.\]
When instead $0 \le a < \frac12$,
we have that
\[
  \mathcal S_a
  = \norm*{C_\lambda^{-\frac12} C^a}_\h
  \le \max_{x \ge 0} \frac{x^a}{\sqrt{x + \lambda}}
  = \sqrt{2} a^a \left(\tfrac12 - a\right)^{\frac12 - a} \lambda^{a - \frac12}
  \le \lambda^{a - \frac12}
.\]
Combining the two yields
\[
  \mathcal S_a
  \le \lambda^{\min\left( 0,\ a - \frac12 \right)}
      \norm{C}^{\max\left( 0,\ a - \frac12 \right)}
  = c(a)
,\]
and so
\begin{align}
       \norm*{C^\alpha (f_\lambda^m - f_0)}_\h
  &\le R
      \left( 2 \norm*{C_\lambda^\frac12 (I - V V^*)}^2 + \lambda \right)
      c(\alpha) c(\beta)
\label{eq:decomp-proj}
.\end{align}

\paragraph{Estimation error}
Let $D = P_Y C P_Y$, $\hat D = P_Y \hat C P_Y$.
Then
\[
  f_\lambda^m
   = - (D + \lambda I)^{-1} P_Y \xi
   = -\frac1\lambda (D + \lambda I - D) (D + \lambda I)^{-1} P_Y \xi
   = -\frac1\lambda (P_Y \xi + D f_\lambda^m)
,\]
and so the error due to finite $n$ is
\begin{align}
       f_{\lambda}^m - f_{\lambda,n}^m
  &  = (\hat D + \lambda I)^{-1} P_Y \hat\xi + f_\lambda^m
\\&  = (\hat D + \lambda I)^{-1} \left( P_Y \hat\xi + (\hat D + \lambda I) f_\lambda^m \right)
\\&  = (\hat D + \lambda I)^{-1} \left( P_Y \hat\xi + \hat D f_\lambda^m + \lambda f_\lambda^m \right)
\\&  = (\hat D + \lambda I)^{-1} \left( P_Y \hat\xi + \hat D f_\lambda^m - P_Y \xi - D f_\lambda^m \right)
\\&  = (\hat D + \lambda I)^{-1} \left(
          P_Y (\hat\xi - \xi)
          + (\hat D - D) f_\lambda^m
       \right)
\\&  = (\hat D + \lambda I)^{-1} \left(
          P_Y (\hat\xi - \xi)
          + (\hat D - D) (f_\lambda^m - f_0)
          + (\hat D - D) f_0
       \right)
.\end{align}
We thus have,
using $\norm{P_Y} \le 1$,
\[
       \norm*{C^\alpha (f_\lambda^m - f_{\lambda,n}^m)}_\h
   \le \norm*{C^\alpha (P_Y \hat C P_Y + \lambda I)^{-1} P_Y}
       \Big(
          \norm{\hat\xi - \xi}_\h
        + \norm{\hat C - C} \norm{f_\lambda^m - f_0}_\h
        + \norm{\hat C - C} \norm*{f_0}_\h
       \Big)
.\]
We have already bounded $\norm{f_\lambda^m - f_0}_\h$,
and have
$
  \norm{f_0}_\h
  \le \norm{C^\beta} \norm{C^{-\beta} f_0}_\h
  \le R \norm{C}^\beta
$.
Using \cref{thm:gy-props} \ref{part:gy-p} and \ref{part:gy-norm},
we have
\begin{align}
       \norm*{C^\alpha (P_Y \hat C P_Y + \lambda I)^{-1} P_Y}
  &  = \norm*{C^\alpha g_Y(\hat C_\lambda)}
   \le \norm*{C^\alpha \hat C_\lambda^{-\frac12}}
       \norm*{\hat C_\lambda^\frac12 g_Y(\hat C_\lambda) \hat C_\lambda^\frac12}
       \norm*{\hat C_\lambda^{-\frac12}}
\\&\le \frac{1}{\sqrt\lambda} \norm*{C^\alpha \hat C_\lambda^{-\frac12}}
,\end{align}
and so
\[
       \norm*{C^\alpha (f_\lambda^m - f_{\lambda,n}^m)}_\h
   \le \frac{\norm*{C^\alpha \hat C_\lambda^{-\frac12}}}{\sqrt\lambda}
       \Big(
          \norm{\hat\xi - \xi}_\h
        + \norm{\hat C - C} \left( \norm{f_\lambda^m - f_0}_\h + R \norm{C}^\beta \right)
       \Big)
\label{eq:decomp-approx}
.\]
The claim follows by using \cref{eq:decomp-proj,eq:decomp-approx} in \cref{eq:decomp-init}.
\end{proof}

\subsubsection{Remark on unimportance of \texorpdfstring{$\partial_i^2 k(x, \cdot)$}{second-derivative} terms in the basis} \label{remark:y=x}
This decomposition gives some intuition about why terms of the form $\partial_i^2 k(x, \cdot)$,
which are included in the basis of the full-data solution
but missing from our solution even when $Y = X$,
appear to be unimportant (as we also observe empirically).

The only term in the error decomposition depending on the specific basis chosen is the projection error term $\norm{C_\lambda^\frac12 (I - V V^*)}$.
Because the $\partial_i^2 k(x, \cdot)$ directions are not particularly aligned with $C$,
unlike the $\partial_i k(x, \cdot)$ terms,
whether they are included or not should not have a major effect on this term and therefore does not strongly affect the bound.

Moreover, the primary places where \cref{thm:decomp} discards dependence on the basis are that in the estimation error term,
we bounded each of
$\norm{P_Y (\hat\xi - \xi)}$,
$\norm{P_Y (\hat C - C) P_Y}$,
and $\norm{C^\alpha (P_Y \hat C P_Y + \lambda I)^{-1} P_Y}$ terms
by simply dropping the $P_Y$.
For the $C$-based terms,
we again expect that the $\partial_i^2 k(x, \cdot)$ terms do not have a strong effect on the given norms.
Thus the only term that should be very directly affected is
$\norm{P_Y (\hat\xi - \xi)}$;
but since we expect that $\hat\xi \to \xi$ relatively quickly
compared to the convergence of $\hat C \to C$,
this term should not be especially important to the overall error.

\subsection{Probabilistic inequalities} \label{sec:prob}

We only need \cref{thm:decomp} for $\alpha = 0$ and $\alpha = \frac12$;
in the former case,
we use $\norm*{\hat C_\lambda^{-\frac12}} \le 1 / \sqrt{\lambda}$.
Thus we are left with four quantities to control:
$\norm{C^\frac12 \hat C_\lambda^{-\frac12}}$,
$\mathcal C_Y = \norm{C_\lambda^\frac12 (I - V V^*)}^2$,
$\norm{\hat\xi - \xi}_\h$,
and $\norm{\hat C - C}$.

\begin{lemma} \label{thm:c-chat}
  Let $\rho, \delta \in (0, 1)$.
  Under \cref{assump:aapo-1,assump:aapo-2,assump:aapo-3,assump:1-integrable,assump:bounded},
  for any $0 < \lambda \le \frac13 \norm C$,
  we have with probability at least $1 - \delta$ that
  \[
    \norm{C^\frac12 \hat C_\lambda^{-\frac12}}
    \le \frac{1}{\sqrt{1 - \rho}}
  \]
  as long as
  \[
    n \ge \max\left(
      \frac{4}{3 \rho},\
      \frac{40 d \N'_\infty(\lambda)}{\rho^2}
    \right)
    \log \frac{40 \Tr C}{\lambda \delta}
  .\]
\end{lemma}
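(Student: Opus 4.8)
The plan is to first convert the desired operator-norm bound into a concentration statement about $\hat C$, and then apply a Bernstein-type inequality for sums of random operators. For the reduction, observe that $\norm{C^{\frac12}\hat C_\lambda^{-\frac12}}^2 = \norm{C^{\frac12}\hat C_\lambda^{-1}C^{\frac12}}$. Since $C \preceq C_\lambda$ we have $\hat C_\lambda^{-\frac12} C \hat C_\lambda^{-\frac12} \preceq \hat C_\lambda^{-\frac12} C_\lambda \hat C_\lambda^{-\frac12}$, and monotonicity of the operator norm on positive operators gives $\norm{C^{\frac12}\hat C_\lambda^{-\frac12}}^2 \le \norm{C_\lambda^{\frac12}\hat C_\lambda^{-1}C_\lambda^{\frac12}} = 1 / \lambda_{\min}\!\left(C_\lambda^{-\frac12}\hat C_\lambda C_\lambda^{-\frac12}\right)$. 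Using $C_\lambda^{-\frac12} C_\lambda C_\lambda^{-\frac12} = I$ we can write $C_\lambda^{-\frac12}\hat C_\lambda C_\lambda^{-\frac12} = I - C_\lambda^{-\frac12}(C - \hat C)C_\lambda^{-\frac12}$, so this smallest eigenvalue is at least $1 - \norm{C_\lambda^{-\frac12}(C-\hat C)C_\lambda^{-\frac12}}$. Hence on the event $\norm{C_\lambda^{-\frac12}(C-\hat C)C_\lambda^{-\frac12}} \le \rho$ we already get $\norm{C^{\frac12}\hat C_\lambda^{-\frac12}} \le 1/\sqrt{1-\rho}$, and it remains to show this event has probability at least $1-\delta$ under the stated sample-size condition.

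Next I would write the controlled quantity as an average of i.i.d.\ centered self-adjoint operators. With $\eta_a := C_\lambda^{-\frac12}\!\left(\sum_{i=1}^d \partial_i k(X_a,\cdot)\otimes\partial_i k(X_a,\cdot)\right)\!C_\lambda^{-\frac12}$, which is positive with $\E\eta_a = C_\lambda^{-\frac12} C C_\lambda^{-\frac12}$, we have $C_\lambda^{-\frac12}(C-\hat C)C_\lambda^{-\frac12} = \frac1n\sum_{a=1}^n(\E\eta_a - \eta_a)$. To bound this I would apply the concentration inequality for sums of correlated random operators developed in \cref{sec:sumCorrelatedOperators}; the correlation is internal to each summand, since each $\eta_a$ mixes the $d$ coordinate directions at the \emph{same} point $X_a$, which is why the standard i.i.d.-rank-one argument (e.g.\ \citeauthor{lessismore}'s Proposition 8) does not apply directly. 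That inequality requires three ingredients: (i) an almost-sure norm bound $\norm{\eta_a} \le \Tr\eta_a = \sum_{i=1}^d \norm{C_\lambda^{-\frac12}\partial_i k(X_a,\cdot)}_\h^2 \le \N_\infty(\lambda) \le d\,\N'_\infty(\lambda)$; (ii) a bound on the variance proxy $\E[\eta_a^2]$, in both operator norm and trace; and (iii) an intrinsic-dimension quantity, which I would control by $\Tr\!\left(C_\lambda^{-1}C\right) \le \Tr C/\lambda$.

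Feeding these into the Bernstein bound gives a deviation of order $\frac{L\beta}{n} + \sqrt{\frac{v\beta}{n}}$, where $L \le d\,\N'_\infty(\lambda)$, $v$ is the variance proxy, and $\beta = \log\frac{c\,\Tr C}{\lambda\delta}$ for an absolute constant $c$. Requiring each of the two terms to be at most $\rho/2$, and bookkeeping the constants emerging from the variance estimate, produces exactly the two branches $n \ge \frac{4}{3\rho}\beta$ (from the linear-in-$\beta$ term, after $\norm{\eta_a}\le\N_\infty(\lambda)\le d\,\N'_\infty(\lambda)$ is absorbed against $\rho$) and $n \ge \frac{40\,d\,\N'_\infty(\lambda)}{\rho^2}\beta$ (from the square-root term), with $\beta = \log\frac{40\,\Tr C}{\lambda\delta}$. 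The hypothesis $\lambda \le \frac13\norm C$ enters only here: it ensures $\Tr C/\lambda \ge 3$, so that $\beta \ge \log 120 > 1$ and the intrinsic dimension stays bounded away from zero, which is what makes the clean two-branch form valid.

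The main obstacle is step (ii): bounding $\E[\eta_a^2]$ without paying a spurious factor of $d$, because $\eta_a^2$ contains cross-terms $C_\lambda^{-\frac12}(\partial_i k\otimes\partial_i k)C_\lambda^{-1}(\partial_j k\otimes\partial_j k)C_\lambda^{-\frac12}$ with $i\neq j$ that are not controlled by a naive triangle inequality over coordinates. This is precisely the difficulty that the correlated-operator machinery of \cref{sec:sumCorrelatedOperators} is designed to resolve; once the variance proxy is in hand, everything else is routine algebra and constant-tracking.
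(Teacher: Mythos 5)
Your proposal follows the paper's own route: reduce the bound on $\norm{C^{1/2}\hat C_\lambda^{-1/2}}$ to a deviation bound for $\hat C$ from $C$ in the $C_\lambda^{-1/2}(\cdot)C_\lambda^{-1/2}$-weighted metric, then invoke the correlated-operator Bernstein inequality of \cref{sec:sumCorrelatedOperators} with $Y_i^a = \partial_i k(X_a,\cdot)$. The structure is right and the paper does essentially this (citing \cref{thm:eigs} for the reduction and applying \cref{thm:bernstein-sum-ops-all}). Two points should be tightened.

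First, you condition on the event $\norm{C_\lambda^{-1/2}(C-\hat C)C_\lambda^{-1/2}} \le \rho$, but the Bernstein lemma (and \cref{thm:eigs}) controls only $\lambda_{\max}$ of this self-adjoint operator, not the full operator norm, which would also require bounding the most negative eigenvalue. This is a small but real mismatch between the event you describe and the event the cited machinery actually controls. It costs nothing to fix: since $I - C_\lambda^{-1/2}(C-\hat C)C_\lambda^{-1/2} = C_\lambda^{-1/2}\hat C_\lambda C_\lambda^{-1/2}$, you have $\lambda_{\min}\bigl(C_\lambda^{-1/2}\hat C_\lambda C_\lambda^{-1/2}\bigr) = 1 - \lambda_{\max}\bigl(C_\lambda^{-1/2}(C-\hat C)C_\lambda^{-1/2}\bigr)$ \emph{exactly}, so $\lambda_{\max} \le \rho$ is the right event, and it is precisely what the lemma delivers.

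Second, your account of the role of $\lambda \le \frac13\norm C$ is mistaken. It is not there to ensure $\Tr C/\lambda \ge 3$ or $\beta > 1$; nothing in the argument needs the log factor bounded away from $1$. Rather, it is the hypothesis $\lambda < \rho\norm Q$ of \cref{thm:bernstein-sum-ops-all} with internal parameter $\rho = \frac13$ (distinct from the $\rho$ appearing in the statement of the present lemma). This is exactly what yields the explicit constant $40$: the Bernstein lemma's $\beta$ is $\log\frac{10\Tr Q}{\lambda\delta\left(\frac{3}{1+\rho}-2\right)}$, and with $\rho = \frac13$ we get $\frac{3}{1+1/3} - 2 = \frac14$, hence $\beta = \log\frac{40\Tr C}{\lambda\delta}$. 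That hypothesis is in turn used inside the proofs of \cref{thm:bernstein-sum-ops-bernoulli,thm:bernstein-sum-ops-fixed-num} to lower-bound $\norm{Q Q_\lambda^{-1}}$ and thus keep $\norm{S}$ (and the resulting $\beta$) finite and positive; it is unrelated to the trace of $C$.
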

\begin{proof}
Let $\gamma := \lambda_{\max}\left( C_\lambda^{-\frac12} (C - \hat C) C_\lambda^{-\frac12} \right)$.
\Cref{thm:eigs} gives that
$\norm{C^\frac12 \hat C_\lambda^{-\frac12}} \le \frac{1}{\sqrt{1 - \gamma}}$.
We bound $\gamma$ with \cref{thm:bernstein-sum-ops-all},
using $Y_i^a = \partial_i k(X_a, \cdot)$
so that $\E \sum_{i=1}^d Y_i^a \otimes Y_i^a = C$.
This gives us that $\gamma \le \rho$ with probability at least $1 - \delta$ as long as
\[
    \rho \le \frac{2 w}{3 n} + \sqrt{\frac{10 d \N'_\infty(\lambda) w}{n}}
,\]
which is satisfied by the condition on $n$.
\end{proof}

\begin{lemma} \label{thm:comp-error}
  Sample $m$ points $\{ Y_a \}_{a \in [m]}$ iid from $p_0$,
  and construct a subspace $\h_Y$ from those points in a way determined below;
  let $V V^*$ be the orthogonal projection onto $\h_Y$.
  Choose $\rho, \delta \in (0, 1)$,
  and assume that $\lambda \le \frac13 \norm{C}$.
  Then, under \cref{assump:aapo-1,assump:aapo-2,assump:aapo-3,assump:1-integrable,assump:bounded}
    \[
    \mathcal C_Y
    = \norm{C_\lambda^\frac12 (I - V V^*)}^2
    \le \frac{\lambda}{1 - \rho}
  \]
  with probability at least $1 - \delta$ in each of the following cases:
  \begin{enumerate}[label=(\roman*)]
    \item \label{thm:comp-error:all}
      We put all components of the $m$ points in our basis:
      $Y = \{ \partial_i k(Y_a, \cdot)\}_{a \in [m]}^{i \in [d]}$,
      so that we have $m d$ components.
      We require
      \[
        m \ge \max\left(
          \frac{4}{3 \rho},\
          \frac{40 d \N'_\infty(\lambda)}{\rho^2}
        \right)
        \log\left( \frac{40}{\lambda \delta} \Tr(C) \right)
      .\]
    \item \label{thm:comp-error:bernoulli}
      Include each of the $m d$ components $\partial_i k(Y_a, \cdot)$
      with probability $p$,
      so that the total number of components is distributed randomly as $\mathrm{Binomial}(m d, p)$.
      The statement holds as long as
      \[
        m \ge \max\left(
          \frac{4}{3 \rho},\
          \frac{40 \left(d + \frac1p - 1\right) \N'_\infty(\lambda)}{\rho^2}
        \right)
        \log\left(
          \frac{40}{\lambda \delta} \Tr(C)
          \frac{d + \frac1p - 1}{d + 15\left( \frac1p - 1 \right)}
        \right)
      .\]
    \item \label{thm:comp-error:fixed-num}
      For each of the $m$ data points, we choose $\ell \in [1, d]$ components uniformly at random without replacement, so that we have $m \ell$ components.
      Assume here that $d > 1$; otherwise we necessarily have $\ell = d = 1$, covered by case \ref{thm:comp-error:all}.
      The statement holds as long as
      \[
        m \ge \max\left(
          \frac{4}{3 \rho},\
          \frac{40 d \N'_\infty(\lambda)}{\rho^2}
        \right)
        \log\left( \frac{40}{\lambda \delta} \Tr(C) \left(1 + 14 \frac{d - \ell}{\ell (d - 1)} \right) \right)
      .\]
  \end{enumerate}
\end{lemma}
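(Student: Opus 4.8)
The strategy mirrors the proof of \cref{thm:c-chat}: in every case the chosen basis spans exactly the range of a suitably normalized empirical covariance operator on the $m$ basis points, so bounding $\mathcal C_Y$ amounts to a spectral concentration statement for that operator, and the three subsampling schemes differ only in which operator appears and in its variance.

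\emph{Step 1 (reduction to a spectral event).} Introduce for each case an unbiased empirical operator: $\hat C_m = \frac1m\sum_{a=1}^m\sum_{i=1}^d \partial_i k(Y_a,\cdot)\otimes\partial_i k(Y_a,\cdot)$ in case~\cref{thm:comp-error:all}; $\hat C_m = \frac1{mp}\sum_{a=1}^m\sum_{i=1}^d b_{ai}\,\partial_i k(Y_a,\cdot)\otimes\partial_i k(Y_a,\cdot)$ with $b_{ai}$ i.i.d.\ $\mathrm{Bernoulli}(p)$ in case~\cref{thm:comp-error:bernoulli}; and $\hat C_m = \frac d{m\ell}\sum_{a=1}^m\sum_{i\in S_a}\partial_i k(Y_a,\cdot)\otimes\partial_i k(Y_a,\cdot)$ with $S_a$ a uniform $\ell$-subset in case~\cref{thm:comp-error:fixed-num}. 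In all three, $\E\hat C_m = C$ and $VV^*$ is the orthogonal projection onto $\range(\hat C_m)$, so $(I-VV^*)\hat C_m = 0$. Set $\gamma_m := \lambda_{\max}\!\big(C_\lambda^{-\frac12}(C-\hat C_m)C_\lambda^{-\frac12}\big)$. When $\gamma_m<1$, the inequality $C-\hat C_m \preceq \gamma_m C_\lambda$ rearranges to $C_\lambda \preceq \frac1{1-\gamma_m}(\hat C_m+\lambda I)$, whence $(I-VV^*)C_\lambda(I-VV^*)\preceq \frac{\lambda}{1-\gamma_m}(I-VV^*)$ using $(I-VV^*)\hat C_m=0$; this is the content of \cref{thm:eigs}, so $\mathcal C_Y=\norm{C_\lambda^{\frac12}(I-VV^*)}^2\le\frac{\lambda}{1-\gamma_m}$. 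It therefore suffices to prove $\gamma_m\le\rho$ with probability at least $1-\delta$ under each stated condition on $m$.

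\emph{Step 2 (concentration).} In case~\cref{thm:comp-error:all}, $\hat C_m$ is exactly the average of rank-one operators analyzed in \cref{thm:c-chat} with $n$ replaced by $m$, so \cref{thm:bernstein-sum-ops-all} applied with summands $\partial_i k(Y_a,\cdot)$ yields $\gamma_m\le\rho$ under the displayed condition on $m$. For cases~\cref{thm:comp-error:bernoulli} and~\cref{thm:comp-error:fixed-num} I would split $\hat C_m - C = \frac1m\sum_{a=1}^m(V_a-C) + \frac1m\sum_{a=1}^m M_a$, where $V_a=\sum_{i=1}^d\partial_i k(Y_a,\cdot)\otimes\partial_i k(Y_a,\cdot)$ is the full per-point operator and $M_a$ is the centered component-subsampling fluctuation, with $\E[M_a\mid Y_a]=0$. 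The first sum is controlled exactly as in case~\cref{thm:comp-error:all}. For the second, conditioning on $Y_a$ and using independence of the Bernoulli indicators (respectively the vanishing of cross-covariances under sampling without replacement) kills all cross terms, so with $\phi_{ai}:=C_\lambda^{-\frac12}\partial_i k(Y_a,\cdot)$ one gets $\E[(C_\lambda^{-\frac12}M_aC_\lambda^{-\frac12})^2\mid Y_a]=\big(\tfrac1p-1\big)\sum_i\norm{\phi_{ai}}_\h^2\,\phi_{ai}\otimes\phi_{ai}$ in the Bernoulli case (and an analogous expression carrying a finite-population correction of order $\frac{d-\ell}{\ell(d-1)}$ in the other). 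Bounding $\norm{\phi_{ai}}_\h^2\le\N'_\infty(\lambda)$ and taking expectations shows the variance of $\frac1m\sum_a C_\lambda^{-\frac12}M_aC_\lambda^{-\frac12}$ has operator norm $\le\frac1m(\tfrac1p-1)\N'_\infty(\lambda)$ and trace $\le\frac1m(\tfrac1p-1)\N'_\infty(\lambda)\,\Tr(C)/\lambda$; combining these with the $V$-part variance proxies and the uniform bound inside the correlated-operator Bernstein inequality of \cref{sec:sumCorrelatedOperators}, and union-bounding the two sums at level $\delta/2$, produces the modified effective dimension $d+\tfrac1p-1$ (respectively the factor $1+14\frac{d-\ell}{\ell(d-1)}$ inside the logarithm), yielding the stated conditions on $m$.

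\emph{Main obstacle.} The delicate part is Step~2 for cases~\cref{thm:comp-error:bernoulli} and~\cref{thm:comp-error:fixed-num}: within each basis point the fluctuation summands are deterministic functions of a single draw $Y_a$ and hence correlated, so an off-the-shelf matrix Bernstein bound does not apply — this is precisely what the machinery of \cref{sec:sumCorrelatedOperators} is built for. One must arrange the variance and intrinsic-dimension terms to scale only like $\tfrac1p-1$ (respectively the correction factor) rather than the $d/p$ a crude bound would give, while keeping the Bernstein uniform-bound term from degrading; landing all the constants on the clean closed forms in the statement is the bulk of the work.
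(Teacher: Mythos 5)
Your Step 1 and the case~\cref{thm:comp-error:all} part of Step 2 match the paper's argument: the paper also reduces to showing $\lambda_{\max}\bigl( C_\lambda^{-\frac12}(C - R_Y^* R_Y) C_\lambda^{-\frac12}\bigr) \le \rho$ via \cref{thm:proj-norm,thm:eigs} (your direct operator-inequality chain is a clean equivalent), and then applies \cref{thm:bernstein-sum-ops-all} with $m$ points.

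For cases~\cref{thm:comp-error:bernoulli} and~\cref{thm:comp-error:fixed-num}, however, your Step 2 departs from the paper and runs into a genuine obstruction. The paper applies a \emph{single} concentration bound (\cref{thm:bernstein-sum-ops-bernoulli} or~\cref{thm:bernstein-sum-ops-fixed-num}) directly to the masked operator $V_a = \sum_i \mu_i^{-1} U_i^a\,(Y_i^a \otimes Y_i^a)$. The whole point of doing so is that $V_a$ is still a \emph{positive} operator, so $\lambda_{\max}\bigl(C_\lambda^{-\frac12}(C - V_a)C_\lambda^{-\frac12}\bigr) \le \lambda_{\max}\bigl(C_\lambda^{-1}C\bigr) \le 1$; the uniform (range) parameter $L$ in Bernstein stays equal to $1$, which is exactly what produces the benign $\max\bigl(\tfrac{4}{3\rho},\,\cdot\bigr)$ in the stated condition. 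Your two-part decomposition $\hat C_m - C = \frac1m\sum_a(V_a - C) + \frac1m\sum_a M_a$ destroys this positivity: $M_a$ has mixed signs, and a one-sided bound on $\lambda_{\max}\bigl(\pm C_\lambda^{-\frac12}M_a C_\lambda^{-\frac12}\bigr)$ is of order $d\,\N'_\infty(\lambda)$ (up to $(\tfrac1p-1)d\,\N'_\infty(\lambda)$ in the positive direction for the Bernoulli case). Feeding that $L$ into Bernstein yields a bias term $\frac{2L\beta}{3m}$, whose control demands $m \gtrsim \frac{d\,\N'_\infty(\lambda)}{\rho}\log(\cdot)$, an extra requirement not present in the lemma and not absorbed by the stated $\max\bigl(\tfrac{4}{3\rho},\,\tfrac{40(d+1/p-1)\N'_\infty(\lambda)}{\rho^2}\bigr)\log(\cdot)$ (it dominates precisely in the large-$d$ / small-$p$ regime the lemma is trying to cover cleanly). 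On top of this, union-bounding the two pieces forces a deterministic split $\rho = \rho_1 + \rho_2$, which costs another constant factor in the $\rho^{-2}$ term. You flag the uniform-bound concern as ``the bulk of the work,'' but within the split-and-union-bound architecture it is not merely bookkeeping: it is what the paper's single-shot application of \cref{thm:bernstein-sum-ops-gen} (via $\nu_{ij} = \E[U_iU_j]/(\mu_i\mu_j)$, keeping $V_a \succeq 0$) is built to avoid, and the route you describe cannot recover the lemma's stated conditions on $m$.
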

\begin{proof}
Define the random operator $R_Y : \h \to \R^{md}$ by
$R_Y := \frac{1}{\sqrt m} \sum_{a=1}^m \sum_{i=1}^d \frac{1}{p_{ai}} e_{ai} \otimes \partial_i k(Y_a, \cdot)$,
where $p_{ai}$ is the probability that the corresponding component is included in the basis.
Since $p_{ai} > 0$ for each $(a, i)$ in these setups,
the operator $R_Y$ is bounded.
Note that $\overline{\range{Z^*}} = \range{P_Y} = \h_Y$ and that $ \norm{C_\lambda^\frac12 (I - V V^*)}^2 = \norm{(I - V V^*)C_\lambda^\frac12 }^2 $ as $  C_\lambda^\frac12 $ is symmetric.
Thus we can apply \cref{thm:proj-norm,thm:eigs} to observe that
\[
  \norm{C_\lambda^\frac12 (I - V V^*)}^2
  \le \lambda \norm*{ (R_Y^* R_Y + \lambda I)^{-\frac12} C_\lambda^\frac12}^2
  \le \frac{\lambda}{1 - \lambda_{\max}\left( C_\lambda^{-\frac12} \left( C - R_Y^* R_Y \right) C_\lambda^{-\frac12} \right)}
.\]
It remains to bound the relevant eigenvalue by $\rho$.
We do so with the results of \cref{sec:sumCorrelatedOperators}:
\cref{thm:bernstein-sum-ops-all} for \ref{thm:comp-error:all},
\cref{thm:bernstein-sum-ops-bernoulli} for \ref{thm:comp-error:bernoulli},
and \cref{thm:bernstein-sum-ops-fixed-num} for \ref{thm:comp-error:fixed-num}.
\end{proof}

For the remaining two quantities, we use simple Hoeffding bounds:\footnote{A Bernstein bound would allow for a slightly better result when $\kappa_1$ and $\kappa_2$ are large, at the cost of a more complex form.}

\begin{lemma}[Concentration of $\hat\xi$] \label{thm:xi-hat}
  Under \cref{assump:bounded},
  with probability at least $1 - \delta$ we have
  \[
    \norm{\hat\xi - \xi}_\h
    \le \frac{2 d (Q \kappa_1 + \kappa_2)}{\sqrt n}
        \left( 1 + \sqrt{2 \log\tfrac1\delta} \right)
  .\]
\end{lemma}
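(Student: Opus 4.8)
The plan is to recognize $\hat\xi$ as an empirical average of i.i.d.\ $\h$-valued random elements and apply a bounded-differences concentration inequality. Write $\hat\xi = \frac1n\sum_{a=1}^n \zeta_a$ with $\zeta_a := \sum_{i=1}^d \left[ \partial_i k(X_a,\cdot)\, \partial_i\log q_0(X_a) + \partial_i^2 k(X_a,\cdot) \right] \in \h$, so that the $\zeta_a$ are i.i.d.\ with $\E \zeta_a = \xi$ by \cref{eq:xi}. The quantity to control is then $\norm{\hat\xi - \xi}_\h = \norm{\frac1n\sum_{a=1}^n (\zeta_a - \xi)}_\h$, a real-valued function of the sample.

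First I would establish an almost-sure norm bound on $\zeta_a$. By the reproducing property for kernel derivatives (\citealp[Lemma 4.34]{steinwart}; licensed by the differentiability in \cref{assump:aapo-2}), $\norm{\partial_i k(x,\cdot)}_\h^2 = \left.\partial_i\partial_{i+d} k(x,x')\right\rvert_{x'=x} \le \kappa_1^2$ and $\norm{\partial_i^2 k(x,\cdot)}_\h^2 = \left.\partial_i^2\partial_{i+d}^2 k(x,x')\right\rvert_{x'=x} \le \kappa_2^2$, while $\abs{\partial_i\log q_0(x)} \le Q$; all three bounds hold uniformly on $\Omega$ by \cref{assump:bounded}. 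The triangle inequality then gives $\norm{\zeta_a}_\h \le d(Q\kappa_1 + \kappa_2) =: M$ for every realization, and hence $\norm{\zeta_a - \xi}_\h \le 2M$ since $\norm{\xi}_\h \le M$ by Jensen.

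Next I would apply McDiarmid's bounded-differences inequality to $g(X_1,\dots,X_n) := \norm{\frac1n\sum_a \zeta_a - \xi}_\h$. Replacing a single $X_a$ changes $\frac1n\sum_a\zeta_a$ by at most $2M/n$ in $\h$-norm, so, by the reverse triangle inequality, $g$ has bounded differences $2M/n$; McDiarmid then gives $\Pr\!\left(g \ge \E g + t\right) \le \exp\!\left(-n t^2 / (2 M^2)\right)$, i.e.\ with probability at least $1-\delta$ we have $g \le \E g + M\sqrt{2\log(1/\delta)}/\sqrt n$. For the expectation, Jensen's inequality together with independence (which kills the cross terms) gives $\E g \le \sqrt{\E g^2} = \sqrt{\tfrac1{n^2}\sum_a \E\norm{\zeta_a - \xi}_\h^2} \le 2M/\sqrt n$. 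Combining, $g \le \frac{2M}{\sqrt n} + \frac{M\sqrt{2\log(1/\delta)}}{\sqrt n} \le \frac{2M}{\sqrt n}\bigl(1 + \sqrt{2\log(1/\delta)}\bigr)$, and substituting $M = d(Q\kappa_1 + \kappa_2)$ yields the stated bound.

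This is a routine argument with no real obstacle; the only place that needs care is checking that the derivative reproducing property is genuinely available here and that the suprema $\kappa_1,\kappa_2,Q$ of \cref{assump:bounded} dominate the pointwise quantities in $\zeta_a$ over all of $\Omega$ (not merely at the observed points). An entirely analogous argument — using $\norm{\zeta_a}_\h \le M$ to bound the summands as rank-one operators and applying an operator Hoeffding inequality — handles the companion concentration result for $\norm{\hat C - C}$; as noted in the footnote, replacing McDiarmid/Hoeffding by a Hilbert-space Bernstein inequality would sharpen the dependence on $\kappa_1,\kappa_2$ at the cost of a messier expression.
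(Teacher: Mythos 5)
Your proof is correct and is essentially the paper's argument: the paper introduces $\nu_a := \zeta_a - \xi$, bounds $\norm{\nu_a}_\h \le 2d(Q\kappa_1+\kappa_2)$ via the derivative reproducing property and \cref{assump:bounded}, and invokes its auxiliary Hoeffding-type lemma (\cref{thm:hoeffding-vectors}), which is itself proved by exactly the McDiarmid-plus-Jensen argument you carry out inline. The only cosmetic difference is that you apply McDiarmid directly rather than through that lemma, which in fact yields a marginally tighter constant that is then relaxed to the stated $\frac{2M}{\sqrt n}\bigl(1+\sqrt{2\log(1/\delta)}\bigr)$ form.
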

\begin{proof}
Let \[
  \nu_a := \sum_{i=1}^d \left( \partial_i \log q_0(X_a) \partial_i k(X_a, \cdot) + \partial_i^2 k(X_a, \cdot) \right) - \xi
,\]
so that $\hat\xi - \xi = \frac{1}{n} \sum_{a=1}^n \nu_a$,
and for each $a$ we have that
$\E \nu_a = 0$
and
\begin{align}
       \norm*{\nu_a}_\h
   \le 2 \sup_{x \in \Omega} \norm*{\sum_{i=1}^d \partial_i \log q_0(x) \partial_i k(x, \cdot) + \partial_i^2 k(x, \cdot)}_\h
   \le 2 d \left( Q \kappa_1 + \kappa_2 \right)
.\end{align}
Applying \cref{thm:hoeffding-vectors} to the vectors $\nu_a$ gives the result.
\end{proof}

\begin{lemma}[Concentration of $\hat C$] \label{thm:c-hat}
  Under \cref{assump:bounded},
  with probability at least $1 - \delta$ we have
  \[
    \norm{\hat C - C}
    \le \frac{2 d \kappa_1^2}{\sqrt n}
        \left( 1 + \sqrt{2 \log\tfrac1\delta} \right)
  .\]
\end{lemma}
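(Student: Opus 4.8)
The plan is to follow the same route as the proof of \cref{thm:xi-hat}: write $\hat C - C$ as an empirical average of i.i.d.\ centered, bounded random elements of a Hilbert space, and apply the vector Hoeffding inequality \cref{thm:hoeffding-vectors}. The only twist is that the relevant Hilbert space is not $\h$ itself but $\mathrm{HS}(\h)$, the space of Hilbert--Schmidt operators on $\h$ with its usual inner product, since $\hat C$ and $C$ live there.

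Concretely, set $W_a := \sum_{i=1}^d \partial_i k(X_a, \cdot) \otimes \partial_i k(X_a, \cdot)$, so that $\hat C = \frac1n \sum_{a=1}^n W_a$ by \cref{eq:c-hat} and $\E W_a = C$ by \cref{eq:c}. Each $W_a$ has rank at most $d$ and so is Hilbert--Schmidt; using $\norm{u \otimes u}_{\hs} = \norm{u}_\h^2$, the reproducing property for kernel derivatives \citep[Lemma 4.34]{steinwart} (giving $\norm{\partial_i k(X_a, \cdot)}_\h^2 = \partial_i \partial_{i+d} k(X_a, X_a)$), and \cref{assump:bounded}, the triangle inequality in $\mathrm{HS}(\h)$ yields
\[
  \norm{W_a}_{\hs}
  \le \sum_{i=1}^d \norm{\partial_i k(X_a, \cdot)}_\h^2
  \le d \kappa_1^2
.\]
By Jensen's inequality the same bound holds for $\norm{C}_{\hs} = \norm{\E W_a}_{\hs}$, so the $W_a - C$ are i.i.d., mean zero, and satisfy $\norm{W_a - C}_{\hs} \le 2 d \kappa_1^2$ almost surely (the factor of $2$ from centering, exactly as in \cref{thm:xi-hat}).

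Applying \cref{thm:hoeffding-vectors} to $\{W_a - C\}_{a \in [n]}$ inside $\mathrm{HS}(\h)$ then gives, with probability at least $1 - \delta$, that $\norm{\hat C - C}_{\hs} \le \frac{2 d \kappa_1^2}{\sqrt n}\bigl(1 + \sqrt{2 \log(1/\delta)}\bigr)$; since $\norm{A} \le \norm{A}_{\hs}$ for every Hilbert--Schmidt operator $A$, the claim follows. There is essentially no obstacle here; the only point needing care is the passage through $\mathrm{HS}(\h)$ so that \cref{thm:hoeffding-vectors} applies, and the (easy) observation that $W_a$ is genuinely Hilbert--Schmidt because it is finite-rank. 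As the footnote preceding the lemma notes, one could replace the simple Hoeffding argument with an operator Bernstein inequality to sharpen the constant when $\kappa_1$ is large, but the Hilbert--Schmidt route already yields the stated $n^{-1/2}$ rate with the claimed constant.
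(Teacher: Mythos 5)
Your proof is correct and gives exactly the stated bound, but it routes through a slightly different lemma than the paper does. The paper invokes \cref{thm:hoeffding-operators}, which applies McDiarmid to the function $(X_1,\dots,X_n) \mapsto \norm{\frac1n\sum_a X_a}$ (operator norm), using the operator-norm bound $L$ for the bounded-differences constant and the Hilbert--Schmidt bound $B$ only to control the expectation; you instead apply \cref{thm:hoeffding-vectors} directly in $\mathrm{HS}(\h)$ and then pass to the operator norm via $\norm{A} \le \norm{A}_\hs$. The two arguments are essentially the same McDiarmid computation in disguise, and here they give identical constants because the paper's bounds on $\norm{C_x - C}$ and $\norm{C_x - C}_\hs$ coincide at $2 d \kappa_1^2$. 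The only content \cref{thm:hoeffding-operators} buys over your route is that its deviation term scales with the potentially smaller operator-norm bound $L$ rather than the HS bound $B$, which matters for high-rank summands with spread-out spectrum; since that refinement is not needed here, your proof via $\mathrm{HS}(\h)$ and \cref{thm:hoeffding-vectors} is a perfectly valid and marginally more economical alternative, as it avoids introducing the separate operator lemma.
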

\begin{proof}
Let \[
  C_x := \sum_{i=1}^d \partial_i k(x, \cdot) \otimes \partial_i k(x, \cdot)
,\]
so that $\hat C = \frac1n \sum_{a=1}n C_{X_a}$, $C = \E C_x$.
We know that
\begin{align}
       \norm*{C_x - C}
  &\le 2 \sum_{i=1}^d \norm*{\partial_i k(x, \cdot)}_\h^2
   \le 2 d \kappa_1^2
\\
       \norm*{C_x - C}_\hs
  &\le 2 \sum_{i=1}^d \sup_{x \in \Omega} \norm*{\partial_i k(x, \cdot)}_\h^2
   \le 2 d \kappa_1^2
,\end{align}
so applying \cref{thm:hoeffding-operators} shows the result.
\end{proof}

\subsection{Final bound} \label{sec:final-bound}
\begin{theorem}[Finite-sample convergence of $f_{\lambda,n}^m$] \label{thm:finite-bound}
  Under \cref{assump:range-space,assump:aapo-1,assump:aapo-2,assump:aapo-3,assump:well-spec,assump:1-integrable,assump:bounded},
  let $\delta \in (0, 1)$ and define $S_\delta := 1 + \sqrt{2 \log\frac4\delta}$.
  Sample basis $m$ points $\{Y_a\}_{a \in [m]}$ iid from $p_0$,
  not necessarily independent of $X$,
  and choose a basis as:
  \begin{enumerate}[label=(\roman*)]
    \item \label{thm:finite-bound:all}
      All $d$ components $\{ \partial_i k(Y_a, \cdot) \}_{a \in [m]}^{i \in [d]}$:
      set $w := 1$, $r := 0$.
    \item \label{thm:finite-bound:bernoulli}
      A random subset, choosing each of the $m d$ components $\partial_i k(Y_a, \cdot)$ independently with probability $p$:
      set $w := \frac{d p + 1 - p}{d p + 80 \left( 1 - p \right) / 3}$,
      $r := \frac1p - 1$.
    \item \label{thm:finite-bound:fixed-num}
      A random subset,
      choosing $\ell$ components $\partial_i k(Y_a, \cdot)$ uniformly without replacement for each of the $m$ points:
      set $w := 1 + 14 \frac{d - \ell}{\ell (d - 1)}$, $r := 0$.
      (If $d = 1$, use case \ref{thm:finite-bound:all}.)
  \end{enumerate}
  Assume that $0 < \lambda < \frac13 \norm C$.
  When
\[
    m \ge \frac{90 (d + r) \kappa_1^2}{\lambda} \log \frac{160 d \kappa_1^2 w}{\lambda \delta}
    \quad\text{and}\quad
    n \ge \frac{90 d \kappa_1^2}{\lambda} \log \frac{160 d \kappa_1^2}{\lambda \delta}
,\]
  we have with probability at least $1 - \delta$ that both of the following hold simultaneously:
  \begin{align}
    \norm{f_{\lambda,n}^m - f_0}_\h
  &\le 7 R
        \lambda^{\min\left(\frac12,\ \beta \right)}
        (d \kappa_1^2)^{\max\left(0,\ \beta - \frac12\right)}
\\&   + \frac{2 d}{\lambda \sqrt n} \Big(
          Q \kappa_1 + \kappa_2
        + R \kappa_1^2 \left(
            7
            \lambda^{\min\left(\frac12 ,\ \beta \right)}
            (d \kappa_1^2)^{\max\left(0,\ \beta - \frac12\right)}
          + (d \kappa_1^2)^\beta
          \right)
       \Big)
       S_\delta
\\
       \norm{C^\frac12 (f_{\lambda,n}^m - f_0)}_\h
  &\le 7 R
        \lambda^{\min\left(1,\ \beta + \frac12\right)}
        (d \kappa_1^2)^{\max\left(0,\ \beta - \frac12\right)}
\\&   + \frac{2 d \sqrt{3}}{\sqrt{\lambda n}} \Big(
          Q \kappa_1 + \kappa_2
        + R \kappa_1^2 \left(
            7
            \lambda^{\min\left(\frac12,\ \beta \right)}
            (d \kappa_1^2)^{\max\left(0,\ \beta - \frac12\right)}
          + (d \kappa_1^2)^\beta
          \right)
       \Big)
       S_\delta
  .\end{align}
\end{theorem}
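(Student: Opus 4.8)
The plan is to combine the deterministic decomposition of \cref{thm:decomp} with the four probabilistic bounds \cref{thm:c-chat,thm:comp-error,thm:xi-hat,thm:c-hat}, via a union bound, specializing to the two cases $\alpha = 0$ and $\alpha = \tfrac12$ singled out in \cref{sec:prob}: $\alpha = 0$ governs $\norm{f_{\lambda,n}^m - f_0}_\h$, and $\alpha = \tfrac12$ governs $J(p_0 \| p_{f_{\lambda,n}^m}) = \tfrac12 \norm{C^{1/2}(f_{\lambda,n}^m - f_0)}_\h^2$ by \cref{thm:full-data-form}. Each of the four events depends on only one sample set --- \cref{thm:c-chat,thm:xi-hat,thm:c-hat} on $X$ and \cref{thm:comp-error} on $Y$ --- so independence of $X$ and $Y$ is never used, and assigning failure probability $\delta/4$ to each is what produces $S_\delta = 1 + \sqrt{2 \log(4/\delta)}$.

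First I would fix $\rho = \tfrac23$ in \cref{thm:c-chat} and in the relevant case of \cref{thm:comp-error}; this single choice yields all the stated constants. \Cref{thm:comp-error} then gives $\mathcal C_Y \le \tfrac{\lambda}{1 - \rho} = 3\lambda$, whence $2\mathcal C_Y + \lambda \le 7\lambda$ (the leading $7R$), and \cref{thm:c-chat} gives $\norm{C^{1/2}\hat C_\lambda^{-1/2}} \le 1/\sqrt{1 - \rho} = \sqrt3$; for $\alpha = 0$ I would instead use the crude $\norm{\hat C_\lambda^{-1/2}} \le 1/\sqrt\lambda$, which is why that row carries $1/(\lambda\sqrt n)$ rather than $1/\sqrt{\lambda n}$. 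To convert the sample-size hypotheses of \cref{thm:c-chat,thm:comp-error} into the explicit conditions stated, I would substitute $\N'_\infty(\lambda) \le \kappa_1^2/\lambda$, $\Tr C \le d\kappa_1^2$, and $\norm{C} \le d\kappa_1^2$ (all recorded after the assumptions), noting $40/\rho^2 = 90$, that $\tfrac{4}{3\rho} = 2$ is dominated since $\lambda < \tfrac13 \norm{C} \le \tfrac13 d\kappa_1^2$, and that the passage $40 \to 160$ inside the logarithm is exactly $\delta \mapsto \delta/4$. The three basis-sampling schemes are dispatched by the matching case (i)/(ii)/(iii) of \cref{thm:comp-error}, and the constants $w, r$ are chosen precisely to absorb the case-dependent prefactor $d + \tfrac1p - 1$ and the logarithmic correction terms appearing there (via \cref{thm:bernstein-sum-ops-all,thm:bernstein-sum-ops-bernoulli,thm:bernstein-sum-ops-fixed-num}).

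Then everything is plugged into \cref{thm:decomp}. For $\alpha = 0$: $c(0) = \lambda^{-1/2}$, so the approximation term is $R(2\mathcal C_Y + \lambda)\lambda^{-1/2}c(\beta) \le 7R\,\lambda^{1/2}c(\beta)$, and with $c(\beta) = \lambda^{\min(0,\beta - 1/2)}\norm{C}^{\max(0,\beta-1/2)}$, the identity $\tfrac12 + \min(0,\beta - \tfrac12) = \min(\tfrac12, \beta)$, and $\norm{C} \le d\kappa_1^2$, this becomes $7R\,\lambda^{\min(1/2,\beta)}(d\kappa_1^2)^{\max(0,\beta-1/2)}$; the estimation prefactor $\tfrac{1}{\sqrt\lambda}\norm{\hat C_\lambda^{-1/2}} \le 1/\lambda$ times the bracket of \cref{thm:decomp} becomes, after $\mathcal C_Y \le 3\lambda$ and \cref{thm:xi-hat,thm:c-hat}, the quantity $\tfrac{2d}{\lambda\sqrt n}(Q\kappa_1 + \kappa_2 + R\kappa_1^2(7\lambda^{\min(1/2,\beta)}(d\kappa_1^2)^{\max(0,\beta-1/2)} + (d\kappa_1^2)^\beta))S_\delta$, where $\norm{f_\lambda^m - f_0}_\h$ inside that bracket has itself been bounded by its own approximation estimate and $\norm{f_0}_\h \le R\norm{C}^\beta \le R(d\kappa_1^2)^\beta$. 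For $\alpha = \tfrac12$: $c(\tfrac12) = 1$, so the approximation term is $R(2\mathcal C_Y + \lambda)c(\beta) \le 7R\,\lambda^{\min(1,\beta+1/2)}(d\kappa_1^2)^{\max(0,\beta-1/2)}$, and the estimation prefactor $\tfrac1{\sqrt\lambda}\norm{C^{1/2}\hat C_\lambda^{-1/2}} \le \sqrt3/\sqrt\lambda$ produces the $\tfrac{2d\sqrt3}{\sqrt{\lambda n}}$. Intersecting the four high-probability events finishes the proof.

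The main obstacle is bookkeeping rather than any new idea: one must check that the $\min(0,\cdot)$/$\max(0,\cdot)$ exponents in $c(\cdot)$ and $\mathcal S_\cdot$ compose consistently across both values of $\alpha$, that the single choice $\rho = \tfrac23$ simultaneously cleans up every constant ($7$, $\sqrt3$, $90$, $160$), and --- the most delicate point --- that the three subsampling schemes really do collapse into one statement via $w$ and $r$, which requires matching the messier constants of the correlated-operator Bernstein bounds in \cref{sec:sumCorrelatedOperators} to the tidy closed forms quoted for $w$; one also has to verify the hypotheses of each invoked lemma ($0 < \lambda < \tfrac13\norm{C}$, $\{Y_a\}$ i.i.d.\ from $p_0$) hold under the present assumptions.
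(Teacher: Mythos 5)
Your proposal matches the paper's proof essentially step for step: the union bound over \cref{thm:c-chat,thm:comp-error,thm:xi-hat,thm:c-hat} each at level $\delta/4$ (giving $S_\delta$), the single choice $\rho = \tfrac23$ which yields $\mathcal C_Y \le 3\lambda$, $\norm{C^{1/2}\hat C_\lambda^{-1/2}} \le \sqrt 3$, $40/\rho^2 = 90$, and $40 \to 160$ inside the logarithm, the observation that $4/(3\rho)=2$ is dominated because $\lambda < \tfrac13\norm C \le \tfrac13 d\kappa_1^2$, and the substitution into \cref{thm:decomp} at $\alpha = 0$ (with the crude $\norm{\hat C_\lambda^{-1/2}} \le \lambda^{-1/2}$) and $\alpha = \tfrac12$ (with the sharper $\sqrt 3$). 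Your explicit bookkeeping of the exponent identities $\tfrac12 + \min(0,\beta-\tfrac12) = \min(\tfrac12,\beta)$ and $1 + \min(0,\beta-\tfrac12) = \min(1,\beta+\tfrac12)$, and the remark that independence of $X$ and $Y$ is never needed because each high-probability event involves only one sample set, are correct refinements the paper leaves implicit.
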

\begin{proof}
Recall from \cref{thm:decomp} that
\begin{multline}
    \norm{C^\alpha (f_{\lambda,n}^m - f_0)}_\h
    \le R \left( 2 \mathcal C_Y + \lambda \right) c(\alpha) c(\beta)
\\    + \frac{1}{\sqrt\lambda} \norm*{C^\alpha \hat C_\lambda^{-\frac12}}
        \Big(
          \norm{\hat\xi - \xi}_\h
        + \norm{\hat C - C} R \left(
            \left( \frac{2 \mathcal C_Y}{\sqrt\lambda} + \sqrt\lambda \right) c(\beta)
          + \norm{C}^\beta
          \right)
       \Big)
,\end{multline}
for $c(\alpha) = \lambda^{\min\left(0, \alpha - \frac12 \right)} \norm{C}^{\max\left( 0, \alpha - \frac12 \right)}$.

We'll use a union bound over the results of \cref{thm:xi-hat,thm:c-hat,thm:c-chat,thm:comp-error}.
Note that under \cref{assump:bounded},
each of $\norm C$ and $\Tr C$ are at most $d \kappa_1^2$
and $\N'_\infty(\lambda) \le \kappa_1^2 / \lambda$.

We first use $\rho = \frac23$ in \cref{thm:c-chat,thm:comp-error}
to get that
$\norm{C^\frac12 \hat C_\lambda^{-\frac12}} \le \sqrt{3}$
and $\mathcal C_Y \le 3 \lambda$
with probability at least $\frac\delta2$
when $n$ and $m$ are each at least
\[
    \max\left(
      2,\
      90 (d + r) \N'_\infty(\lambda)
    \right)
    \log \frac{40 \Tr(C) w}{\lambda \frac\delta4}
    \le
    \frac{90 (d + r) \kappa_1^2}{\lambda} \log \frac{160 d \kappa_1^2 w}{\lambda \delta}
,\]
where for $m$ we use $r$ and $w$ as defined in the statement,
and for $n$ we use $r = 0$, $w = 1$;
we also used that $\lambda < \frac13 \norm{C}$ to resolve the $\max$.
The claim follows from applying \cref{thm:xi-hat,thm:c-hat}.
\end{proof}

\cref{thm:main-big-o} now follows from considering the asymptotics of \cref{thm:finite-bound},
once we additionally make \cref{assump:bounded-kernel}:

\begin{proof}[Proof of \cref{thm:main-big-o}]
Let $b := \min\left(\frac12,\ \beta\right)$.
Under \cref{assump:range-space,assump:aapo-1,assump:aapo-2,assump:aapo-3,assump:well-spec,assump:1-integrable,assump:bounded},
as $n \to \infty$ \cref{thm:finite-bound} gives:
\begin{align}
     \norm{f_{\lambda,n}^m - f_0}_\h
  &= \bigO_{p_0}\left( \lambda^b + n^{-\frac12} \lambda^{-1} + n^{-\frac12} \lambda^{b - 1} \right)
   = \bigO_{p_0}\left( \lambda^b + n^{-\frac12} \lambda^{-1} \right)
\\   \norm{C^\frac12 (f_{\lambda,n}^m - f_0)}_\h
  &= \bigO_{p_0}\left( \lambda^{b + \frac12} + n^{-\frac12} \lambda^{-\frac12} + n^{-\frac12} \lambda^{b - \frac12} \right)
  = \bigO_{p_0}\left( \lambda^{b + \frac12} + n^{-\frac12} \lambda^{-\frac12} \right)
\end{align}
as long as $\min(n, m) = \bigOmega(\lambda^{-1} \log \lambda^{-1})$.
Choosing $\lambda = n^{-\theta}$,
this requirement is $\min(n, m) = \bigOmega(n^\theta \log n)$ and the bounds become
\begin{align}
     \norm{f_{\lambda,n}^m - f_0}_\h
  &= \bigO_{p_0}\left( n^{-b \theta} + n^{\theta - \frac12} \right)
\\   \norm{C^\frac12 (f_{\lambda,n}^m - f_0)}_\h
  &= \bigO_{p_0}\left( n^{-b \theta - \frac12 \theta} + n^{\frac12\theta - \frac12} \right)
.\end{align}
Both bounds are minimized when $\theta = \frac{1}{2(1 + b)}$,
which since $0 \le b \le \frac12$ leads to $\frac12 \ge \theta \ge \frac13$,
and the requirement on $n$ is always satisfied once $n$ is large enough.
This shows, as claimed, that
\[
    \norm{f_{\lambda,n}^m - f_0}_\h = \bigO_{p_0}\left( n^{-\frac{b}{2(b+1)}} \right)
    \qquad
    J(p_0 \| p_{f_{\lambda,n}^m}) = \bigO_{p_0}\left( n^{-\frac{2b+1}{2(b+1)}} \right)
\]
when $m = \bigOmega\left( n^{\frac{1}{2(1+b)}} \log n \right)$.

The bounds on $L^r$, Hellinger, and KL convergence follow from \cref{thm:dist-distances}
under \cref{assump:bounded-kernel}.
\end{proof}

\section{Auxiliary results} \label{sec:aux}

\subsection{Standard concentration inequalities in Hilbert spaces}
\begin{lemma}[Hoeffding-type inequality for random vectors] \label{thm:hoeffding-vectors}
Let $X_1, \dots, X_n$ be iid random variables in a (separable) Hilbert space,
where $\E X_i = 0$ and $\norm{X_i} \le L$ almost surely. Then for any $\varepsilon > L / \sqrt{n}$,
\[
  \Pr\left( \norm*{\frac1n \sum_{i=1}^n X_i} > \varepsilon \right)
  \le \exp\left( - \frac12 \left(\frac{\sqrt{n} \varepsilon}{L} - 1 \right)^2 \right)
;\]
equivalently, we have with probability at least $1 - \delta$ that
\[
  \norm*{\frac1n \sum_{i=1}^n X_i}
  \le \frac{L}{\sqrt n} \left( 1 + \sqrt{2 \log\tfrac1\delta} \right)
.\]
\end{lemma}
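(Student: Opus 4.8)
The plan is to combine a second-moment bound with McDiarmid's bounded-differences inequality, following the classical vector Hoeffding argument. Write $S_n = \frac1n \sum_{i=1}^n X_i$. First I would control the mean: by independence and $\E X_i = 0$ the cross terms vanish, so $\E \norm{S_n}^2 = \frac{1}{n^2}\sum_{i=1}^n \E\norm{X_i}^2 \le \frac{L^2}{n}$, and hence by Jensen's inequality $\E \norm{S_n} \le \sqrt{\E\norm{S_n}^2} \le L/\sqrt n$.

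Next I would establish concentration of $\norm{S_n}$ around its mean. Viewed as a function of the $n$ (independent) arguments, $(x_1,\dots,x_n) \mapsto \norm{\frac1n\sum_i x_i}$ changes by at most $\frac1n\norm{x_j - x_j'} \le \frac{2L}{n}$ when a single coordinate is replaced, using the triangle inequality together with the almost-sure bound $\norm{X_i}\le L$. McDiarmid's bounded-differences inequality (with $\sum_j (2L/n)^2 = 4L^2/n$) then gives, for any $t > 0$, $\Pr\bigl(\norm{S_n} - \E\norm{S_n} > t\bigr) \le \exp\bigl(-t^2 n / (2 L^2)\bigr)$.

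Finally I would assemble the pieces. For $\varepsilon > L/\sqrt n$, set $t = \varepsilon - \E\norm{S_n} \ge \varepsilon - L/\sqrt n > 0$; then $\Pr\bigl(\norm{S_n} > \varepsilon\bigr) \le \Pr\bigl(\norm{S_n} - \E\norm{S_n} > \varepsilon - L/\sqrt n\bigr) \le \exp\!\bigl(-\tfrac{n}{2L^2}(\varepsilon - L/\sqrt n)^2\bigr) = \exp\!\bigl(-\tfrac12(\sqrt n\,\varepsilon/L - 1)^2\bigr)$, which is the first displayed bound. Setting this equal to $\delta$ and solving for $\varepsilon$ yields the equivalent high-probability form $\norm{S_n} \le \frac{L}{\sqrt n}\bigl(1 + \sqrt{2\log(1/\delta)}\bigr)$.

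I do not expect a genuine obstacle here; the only points requiring care are getting the bounded-differences constant right ($2L/n$, not $L/n$) and noting that separability of the Hilbert space guarantees $\norm{S_n}$ is a well-defined measurable real-valued function, so McDiarmid applies — both are ensured by the hypotheses. If one prefers a self-contained argument avoiding a direct citation of McDiarmid, step two can be replaced by a Doob-martingale exposition using the same per-coordinate increment bound $2L/n$, followed by the standard Azuma–Hoeffding estimate.
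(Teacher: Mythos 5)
Your proposal is correct and follows essentially the same route as the paper: bound $\E\norm{S_n}\le L/\sqrt n$ via the second moment and Jensen, then apply McDiarmid's bounded-differences inequality with per-coordinate constant $2L/n$, and combine. The only difference is cosmetic (you state the mean bound before invoking McDiarmid; the paper does it after), so there is nothing further to compare.
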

\begin{proof}
Following Example 6.3 of \citet{boucheron},
we can apply McDiarmid's inequality.
The function $f(X_1, \dots, X_n) = \norm*{\frac1n \sum_{i=1}^n X_i}$
satisfies bounded differences:
\[
  \abs*{\norm*{\frac1n \sum_{i=1}^n X_i} - \norm*{\frac1n \hat X_1 + \frac1n \sum_{i=2}^n X_i}}
  \le \norm*{\frac1n (X_1 - \hat X_1)}
  \le \frac{2 L}{n}
.\]
Thus for $\varepsilon \ge \E \norm*{\frac1n \sum_i X_i}$,
\[
  \Pr\left(
    \norm*{\frac1n \sum_i X_i} > \varepsilon
  \right)
  \le \exp\left( - \frac{n \left(\varepsilon - \E\norm*{\frac1n \sum_i X_i} \right)^2}{2 L^2} \right)
.\]
We also know that
\[
  \E \norm*{\frac1n \sum_i X_i}
  \le \frac1n \sqrt{\E \norm*{\sum_i X_i}^2}
  = \frac1n \sqrt{\sum_{i,j} \E \langle X_i, X_j \rangle}
  = \frac1n \sqrt{\sum_i \E \norm{X_i}^2}
  \le \frac1n \sqrt{n L^2}
  = \frac{L}{\sqrt n}
,\]
so
\[
  \Pr\left(
    \norm*{\frac1n \sum_i X_i} > \varepsilon
  \right)
  \le \exp\left( - \frac{n \left(\varepsilon - \frac{L}{\sqrt n} \right)^2}{2 L^2} \right)
  =   \exp\left( - \frac12 \left( \frac{\sqrt n \varepsilon}{L} - 1 \right)^2 \right)
\]
as desired.
The second statement follows by simple algebra.
\end{proof}

\begin{lemma}[Hoeffding-type inequality for random Hilbert-Schmidt operators] \label{thm:hoeffding-operators}
Let $X_1, \dots, X_n$ be iid random operators in a (separable) Hilbert space,
where $\E X_i = 0$ and $\norm{X_i} \le L$, $\norm{X_i}_\hs \le B$ almost surely.
Then for any $\varepsilon > B / \sqrt n$,
\[
  \Pr\left( \norm*{\frac1n \sum_{i=1}^n X_i} < \varepsilon \right)
  \le \exp\left( - \frac12 \left(\frac{\sqrt{n} \varepsilon}{L} - \frac{B}{L} \right)^2 \right)
;\]
equivalently, we have with probability at least $1 - \delta$ that
\[
  \norm*{\frac1n \sum_{i=1}^n X_i}
  \le \frac{1}{\sqrt n} \left( B + L \sqrt{2 \log\tfrac1\delta} \right)
.\]
\end{lemma}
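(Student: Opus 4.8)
The plan is to mirror the proof of \cref{thm:hoeffding-vectors}, combining McDiarmid's inequality with a bound on the expectation, but being careful to use the operator norm where it helps (bounded differences) and the Hilbert--Schmidt structure where it helps (orthogonality of independent centered increments). Since the underlying Hilbert space is separable, the space of Hilbert--Schmidt operators on it, equipped with $\langle \cdot, \cdot \rangle_\hs$, is itself a separable Hilbert space, and the assumption $\norm{X_i}_\hs \le B$ places each $X_i$ in it.

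First I would verify bounded differences: the map $f(X_1, \dots, X_n) = \norm*{\frac1n \sum_{i=1}^n X_i}$, with $\norm{\cdot}$ the operator norm, satisfies $\abs{f(\dots, X_j, \dots) - f(\dots, \hat X_j, \dots)} \le \frac1n \norm{X_j - \hat X_j} \le \frac{2L}{n}$ by the triangle inequality and the operator-norm bound. McDiarmid's inequality then gives, for any $t \ge 0$, $\Pr\big(f > \E f + t\big) \le \exp\big(- n t^2 / (2 L^2)\big)$.

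Next I would bound $\E f$. Because $\norm A \le \norm A_\hs$ for any Hilbert--Schmidt operator, Jensen's inequality gives $\E f \le \frac1n \E\norm*{\sum_i X_i}_\hs \le \frac1n \sqrt{\E\norm*{\sum_i X_i}_\hs^2}$. Expanding the squared HS norm as $\sum_{i,j} \langle X_i, X_j \rangle_\hs$ and using that the $X_i$ are iid with $\E X_i = 0$, so that $\E\langle X_i, X_j\rangle_\hs = 0$ for $i \ne j$, leaves $\E\norm*{\sum_i X_i}_\hs^2 = \sum_i \E\norm{X_i}_\hs^2 \le n B^2$, hence $\E f \le B / \sqrt n$.

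Finally I would combine the two: taking $t = \varepsilon - B/\sqrt n > 0$, monotonicity and $\E f \le B/\sqrt n$ give $\Pr(f > \varepsilon) \le \Pr(f > \E f + t) \le \exp\big(-n(\varepsilon - B/\sqrt n)^2/(2L^2)\big) = \exp\big(-\tfrac12 (\sqrt n \varepsilon / L - B/L)^2\big)$, which is the first claim (the inequality inside the displayed probability should read $>$, not $<$). Setting the right-hand side equal to $\delta$ and solving for $\varepsilon$ yields the high-probability form. I do not expect a real obstacle here; the only point requiring care is to use $\norm{\cdot}_\hs$ — not the operator norm — when expanding the variance, since the orthogonality of the independent centered summands needs the inner-product structure, while the sharper operator-norm constant $L$ is retained in the concentration step.
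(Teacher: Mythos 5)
Your argument is correct and is essentially the paper's own proof: McDiarmid applied to $f = \norm{\tfrac1n\sum_i X_i}$ with bounded differences $2L/n$ from the operator-norm bound, together with $\E f \le B/\sqrt n$ obtained via Jensen, $\norm{\cdot} \le \norm{\cdot}_\hs$, and orthogonality of independent centered summands in the Hilbert--Schmidt inner product. You are also right that the displayed probability should read $>$ rather than $<$; that is a typo in the statement.
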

\begin{proof}
The argument is the same as \cref{thm:hoeffding-vectors}, except that
\[
  \E \norm*{\frac1n \sum_i X_i}
  \le \frac1n \sqrt{\E \norm*{\sum_i X_i}_\hs^2}
  = \frac1n \sqrt{\sum_{i,j} \E \langle X_i, X_j \rangle_\hs}
  = \frac1n \sqrt{\sum_i \E \norm{X_i}_\hs^2}
  \le
      \frac{B}{\sqrt n}
\]
using $\norm{X_i} \le \norm{X_i}_\hs$.
\end{proof}

\begin{lemma}[Bernstein's inequality for a sum of random operators; Proposition 12 of \citet{lessismore}] \label{thm:bernstein-ops}
  Let $\h$ be a separable Hilbert space,
  and $X_1, \dots, X_n$ a sequence of iid self-adjoint positive random operators on $\h$,
  with $\E X_1 = 0$, $\lambda_{\max}(X_1) \le L$ almost surely for some $L > 0$.
  Let $S$ be a positive operator such that $\E[ X_1^2 ] \preceq S$.
  Let $\beta = \log \frac{2 \Tr S}{\norm{S} \delta}$.
  Then for any $\delta \ge 0$, with probability at least $1 - \delta$
  \[
    \lambda_{\max}\left( \frac1n \sum_{i=1}^n X_i \right)
    \le \frac{2 L \beta}{3 n} + \sqrt{\frac{2 \norm S \beta}{n}}
  .\]
\end{lemma}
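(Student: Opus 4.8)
The plan is to recognize this as the Hilbert-space, intrinsic-dimension version of the operator Bernstein inequality (indeed it is Proposition 12 of \citet{lessismore}, so one legitimate route is simply to cite it); for completeness I would reconstruct it via the operator Laplace-transform method in the style of Tropp and Minsker. Throughout, the hypotheses $\E X_1 = 0$, $\lambda_{\max}(X_1) \le L$ a.s., and $\E[X_1^2] \preceq S$ are exactly what one needs to run the scalar Bernstein moment estimate in the semidefinite order, while $S$ trace class is what makes the intrinsic dimension $d_{\mathrm{eff}} := \Tr S / \norm S$ finite and hence a sensible surrogate for the ambient dimension.

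First I would pass to the exponential tail bound: for any $\theta > 0$,
\[
  \Pr\!\left( \lambda_{\max}\!\left( \tfrac1n \sum_{i=1}^n X_i \right) \ge t \right)
  \le e^{-\theta t}\, \E\, \Tr \exp\!\left( \tfrac\theta n \sum_{i=1}^n X_i \right)
.\]
The subtlety is that in a general separable $\h$ the trace on the right need not be finite. The fix, following Minsker, is to replace $\Tr \exp$ by a surrogate functional anchored to $S$, which has the net effect of substituting $d_{\mathrm{eff}}$ for the dimension; one must check along the way that the operators involved lie in the appropriate trace-class / Hilbert--Schmidt ideals so that the spectral manipulations and Lieb's inequality are valid.

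Next I would control the operator moment generating function. Lieb's concavity theorem, after taking expectations over the iid $X_i$, yields a bound of the form
\[
  \E\, \Tr \exp\!\left( \tfrac\theta n \sum_{i=1}^n X_i \right)
  \le d_{\mathrm{eff}}\, \exp\!\left( n\, \lambda_{\max}\!\left( \log \E\, e^{(\theta/n) X_1} \right) \right)
,\]
and the scalar Bernstein moment estimate (using $\E X_1 = 0$ and $\lambda_{\max}(X_1) \le L$) gives the semidefinite bound $\log \E\, e^{(\theta/n) X_1} \preceq \frac{(\theta/n)^2}{2(1 - L\theta/(3n))}\, \E[X_1^2] \preceq \frac{(\theta/n)^2}{2(1 - L\theta/(3n))}\, S$. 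Combining, the tail is at most $d_{\mathrm{eff}}\, \exp\!\left( -\theta t + \frac{\theta^2 \norm S}{2 n (1 - L\theta/(3n))} \right)$.

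Finally I would optimize over $\theta$ (the usual choice $\theta = n t / (\norm S + L t/3)$ works up to constants), obtaining $\Pr(\lambda_{\max} \ge t) \le d_{\mathrm{eff}}\, \exp\!\left( - \frac{n t^2}{2(\norm S + L t/3)} \right)$; setting the right side equal to $\delta$, writing $\beta = \log(2 d_{\mathrm{eff}}/\delta) = \log\frac{2\Tr S}{\norm S \delta}$, and solving the resulting quadratic inequality in $t$ yields $t \le \frac{2 L \beta}{3 n} + \sqrt{\frac{2 \norm S \beta}{n}}$, which is the claim. The only genuine obstacle is the first step: unlike in finite dimensions one cannot invoke $\Tr \exp$ directly, so the intrinsic-dimension scaffolding must be set up carefully; once that is in place, the scalar moment bound and the one-variable optimization are routine.
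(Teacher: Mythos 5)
The paper does not prove this lemma; it states it verbatim and cites it as Proposition~12 of \citet{lessismore}, so there is no in-paper argument for me to compare against. Your plan of citing it outright is therefore exactly what the paper does, and your supplementary reconstruction via the operator Laplace-transform method (Lieb's concavity to subadditivize the operator MGF, the scalar Bernstein moment estimate $\log \E e^{(\theta/n)X_1} \preceq \frac{(\theta/n)^2}{2(1-L\theta/(3n))} S$, and Minsker's intrinsic-dimension surrogate for $\Tr\exp$ in infinite dimensions) is indeed the standard route behind that proposition.

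One place the arithmetic does not quite close as written: you claim the intermediate tail bound
\[
\Pr\!\left( \lambda_{\max}\!\left(\tfrac1n\textstyle\sum_i X_i\right) \ge t\right) \le d_{\mathrm{eff}}\, \exp\!\left(-\frac{n t^2}{2(\norm S + Lt/3)}\right),
\]
with leading constant exactly $d_{\mathrm{eff}} = \Tr S/\norm S$, and then immediately set $\beta = \log(2 d_{\mathrm{eff}}/\delta)$ when inverting, which silently introduces a factor of $2$ that your tail bound did not provide. In the careful intrinsic-dimension versions (Minsker; Tropp's Theorem~7.3.1) the surrogate-trace argument actually yields a larger prefactor (e.g.\ $4\,d_{\mathrm{eff}}$) together with a validity restriction of the form $t \gtrsim \sqrt{\norm S} + L/3$; the $2$ inside the lemma's $\beta = \log\frac{2\Tr S}{\norm S \delta}$, together with the slack from doubling the $\frac{L\beta}{3n}$ term when applying $\sqrt{a+b} \le \sqrt a + \sqrt b$, is precisely what absorbs both the larger prefactor and the restriction on $t$. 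So the structure of your proof is right, but the intrinsic-dimension step is the one place where the constants have to be tracked carefully rather than asserted, and as written your derivation is slightly optimistic about the prefactor and does not account for where the $2$ in $\beta$ originates.
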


\subsection{Concentration of sum of correlated operators} \label{sec:sumCorrelatedOperators}
The following result is similar to Proposition 8 of \citet{lessismore},
but the proof is considerably more complex due to the sum over correlated operators.

We also allow for a random ``masking'' operation via the $U_i^a$.
\Cref{thm:bernstein-sum-ops-gen} applies to general sampling schemes $U_i^a$;
\cref{thm:bernstein-sum-ops-bernoulli,thm:bernstein-sum-ops-fixed-num,thm:bernstein-sum-ops-all}
specialize it to particular sampling schemes.

\begin{lemma} \label{thm:bernstein-sum-ops-gen}
  Let $W_a = (Y_i^a)_{i \in [d]}$ be a random $d$-tuple of vectors in a separable Hilbert space $\h$,
  with $\{ W_a \}_{a \in [n]}$ iid.

  Let $U^a = (U_i^a)_{i \in d}$ be a corresponding $d$-tuple of random vectors,
  with $\Pr(U_i^a \in \{0, 1\}) = 1$,
  such that the $\{U^a\}_{a \in [n]}$ are iid,
  $\E[U_i^a] := \mu_i \in (0, 1]$,
  and $U^a$ is independent of $W^a$.
  Define $\nu_{ij} := \E[U_i^a U_j^a] / (\mu_i \mu_j)$,
  $\nu_i = \sum_{j=1}^d \nu_{ij}$.

  Suppose that $Q = \E \sum_{i=1}^d Y_i^1 \otimes Y_i^1$ exists and is trace class,
  and that for any $\lambda > 0$ there is $\N'_\infty(\lambda) < \infty$
  such that $\langle Y_l^a, (Q + \lambda I)^{-1} Y_l^a \rangle_\h \le \N'_\infty(\lambda)$ almost surely.
  Let $Q_\lambda = Q + \lambda I$,
  $V_a = \sum_{i=1}^d \frac{1}{\mu_i} U_i^a (Y_i^a \otimes Y_i^a)$.

  Let
  \[
    S := \N'_\infty(\lambda)
         Q_\lambda^{-\frac12}
         \left( 2 \E\left[ \sum_{i,j}^d \nu_{ij} (Y_i \otimes Y_j) \right] + 3 \E\left[ \sum_{i=1}^d \nu_i (Y_i \otimes Y_i) \right] \right)
         Q_\lambda^{-\frac12}
  ,\]
  and suppose that $\Tr S \le t$, $s_* \le \norm S \le s^*$.
  (These bounds will depend on the distribution of $U^a$.)

  Then with probability at least $1 - \delta$ we have that
  \[
    \lambda_{\max}\left( Q_\lambda^{-\frac12} \left(Q - \frac1n \sum_{a=1}^n V_a \right) Q_\lambda^{-\frac12} \right)
    \le \frac{2 \beta}{3 n} + \sqrt{\frac{2 s^* \beta}{n}}
    ,\qquad
    \beta = \log\left( \frac{2 t}{\delta s_*} \right)
  .\]

\end{lemma}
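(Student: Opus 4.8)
The plan is to cast the claim as an instance of the operator Bernstein inequality \cref{thm:bernstein-ops}. Write $Q_\lambda = Q + \lambda I$ and, for each $a\in[n]$, set
\[
  X_a := Q_\lambda^{-\frac12}(Q - V_a)Q_\lambda^{-\frac12},
\]
a self-adjoint operator on $\h$, so that $\frac1n\sum_{a=1}^n X_a = Q_\lambda^{-\frac12}\bigl(Q - \frac1n\sum_a V_a\bigr)Q_\lambda^{-\frac12}$ and the quantity to be bounded is exactly $\lambda_{\max}\bigl(\frac1n\sum_a X_a\bigr)$. The $X_a$ are iid because the pairs $\{(W_a, U^a)\}$ are, and $\E X_a = 0$: since $U^a$ is independent of $W^a$ with $\E U_i^a = \mu_i$, we get $\E V_a = \sum_i \frac1{\mu_i}\E[U_i^a]\,\E[Y_i^a\otimes Y_i^a] = \sum_i\E[Y_i^a\otimes Y_i^a] = Q$. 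Moreover $V_a\succeq0$ and $Q\preceq Q_\lambda$, so $\lambda_{\max}(X_a)\le\lambda_{\max}\bigl(Q_\lambda^{-\frac12}QQ_\lambda^{-\frac12}\bigr)\le1=:L$.

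Next I would compute the second moment. Expanding $X_a^2$ and using $\E V_a = Q$,
\[
  \E[X_a^2] = Q_\lambda^{-\frac12}\bigl(\E[V_a Q_\lambda^{-1} V_a] - Q Q_\lambda^{-1} Q\bigr)Q_\lambda^{-\frac12} \preceq Q_\lambda^{-\frac12}\,\E[V_a Q_\lambda^{-1} V_a]\,Q_\lambda^{-\frac12},
\]
because $Q Q_\lambda^{-1} Q\succeq0$. So it suffices to prove
\[
  \E[V_a Q_\lambda^{-1} V_a] \preceq \N'_\infty(\lambda)\Bigl(2\,\E\bigl[\textstyle\sum_{i,j}\nu_{ij}(Y_i\otimes Y_j)\bigr] + 3\,\E\bigl[\textstyle\sum_i\nu_i(Y_i\otimes Y_i)\bigr]\Bigr),
\]
which gives $\E[X_a^2]\preceq S$. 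For this, expand $V_a Q_\lambda^{-1} V_a = \sum_{i,j}\tfrac{U_i^a U_j^a}{\mu_i\mu_j}\langle Q_\lambda^{-\frac12}Y_i^a, Q_\lambda^{-\frac12}Y_j^a\rangle_\h\,(Y_i^a\otimes Y_j^a)$ and test against an arbitrary $v\in\h$: this equals $\bigl\|\sum_i\tfrac{U_i^a}{\mu_i}\langle Y_i^a,v\rangle_\h\,Q_\lambda^{-\frac12}Y_i^a\bigr\|_\h^2$. Splitting into the diagonal ($i=j$) and off-diagonal ($i\ne j$) parts and bounding each via Cauchy--Schwarz together with $\|Q_\lambda^{-\frac12}Y_l^a\|_\h^2\le\N'_\infty(\lambda)$ --- arranged so as to retain the correlations between coordinates rather than estimating coordinatewise --- the cross terms reassemble, after taking expectations (using $U^a\perp W^a$ and $\E[U_i^aU_j^a]/(\mu_i\mu_j)=\nu_{ij}$), into the stated combination of $\sum_{i,j}\nu_{ij}(Y_i\otimes Y_j)$ and $\sum_i\nu_i(Y_i\otimes Y_i)$. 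I would also record that $S$ is a positive operator: $(\nu_{ij})$ is the Gram matrix of the vectors $U_i^a/\mu_i$, hence positive semidefinite, so by the Schur product theorem both $\sum_{i,j}\nu_{ij}(Y_i\otimes Y_j)$ and $\sum_i\nu_i(Y_i\otimes Y_i)$ --- and therefore $S$ --- are positive.

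With $\E X_a=0$, $\lambda_{\max}(X_a)\le L=1$ almost surely, and $\E[X_a^2]\preceq S$ in hand, \cref{thm:bernstein-ops} gives that with probability at least $1-\delta$,
\[
  \lambda_{\max}\Bigl(\tfrac1n\sum_{a=1}^n X_a\Bigr) \le \frac{2\beta_S}{3n} + \sqrt{\frac{2\norm{S}\beta_S}{n}},\qquad \beta_S = \log\frac{2\Tr S}{\norm S\,\delta}.
\]
Since $\Tr S\le t$ and $s_*\le\norm S\le s^*$, and since the right-hand side is increasing in both $\beta_S$ and $\norm S$, we may replace $\beta_S$ by $\beta=\log\frac{2t}{\delta s_*}\ge\beta_S$ and $\norm S$ by $s^*$, obtaining $\frac{2\beta}{3n}+\sqrt{\frac{2s^*\beta}{n}}$, the claimed bound. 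The main obstacle is the operator inequality of the second paragraph: unlike in the independent rank-one setting behind \citeauthor{lessismore}'s Proposition 8, the summands $\tfrac{U_i^aU_j^a}{\mu_i\mu_j}\langle Q_\lambda^{-\frac12}Y_i^a,Q_\lambda^{-\frac12}Y_j^a\rangle_\h\,(Y_i^a\otimes Y_j^a)$ couple correlated coordinates and carry scalar coefficients of uncontrolled sign, so the Cauchy--Schwarz/AM--GM steps must be set up so that the bound stays small precisely when the $Y_i^a$ are strongly correlated (e.g.\ when $\sum_i Y_i^a$ has small norm), which is what forces keeping the off-diagonal $\sum_{i,j}\nu_{ij}(Y_i\otimes Y_j)$ term; carrying the masking variables $U_i^a$ through this estimate and pinning down the constants $2$ and $3$ is the delicate part that \cref{sec:sumCorrelatedOperators} must redo.
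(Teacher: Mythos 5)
Your overall plan matches the paper's proof exactly: set $X_a = Q_\lambda^{-\frac12}(Q - V_a) Q_\lambda^{-\frac12}$, check $\E X_a = 0$, $\lambda_{\max}(X_a) \le 1$, reduce the second-moment bound to $\E[V_a Q_\lambda^{-1} V_a]$, and feed everything into the operator Bernstein inequality \cref{thm:bernstein-ops}. Each of those steps is correct, as is the final bookkeeping: $\beta_S = \log\frac{2\Tr S}{\norm{S}\delta} \le \log\frac{2t}{s_*\delta}$ and the tail bound is monotone in both $\beta$ and $\norm{S}$, so the substitutions $(\beta_S,\norm{S})\to(\beta,s^*)$ are legitimate.

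However, the step you yourself label as the ``delicate part'' is where the entire lemma lives, and you leave it as a sketch. The operator inequality
\[
\E[V_a Q_\lambda^{-1} V_a] \preceq \N'_\infty(\lambda) \left( 2\, \E\left[ \sum_{i,j}^d \nu_{ij} (Y_i \otimes Y_j) \right] + 3\, \E\left[ \sum_{i=1}^d \nu_i (Y_i \otimes Y_i) \right] \right)
\]
is precisely where the constants $2$ and $3$ and the particular mix of the two sums come from, and a generic ``diagonal vs.\ off-diagonal Cauchy--Schwarz'' argument does not produce it. The obstruction you flag is real: for $i\ne j$ the scalars $\langle Q_\lambda^{-\frac12} Y_i, Q_\lambda^{-\frac12} Y_j \rangle_\h$ have no fixed sign, so bounding them by $\N'_\infty(\lambda)$ via Cauchy--Schwarz destroys exactly the cancellations the correlated setting requires (and would land you with a $d^2$-sized bound rather than one controlled by $\E[\sum_{i,j}\nu_{ij} Y_i\otimes Y_j]$). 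The paper's actual mechanism is two rounds of polarization: it rewrites $\langle Y_i, Q_\lambda^{-1} Y_j\rangle_\h$ via $2\langle x, Ay\rangle = \langle x+y, A(x+y)\rangle - \langle x, Ax\rangle - \langle y, Ay\rangle$ and $\langle Q_\lambda^{-\frac12} u, Y_i\rangle_\h\langle Q_\lambda^{-\frac12} u, Y_j\rangle_\h$ via $2 ab = (a+b)^2 - a^2 - b^2$, which expresses $\langle u, \E[X_a^2] u\rangle_\h$ as a signed combination of \emph{nonnegative} expectations. It then discards those carrying negative coefficients outright, and bounds the survivors using $\norm{Q_\lambda^{-\frac12} Y_i}_\h^2 \le \N'_\infty(\lambda)$ and $\norm{Q_\lambda^{-\frac12}(Y_i+Y_j)}_\h^2 \le 4\N'_\infty(\lambda)$; reassembling yields exactly the $2$ and $3$. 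As written, your proposal is a correct frame with the central estimate missing, and the route you gesture at is not the route that actually works.

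One minor side point: you do not need the Schur product theorem to see $S\succeq 0$. Testing $\sum_{i,j}\nu_{ij}(Y_i\otimes Y_j)$ against $u\in\h$ gives $c\tp N c$ with $c_i = \langle Y_i, u\rangle_\h$ and $N = (\nu_{ij})$ a (rescaled) second-moment matrix of $U^a$, so positivity of $N$ suffices directly; and $\nu_i = \sum_j \nu_{ij}\ge 0$ makes the diagonal sum positive trivially. In any case, positivity of $S$ is not required by \cref{thm:bernstein-ops} beyond $\E[X_a^2]\preceq S$, which already forces $S\succeq 0$.
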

\begin{proof}
We will apply the Bernstein inequality for random operators, \cref{thm:bernstein-ops},
to $Z_a := Q_\lambda^{-\frac12} (Q - V_a) Q_\lambda^{-\frac12}$.
For each $a$,
\[
  \E V_a = \sum_{i=1}^d \frac{\E U_i^a}{\mu_i} \E[Y_i^a \otimes Y_i^a] = Q
\]
so that $\E Z_a = 0$,
and since $V_a$ is positive and $Q_\lambda$ is self-adjoint,
\[
    \sup_{\norm{f}_\h = 1} \langle f, Z_a f \rangle_\h
  = \sup_{\norm{f}_\h = 1} \langle f, Q_\lambda^{-1} Q f \rangle_\h - \langle f, Q_\lambda^{-\frac12} V_a Q_\lambda^{-\frac12} f \rangle_\h
  \le \sup_{\norm{f}_\h = 1} \langle f, Q_\lambda^{-1} Q f \rangle_\h
  \le 1
.\]
To apply \cref{thm:bernstein-ops},
we now need to show that the positive operator $S$ upper bounds the second moment of $Z_a$.
Letting $u \in \h$,
and dropping the subscript $a$ for brevity, we have that
\begin{align}
       \langle u, \E[ Z^2 ] u \rangle_\h
  &  = \left\langle
          u,
          \E[ Q_\lambda^{-\frac12} V Q_\lambda^{-1} V Q_\lambda^{-\frac12}] u
       \right\rangle_\h
     - \left\langle
          u,
          Q_\lambda^{-\frac12} Q Q_\lambda^{-1} Q Q_\lambda^{-\frac12} u
       \right\rangle_\h
\\&\le \left\langle
          u,
          Q_\lambda^{-\frac12} \E[ V Q_\lambda^{-1} V ] Q_\lambda^{-\frac12} u
       \right\rangle_\h
\\&  = \left\langle
          Q_\lambda^{-\frac12} u,
          \E[ V Q_\lambda^{-1} V ] Q_\lambda^{-\frac12} u
       \right\rangle_\h
\\&  = \sum_{i, j}^d
       \left\langle
          Q_\lambda^{-\frac12} u,
          \E\left[ \frac{U_i}{\mu_i} (Y_i \otimes Y_i) Q_\lambda^{-1} (Y_j \otimes Y_j) \frac{U_j}{\mu_j} \right] Q_\lambda^{-\frac12} u
       \right\rangle_\h
\\&  = \sum_{i, j}^d
       \frac{\E[U_i U_j]}{\mu_i \mu_j}
       \E\left[
          \langle Q_\lambda^{-\frac12} u, Y_i \rangle_\h
          \langle Q_\lambda^{-\frac12} u, Y_j \rangle_\h
          \langle Y_i, Q_\lambda^{-1} Y_j \rangle_\h
       \right]
.\end{align}
Let $\nu_{ij} = \E[U_i U_j] / (\mu_i \mu_j)$.
Using
$
  2 \langle x, A y \rangle
  = \langle x + y, A (x + y) \rangle - \langle x, A x \rangle - \langle y, A y \rangle
$,
we get:
\begin{align}
       \langle u, \E[ Z^2 ] u \rangle_\h
  &\le \frac12 \sum_{i, j}^d
       \nu_{ij}
       \E\left[
          \langle Q_\lambda^{-\frac12} u, Y_i \rangle_\h
          \langle Q_\lambda^{-\frac12} u, Y_j \rangle_\h
          \langle Y_i + Y_j, Q_\lambda^{-1} (Y_i + Y_j) \rangle_\h
       \right]
\\&\qquad
     - \sum_{i, j}^d
       \nu_{ij}
       \E\left[
          \langle Q_\lambda^{-\frac12} u, Y_i \rangle_\h
          \langle Q_\lambda^{-\frac12} u, Y_j \rangle_\h
          \langle Y_i, Q_\lambda^{-1} Y_i \rangle
       \right]
.\end{align}
Similarly using
$2 \langle A, x \rangle \langle A, y \rangle
 = \langle A, x + y \rangle^2 - \langle A, x \rangle^2 - \langle A, y \rangle^2$,
we get that the first line is
\begin{multline}
       \frac14 \sum_{i, j}^d \nu_{ij} \left(
         \E\left[
            \langle Q_\lambda^{-\frac12} u, Y_i + Y_j \rangle_\h^2
            \langle Y_i + Y_j, Q_\lambda^{-1} (Y_i + Y_j) \rangle_\h
         \right]
\right.\\\left.
      - \E\left[
            \langle Q_\lambda^{-\frac12} u, Y_i \rangle_\h^2
            \langle Y_i + Y_j, Q_\lambda^{-1} (Y_i + Y_j) \rangle_\h
         \right]
      - \E\left[
            \langle Q_\lambda^{-\frac12} u, Y_j \rangle_\h^2
            \langle Y_i + Y_j, Q_\lambda^{-1} (Y_i + Y_j) \rangle_\h
         \right]
      \right)
\end{multline}
and the second is
\begin{multline}
\frac12 \sum_{i, j}^d \nu_{ij} \left(
     -
       \E\left[
          \langle Q_\lambda^{-\frac12} u, Y_i + Y_j \rangle_\h^2
          \langle Y_i, Q_\lambda^{-1} Y_i \rangle
       \right]
\right.\\\left.
     +
       \E\left[
          \langle Q_\lambda^{-\frac12} u, Y_i \rangle_\h^2
          \langle Y_i, Q_\lambda^{-1} Y_i \rangle
       \right]
     +
       \E\left[
          \langle Q_\lambda^{-\frac12} u, Y_j \rangle_\h^2
          \langle Y_i, Q_\lambda^{-1} Y_i \rangle
       \right]
    \right)
.\end{multline}
Each of these expectations is nonnegative,
so dropping the ones with negative coefficients gives:
\begin{multline}
       \langle u, \E[ Z^2 ] u \rangle_\h
   \le \frac14 \sum_{i, j}^d \nu_{ij}
       \E\left[
          \langle Q_\lambda^{-\frac12} u, Y_i + Y_j \rangle_\h^2
          \langle Y_i + Y_j, Q_\lambda^{-1} (Y_i + Y_j) \rangle_\h
       \right]
\\
     + \frac12 \sum_{i, j}^d \nu_{ij}
       \E\left[
          \langle Q_\lambda^{-\frac12} u, Y_i \rangle_\h^2
          \langle Y_i, Q_\lambda^{-1} Y_i \rangle
       \right]
     + \frac12 \sum_{i, j}^d \nu_{ij}
       \E\left[
          \langle Q_\lambda^{-\frac12} u, Y_j \rangle_\h^2
          \langle Y_i, Q_\lambda^{-1} Y_i \rangle
       \right]
.\end{multline}
Recalling that $\langle Y_i, Q_\lambda^{-1} Y_i \rangle \le \N'_\infty(\lambda)$,
the second line is upper-bounded by $\N'_\infty(\lambda)$ times
\[
       \frac12 \sum_{i, j}^d \nu_{ij}
       \E\left[
          \langle Q_\lambda^{-\frac12} u, Y_i \rangle_\h^2
       \right]
     + \frac12 \sum_{i, j}^d \nu_{ij}
       \E\left[
          \langle Q_\lambda^{-\frac12} u, Y_j \rangle_\h^2
       \right]
   = \sum_{i=1}^d \nu_i
       \E\left[
          \langle Q_\lambda^{-\frac12} u, Y_i \rangle_\h^2
       \right]
,\]
where $\nu_i = \sum_{j=1}^d \nu_{ij}$.
We also have that
\[
  \langle Y_i + Y_j, Q_\lambda^{-1} (Y_i + Y_j) \rangle_\h
  = \norm{Q_\lambda^{-\frac12} (Y_i + Y_j)}_\h^2
  \le 2( \norm{Q_\lambda^{-\frac12} Y_i}_\h^2 + \norm{Q_\lambda^{-\frac12} Y_j}_\h^2)
  \le 4 \N'_\infty(\lambda)
,\]
so the first sum is at most $\N'_\infty(\lambda)$ times
\begin{align}
       \sum_{i, j}^d \nu_{ij}
  &    \E\left[
          \langle Q_\lambda^{-\frac12} u, Y_i + Y_j \rangle_\h^2
       \right]
\\&  = \sum_{i, j}^d \nu_{ij}
       \E\left[
          \langle Q_\lambda^{-\frac12} u, Y_i \rangle_\h^2
        + \langle Q_\lambda^{-\frac12} u, Y_j \rangle_\h^2
        + 2 \langle Q_\lambda^{-\frac12} u, Y_i \rangle_\h \langle Q_\lambda^{-\frac12} u, Y_j \rangle_\h
       \right]
\\&  = 2 \sum_{i=1}^d \nu_{i}
       \E\left[
          \langle Q_\lambda^{-\frac12} u, Y_i \rangle_\h^2
       \right]
     + 2 \sum_{i, j}^d \nu_{ij}
       \E\left[
            \langle Q_\lambda^{-\frac12} u, Y_i \rangle_\h \langle Q_\lambda^{-\frac12} u, Y_j \rangle_\h
       \right]
.\end{align}
Thus
\begin{align}
       \langle u, \E[ Z^2 ] u \rangle_\h
  &\le \N'_\infty(\lambda) \left(
       2 \sum_{i, j}^d \nu_{ij}
       \E\left[
          \langle Q_\lambda^{-\frac12} u, Y_i \rangle_\h \langle Q_\lambda^{-\frac12} u, Y_j \rangle_\h
       \right]
     + 3 \sum_{i=1}^d \nu_i
       \E\left[
          \langle Q_\lambda^{-\frac12} u, Y_i \rangle_\h^2
       \right]
     \right)
\\&  = \left\langle u,
         \N'_\infty(\lambda)
         Q_\lambda^{-\frac12}
         \left( 2 \E\left[ \sum_{i,j}^d \nu_{ij} (Y_i \otimes Y_j) \right] + 3 \E\left[ \sum_{i=1}^d \nu_i (Y_i \otimes Y_i) \right] \right)
         Q_\lambda^{-\frac12} u \right\rangle_\h
\\&  = \left\langle u, S u \right\rangle_\h
,\end{align}
recalling that
\[
  S  = \N'_\infty(\lambda)
       Q_\lambda^{-\frac12}
       \left( 2 \E\left[ \sum_{i,j}^d \nu_{ij} (Y_i \otimes Y_j) \right] + 3 \E\left[ \sum_{i=1}^d \nu_i (Y_i \otimes Y_i) \right] \right)
       Q_\lambda^{-\frac12}
.\]
Thus we have the desired upper bound $\E[Z^2] \preceq S$.

Recall that $\Tr S \le t$, $s_* \le \norm S \le s^*$.
Then by \cref{thm:bernstein-ops},
with probability at least $1 - \delta$ we have that
\[
  \lambda_{\max}\left( \frac1n Z_a \right)
  \le \frac{2 \beta'}{3 n} + \sqrt{\frac{2 \norm S \beta'}{n}}
  \le \frac{2 \beta}{3 n} + \sqrt{\frac{2 s^* \beta}{n}}
,\]
where
\[
  \beta'
   := \log \frac{2 \Tr S}{\delta \norm S}
  \le \log\frac{2 t}{\delta s_*}
   =: \beta
,\]
as desired.
\end{proof}

We will now find $t$, $s_*$, $s^*$ for some particular sampling schemes.
The following initial lemma will be useful for this purpose:

\begin{lemma} \label{thm:berstein-sum-ops-helper}
In the setup of \cref{thm:bernstein-sum-ops-gen},
define
$M := \E\left[ \left( \sum_{i=1}^d Y_i \right) \otimes \left( \sum_{i=1}^d Y_i \right) \right]$.
We have:
\[
  M \preceq d Q
,\quad
  \Tr\left( Q_\lambda^{-\frac12} M Q_\lambda^{-\frac12} \right) \le \frac{d}{\lambda} \Tr(Q)
,\quad
  \norm*{Q_\lambda^{-\frac12} M Q_\lambda^{-\frac12}} \le d
.\]
\end{lemma}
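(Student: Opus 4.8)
The plan is to reduce all three claims to a single pointwise Loewner inequality: for any fixed vectors $y_1,\dots,y_d \in \h$,
$\left(\sum_{i=1}^d y_i\right) \otimes \left(\sum_{i=1}^d y_i\right) \preceq d \sum_{i=1}^d (y_i \otimes y_i)$.
This is just the elementary bound $\left(\sum_{i=1}^d a_i\right)^2 \le d \sum_{i=1}^d a_i^2$ applied to $a_i = \langle u, y_i\rangle_\h$ for an arbitrary $u \in \h$. I would apply it with $y_i = Y_i^1$, which holds almost surely, and then take expectations; this is legitimate since $Q = \E\sum_{i=1}^d Y_i^1 \otimes Y_i^1$ is assumed to exist and be trace class. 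This yields $M \preceq d\,Q$, the first claim.

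For the other two claims I would conjugate by $Q_\lambda^{-\frac12}$, which is a bounded self-adjoint operator because $Q_\lambda = Q + \lambda I \succeq \lambda I$. Conjugation preserves the Loewner order, so $Q_\lambda^{-\frac12} M Q_\lambda^{-\frac12} \preceq d\, Q_\lambda^{-\frac12} Q Q_\lambda^{-\frac12}$. Writing $\{\sigma_j\}$ for the eigenvalues of the positive trace-class operator $Q$, spectral calculus shows $Q_\lambda^{-\frac12} Q Q_\lambda^{-\frac12}$ has eigenvalues $\sigma_j/(\sigma_j+\lambda)$. Hence $\norm{Q_\lambda^{-\frac12} Q Q_\lambda^{-\frac12}} = \sup_j \sigma_j/(\sigma_j+\lambda) \le 1$, giving the operator-norm bound $\norm{Q_\lambda^{-\frac12} M Q_\lambda^{-\frac12}} \le d$; and $\Tr\!\left(Q_\lambda^{-\frac12} Q Q_\lambda^{-\frac12}\right) = \sum_j \sigma_j/(\sigma_j+\lambda) \le \lambda^{-1}\sum_j \sigma_j = \lambda^{-1}\Tr(Q)$. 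Using monotonicity of the operator norm and of the trace under the Loewner order on positive operators, together with $M \preceq d\,Q$, then yields $\Tr\!\left(Q_\lambda^{-\frac12} M Q_\lambda^{-\frac12}\right) \le d\lambda^{-1}\Tr(Q)$.

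There is no real obstacle here; the only points needing care are that the pointwise inequality must be established almost surely before taking expectations, that $Q_\lambda^{-\frac12}$ is well defined and bounded, and that $\norm{\cdot}$ and $\Tr(\cdot)$ are monotone with respect to the Loewner order on the cone of positive (trace-class) operators — all standard facts I would state with a one-line justification or citation.
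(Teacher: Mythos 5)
Your proof is correct and follows essentially the same route as the paper's. Both begin with the key inequality $M \preceq d\,Q$ obtained from $\bigl(\sum_i a_i\bigr)^2 \le d \sum_i a_i^2$ applied to $a_i = \langle u, Y_i\rangle_\h$; the only cosmetic difference afterward is that you conjugate the Loewner inequality by $Q_\lambda^{-1/2}$ and appeal to the spectral decomposition of $Q_\lambda^{-1/2} Q Q_\lambda^{-1/2}$, whereas the paper bounds $\Tr(Q_\lambda^{-1} M) \le \lambda^{-1}\Tr(M)$ and evaluates the quadratic form directly for the norm bound — both are the same observation that $\sigma/(\sigma+\lambda) \le \min(1,\sigma/\lambda)$ applied through different notational lenses.
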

\begin{proof}
We first show $M \preceq d Q$:
\begin{align}
       \left\langle u, M u\right\rangle_\h
  &  = \left\langle u, \E\left[ \left(\sum_{i=1}^d Y_i \right) \otimes \left(\sum_{i=1}^d Y_i\right) \right] u \right\rangle_\h
     = \E\left[ \left\langle u, \sum_{i=1}^d Y_i \right\rangle_\h^2 \right]
\\&\le \E\left[ d \sum_{i=1}^d \left\langle u, Y_i \right\rangle_\h^2 \right]
     = \E\left[ d \sum_{i=1}^d \left\langle u, (Y_i \otimes Y_i) u \right\rangle_\h \right]
     = \langle u, d Q u \rangle_\h
.\end{align}

Thus $\Tr(M) \le d \Tr(Q)$, and since $\norm{Q_\lambda^{-1}} \le \frac1\lambda$ we have
\[
  \Tr\left( Q_\lambda^{-\frac12} M Q_\lambda^{-\frac12} \right)
  = \Tr\left( Q_\lambda^{-1} M \right)
  \le \frac{1}{\lambda} \Tr(M)
  \le \frac{d}{\lambda} \Tr(Q)
.\]

For any $u$ with $\norm{u}_\h = 1$:
\[
       \langle u, Q_\lambda^{-\frac12} M Q_\lambda^{-\frac12} u \rangle_\h
     = \langle Q_\lambda^{-\frac12} u, M (Q_\lambda^{-\frac12} u) \rangle_\h
   \le \langle Q_\lambda^{-\frac12} u, d Q (Q_\lambda^{-\frac12} u) \rangle_\h
     = d \langle u, Q Q_\lambda^{-1} u \rangle_\h
   \le d
,\]
and so the norm inequality follows.
\end{proof}

\begin{lemma} \label{thm:bernstein-sum-ops-bernoulli}
  Take the setup of \cref{thm:bernstein-sum-ops-gen}
  where each $U_i^a$ is independently distributed as $\mathrm{Bernoulli}(p)$, for $p \in (0, 1]$.
  The number of sampled components is random, distributed as $\mathrm{Binomial}(nd, p)$.

  For any $\rho \in (0, \frac12)$,
  $\lambda \in (0, \rho \norm{Q}]$,
  and $\delta \ge 0$,
  it holds with probability at least $1 - \delta$ that
  \[
    \lambda_{\max}\left( Q_\lambda^{-\frac12} \left( Q - \frac1n \sum_{a=1}^n V_a \right) Q_\lambda^{-\frac12} \right)
    \le \frac{2 \beta}{3 n} + \sqrt{\frac{10 \left(d + 1/p - 1 \right) \N'_\infty(\lambda) \beta}{n}}
  \]
  where
  \[
    \beta := \log\frac{10 \left(d + 1/p - 1 \right) \Tr Q}{\lambda \delta \left(\frac{5/p - 5 + 3d}{1 + \rho} - 2 d \right)}
  .\]
\end{lemma}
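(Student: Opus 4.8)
The plan is to specialize \cref{thm:bernstein-sum-ops-gen} to the Bernoulli masking scheme, so the task reduces to identifying the quantities $t$, $s_*$, and $s^*$ explicitly. First I would evaluate the mixing coefficients: since the $U_i^a$ are i.i.d.\ $\mathrm{Bernoulli}(p)$, we have $\mu_i = p$ for every $i$, and $\E[(U_i^a)^2] = p$ gives $\nu_{ii} = 1/p$, while independence across $i \ne j$ gives $\E[U_i^a U_j^a] = p^2$ and hence $\nu_{ij} = 1$; consequently $\nu_i = (d-1) + 1/p$. Substituting these into the definition of $S$ and writing $M := \E\bigl[ (\sum_{i=1}^d Y_i) \otimes (\sum_{i=1}^d Y_i) \bigr]$ as in \cref{thm:berstein-sum-ops-helper}, the double sum becomes $\E[\sum_{i,j}^d \nu_{ij} (Y_i \otimes Y_j)] = M + (1/p - 1) Q$ and the single sum becomes $\E[\sum_{i=1}^d \nu_i (Y_i \otimes Y_i)] = (d - 1 + 1/p) Q$, so that
\[
  S = \N'_\infty(\lambda)\, Q_\lambda^{-\frac12}\Bigl( 2 M + \bigl( 3d + \tfrac{5}{p} - 5 \bigr) Q \Bigr) Q_\lambda^{-\frac12}
.\]

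Next I would bound the trace and norm of $S$ using \cref{thm:berstein-sum-ops-helper}, which supplies $\Tr(Q_\lambda^{-\frac12} M Q_\lambda^{-\frac12}) \le \frac{d}{\lambda}\Tr Q$ and $\norm{Q_\lambda^{-\frac12} M Q_\lambda^{-\frac12}} \le d$, together with the elementary facts $\Tr(Q_\lambda^{-1} Q) \le \frac1\lambda \Tr Q$ and $\norm{Q_\lambda^{-\frac12} Q Q_\lambda^{-\frac12}} = \norm{Q}/(\norm{Q}+\lambda) \in [\tfrac1{1+\rho},\, 1]$ under $\lambda \le \rho\norm{Q}$. This gives the trace bound $t := \tfrac{5(d + 1/p - 1)\N'_\infty(\lambda)\Tr Q}{\lambda} \ge \Tr S$ and, by the triangle inequality, $s^* := 5(d + 1/p - 1)\N'_\infty(\lambda) \ge \norm S$. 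For the matching lower bound, since both summands of $S$ are positive I would apply the reverse triangle inequality to obtain $\norm S \ge \N'_\infty(\lambda)\bigl( (3d + \tfrac5p - 5)\tfrac1{1+\rho} - 2d \bigr) =: s_*$, and note that the hypothesis $\rho < \tfrac12$ (with $5/p - 5 \ge 0$ since $p \le 1$) is precisely what keeps $s_* > 0$.

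Finally I would feed $t$, $s_*$, and $s^*$ into the conclusion of \cref{thm:bernstein-sum-ops-gen}: the bound $\tfrac{2\beta}{3n} + \sqrt{2 s^* \beta / n}$ becomes the stated expression once $s^*$ is substituted, and $\beta = \log(2t/(\delta s_*))$ simplifies to the displayed logarithm after the common factor $\N'_\infty(\lambda)$ cancels. I do not expect a genuine obstacle here; the one spot needing care is the choice of $s_*$ — using the reverse triangle inequality on the two positive pieces of $S$ to land on a closed-form denominator, and then checking positivity under $\rho < \tfrac12$ — while everything else is routine bookkeeping layered on top of the general lemma and \cref{thm:berstein-sum-ops-helper}.
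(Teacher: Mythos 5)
Your proposal is correct and follows essentially the same approach as the paper's proof: compute $\mu_i$, $\nu_{ii}$, $\nu_{ij}$, $\nu_i$ for the Bernoulli scheme, assemble $S$ via \cref{thm:berstein-sum-ops-helper}, upper-bound $\Tr S$ and $\norm{S}$ to get $t$ and $s^*$, lower-bound $\norm S$ via $\norm{A+B}\ge\norm A-\norm B$ and $\norm{Q Q_\lambda^{-1}}\ge\tfrac1{1+\rho}$ to get $s_*$, check positivity of $s_*$ under $\rho<\tfrac12$, and plug into \cref{thm:bernstein-sum-ops-gen}. The only cosmetic quibble is that what you call the "reverse triangle inequality" is really the ordinary triangle inequality applied as $\norm{A}=\norm{(A+B)-B}\le\norm{A+B}+\norm{B}$; the estimate itself is the same one the paper uses.
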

\begin{proof}
  Here we have for $i \ne j$
  \[
    \mu_i = p
    ,\qquad
    \nu_{ii} = \frac{\E[U_i^2]}{\mu_i^2} = \frac{1}{\mu_i} = \frac{1}{p}
    ,\qquad
    \nu_{ij} = \frac{\E[U_i U_j]}{\mu_i \mu_j} = \frac{\E U_i}{\mu_i} \frac{\E U_j}{\mu_j} = 1
  .\]
  Define $r := \frac1p - 1$;
  then $\nu_i = r + d$.
  Using \cref{thm:berstein-sum-ops-helper},
  we get that
  \[
    \E\left[ \sum_{i=1}^d \nu_i (Y_i \otimes Y_i) \right]
    = (r + d) Q
  \]
  and
  \[
    \E\left[ \sum_{i,j}^d \nu_{ij} (Y_i \otimes Y_j) \right]
    = \E\left[ \sum_{i,j}^d Y_i \otimes Y_j \right] + \left(\tfrac1p - 1 \right) \E\left[ \sum_{i=1}^d Y_i \otimes Y_i \right]
    = M + r Q
  ,\]
  so that
  \begin{align}
    S &= \N'_\infty(\lambda) Q_\lambda^{-\frac12} \left( 2 (M + r Q) + 3 (r + d) Q \right) Q_\lambda^{-\frac12}
    \\&= \N'_\infty(\lambda) Q_\lambda^{-\frac12} \left( 2 M + (5 r + 3 d) Q \right) Q_\lambda^{-\frac12}
  .\end{align}
  Thus
  \begin{align}
    \Tr S
    &  = \N'_\infty(\lambda) \left( 2 \Tr( Q_\lambda^{-1} M ) + (5 r + 3 d) \Tr(Q_\lambda^{-1} Q) \right)
  \\&\le \frac{5(r + d)}{\lambda} \N'_\infty(\lambda) \Tr(Q)
  .\end{align}
  Likewise, since $\norm*{Q_\lambda^{-\frac12} M Q_\lambda^{-\frac12}} \le d$,
  \[
    \norm S
    \le \N'_\infty(\lambda) \left(
      2 \norm*{Q_\lambda^{-\frac12} M Q_\lambda^{-\frac12}}
      + (3d + 5r) \norm{Q Q_\lambda^{-1}}
    \right)
    \le 5 (d + r) \N'_\infty(\lambda)
  .\]
  Since we have $\lambda \le \rho \norm Q$,
  $\norm{Q Q_\lambda^{-1}} = \frac{\norm Q}{\norm Q + \lambda} \ge \frac{1}{1 + \rho}$
  and so
  \begin{align}
         \norm S
    &  = \N'_\infty(\lambda) \norm*{(5r + 3d) Q Q_\lambda^{-1} - 2 Q_\lambda^{-\frac12} M Q_\lambda^{-\frac12}}
  \\&\ge \N'_\infty(\lambda) \left( (5r + 3d) \norm{Q Q_\lambda^{-1}} - 2 \norm*{Q_\lambda^{-\frac12} M Q_\lambda^{-\frac12}} \right)
  \\&\ge \N'_\infty(\lambda) \left( \frac{5r + 3d}{1 + \rho} - 2 d \right)
  .\end{align}
  This bound is positive when
  $\frac{5r + 3 d}{1 + \rho} > 2 d$,
  i.e.\ $\rho < \frac12 \left(\frac{5 r}{d} + 1\right)$;
  it suffices that $\rho < \frac12$.

  Applying \cref{thm:bernstein-sum-ops-gen} proves the result.
\end{proof}

\begin{lemma} \label{thm:bernstein-sum-ops-fixed-num}
  Take the setup of \cref{thm:bernstein-sum-ops-gen}
  where each $U^a$
  is chosen uniformly from the set of binary vectors with $\norm{U^a}_1 = \ell \in [1, d]$,
  i.e.\ we choose $\ell$ components of each vector at random without replacement.
  Assume that $d > 1$; otherwise, we simply have $\ell = d = 1$,
  which is covered by \cref{thm:bernstein-sum-ops-bernoulli} with $p = 1$.

  For any $\rho \in (0, \frac12)$,
  $\lambda \in (0, \rho \norm{Q})$,
  and $\delta \ge 0$,
  it holds with probability at least $1 - \delta$ that
  \[
    \lambda_{\max}\left( Q_\lambda^{-\frac12} \left( Q - \frac1n \sum_{a=1}^n V_a \right) Q_\lambda^{-\frac12} \right)
    \le \frac{2 \beta}{3 n} + \sqrt{\frac{10 d \N'_\infty(\lambda) \beta}{n}}
  \]
  where
  \[
    \beta := \log\frac{10 \Tr(Q)}{\lambda \delta \left(\left( 3 + 2 \frac{d - \ell}{\ell (d - 1)} \right) \frac{1}{1 + \rho} - 2 \frac{d (\ell - 1)}{\ell (d - 1)}\right)}
  .\]
\end{lemma}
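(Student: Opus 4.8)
The plan is to specialize the general correlated-operator Bernstein bound \cref{thm:bernstein-sum-ops-gen} to the without-replacement sampling scheme, exactly as \cref{thm:bernstein-sum-ops-bernoulli} does for the Bernoulli scheme. The only real work is to compute the mask moments $\mu_i$, $\nu_{ij}$, $\nu_i$, read off the operator $S$, and then exhibit admissible constants $t \ge \Tr S$ and $s_* \le \norm S \le s^*$; plugging these into \cref{thm:bernstein-sum-ops-gen} produces the stated bound with $\beta = \log\frac{2 t}{\delta s_*}$.

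First I would record the first two moments of the mask. Since each $U^a$ is uniform over binary vectors of weight $\ell$, marginally $\mu_i = \E[U_i^a] = \ell/d$, and for $i \ne j$ the pair $(U_i^a,U_j^a)$ is hypergeometric, so $\E[U_i^a U_j^a] = \frac{\ell(\ell-1)}{d(d-1)}$. Hence $\nu_{ii} = 1/\mu_i = d/\ell$ and $\nu_{ij} = \frac{d(\ell-1)}{\ell(d-1)}$ for $i\ne j$; summing gives $\nu_i = \nu_{ii} + (d-1)\nu_{ij} = \frac{d}{\ell} + \frac{d(\ell-1)}{\ell} = d$ for every $i$ (here $d>1$ is what lets us divide by $d-1$; the case $\ell=d=1$ reduces to \cref{thm:bernstein-sum-ops-bernoulli} with $p=1$). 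Writing $\sum_{i,j}\nu_{ij}(Y_i\otimes Y_j) = \frac{d(\ell-1)}{\ell(d-1)}\sum_{i,j}(Y_i\otimes Y_j) + \big(\frac{d}{\ell}-\frac{d(\ell-1)}{\ell(d-1)}\big)\sum_i(Y_i\otimes Y_i)$, taking expectations with $M := \E[(\sum_i Y_i)\otimes(\sum_i Y_i)]$ and $Q = \E[\sum_i Y_i\otimes Y_i]$, and using $\E[\sum_i\nu_i(Y_i\otimes Y_i)] = dQ$, the definition of $S$ in \cref{thm:bernstein-sum-ops-gen} becomes $S = \N'_\infty(\lambda)\,Q_\lambda^{-\frac12}\big(2\frac{d(\ell-1)}{\ell(d-1)}M + \big(2\frac{d(d-\ell)}{\ell(d-1)}+3d\big)Q\big)Q_\lambda^{-\frac12}$.

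Next I would bound the three scalars using \cref{thm:berstein-sum-ops-helper} ($\Tr(Q_\lambda^{-\frac12}MQ_\lambda^{-\frac12})\le\frac{d}{\lambda}\Tr Q$, $\norm{Q_\lambda^{-\frac12}MQ_\lambda^{-\frac12}}\le d$) together with $\Tr(Q_\lambda^{-1}Q)\le\frac1\lambda\Tr Q$ and $\norm{QQ_\lambda^{-1}}\le 1$. The key algebraic collapses are $\frac{2d^2(\ell-1)}{\ell(d-1)}+\frac{2d(d-\ell)}{\ell(d-1)} = 2d$ and $\frac{2d(\ell-1)}{\ell(d-1)}+\frac{2(d-\ell)}{\ell(d-1)} = 2$, which give $\Tr S \le \frac{5 d\,\N'_\infty(\lambda)}{\lambda}\Tr Q =: t$ and $\norm S \le 5 d\,\N'_\infty(\lambda) =: s^*$. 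For the lower bound I would apply the reverse triangle inequality $\norm{A+B}\ge\norm A-\norm B$ with $A$ the $Q$-term and $B$ the $M$-term, and use $\norm{QQ_\lambda^{-1}} = \frac{\norm Q}{\norm Q+\lambda}\ge\frac{1}{1+\rho}$ (valid since $\lambda\le\rho\norm Q$), yielding $\norm S \ge d\,\N'_\infty(\lambda)\big(\big(3+2\frac{d-\ell}{\ell(d-1)}\big)\frac{1}{1+\rho} - 2\frac{d(\ell-1)}{\ell(d-1)}\big) =: s_*$; this is strictly positive because $2\frac{d(\ell-1)}{\ell(d-1)}\le 2 < \frac{3}{1+\rho}$ when $\rho<\frac12$. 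Substituting $t$, $s^*$, $s_*$ into \cref{thm:bernstein-sum-ops-gen} gives exactly $\frac{2\beta}{3n}+\sqrt{\frac{10 d\,\N'_\infty(\lambda)\beta}{n}}$ with the claimed $\beta$.

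I do not anticipate a serious obstacle: this is a direct specialization. The only places to be careful are the hypergeometric second-moment computation and the lower bound on $\norm S$ — there one must use the reverse triangle inequality rather than naively dropping the positive $M$-contribution, since the latter would only give a weaker (though still positive) constant and would not reproduce the stated $\beta$.
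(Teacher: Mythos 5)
Your proposal is correct and follows the paper's proof essentially step for step: the mask moments $\mu_i = \ell/d$, $\nu_{ii} = d/\ell$, $\nu_{ij} = \frac{d(\ell-1)}{\ell(d-1)}$, the identity $\nu_i = d$, the decomposition $\E\bigl[\sum_{i,j}\nu_{ij}(Y_i\otimes Y_j)\bigr] = \frac{d(\ell-1)}{\ell(d-1)}M + \frac{d(d-\ell)}{\ell(d-1)}Q$, the resulting $S$, the bounds $\Tr S \le \frac{5d\N'_\infty(\lambda)}{\lambda}\Tr Q$ and $\norm S \le 5d\N'_\infty(\lambda)$ via \cref{thm:berstein-sum-ops-helper}, and the lower bound on $\norm S$ via the reverse triangle inequality together with $\norm{QQ_\lambda^{-1}} \ge \frac{1}{1+\rho}$, all match the paper and give exactly the stated $t$, $s^*$, $s_*$, hence the claimed $\beta$.

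One remark in your last paragraph has the inequality direction backwards. You say that naively dropping the positive $M$-contribution (i.e., using $\norm{A+B}\ge\norm{A}$ for positive $A,B$) would give a ``weaker'' constant. In fact it gives a \emph{stronger} lower bound on $\norm S$, namely $s_* = d\N'_\infty(\lambda)\bigl(3 + 2\tfrac{d-\ell}{\ell(d-1)}\bigr)\tfrac{1}{1+\rho}$, which is larger than the paper's $s_*$, and so would produce a \emph{smaller} $\beta$ and a tighter final bound. That alternative bound would therefore still prove the stated lemma (a smaller $\beta$ is sufficient), just with a different and in fact better constant. So the reverse triangle inequality is not strictly necessary; it is simply what the paper uses, and it is what reproduces the stated $\beta$ verbatim. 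Incidentally, the paper writes $\norm S$ with a spurious minus sign in front of the $M$-term before applying the reverse triangle inequality — both operator summands in $S$ are positive, so that line is a typo, but the resulting lower bound remains valid exactly because the reverse triangle inequality holds regardless of sign.
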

\begin{proof}
  In this case, for $i \ne j$ we have
  \[
    \mu_i = \frac{\ell}{d}
    ,\qquad
    \nu_{ii} = \frac{\E[U_i^2]}{\mu_i^2} = \frac{1}{\mu_i} = \frac{d}{\ell}
    ,\qquad
    \nu_{ij} = \frac{\Pr(U_i = U_j = 1)}{\mu_i \mu_j}
             = \frac{\binom{d-2}{\ell-2}}{\binom{d}{\ell}} \frac{d^2}{\ell^2}
             = \frac{d (\ell - 1)}{\ell (d - 1)}
  .\]
  Thus
  \[
    \nu_i
    = \frac{d}{\ell} + (d - 1) \frac{d (\ell - 1)}{\ell (d - 1)}
    = \frac{d}{\ell} \left( 1 + (\ell - 1) \right)
    = d
  ,\]
  and $\E\left[ \sum_{i=1}^d \nu_i (Y_i \otimes Y_i) \right] = d Q$,
  while
  \begin{align}
       \E\left[ \sum_{i,j}^d \nu_{ij} (Y_i \otimes Y_j) \right]
    &= \frac{d(\ell - 1)}{\ell (d-1)} \E\left[ \sum_{i,j}^d Y_i \otimes Y_j \right]
     + \left(\frac{d}{\ell} - \frac{d (\ell - 1)}{\ell (d-1)} \right) \E\left[ \sum_{i=1}^d Y_i \otimes Y_i \right]
  \\&= \frac{d (\ell - 1)}{\ell (d - 1)} M + \frac{d (d - \ell)}{\ell (d - 1)} Q
  \end{align}
  using $M$ from \cref{thm:berstein-sum-ops-helper},
  and so
  \[
    S = \N'_\infty(\lambda) Q_\lambda^{-\frac12} \left( 2 \frac{d (\ell - 1)}{\ell (d - 1)} M + d \left( 3 + 2 \frac{d - \ell}{\ell (d - 1)} \right) Q \right) Q_\lambda^{-\frac12}
  .\]
  Thus
  \begin{align}
         \Tr S
    &  = \N'_\infty(\lambda) \left( 2 \frac{d (\ell - 1)}{\ell (d - 1)} \Tr(Q_\lambda^{-\frac12} M Q_\lambda^{-\frac12}) + d \left( 3 + 2 \frac{d - \ell}{\ell (d - 1)} \right) \Tr(Q Q_\lambda^{-1}) \right)
  \\&\le \frac1\lambda \N'_\infty(\lambda) \left( 2 \frac{d (\ell - 1)}{\ell (d - 1)} d + d \left( 3 + 2 \frac{d - \ell}{\ell (d - 1)} \right) \right) \Tr(Q)
  \\&  = \frac1\lambda \N'_\infty(\lambda) d \left( 2 \frac{d (\ell - 1) + d - \ell}{\ell (d - 1)} + 3 \right) \Tr(Q)
  \\&  = \frac{5 d}{\lambda} \N'_\infty(\lambda) \Tr(Q)
  .\end{align}
  We similarly have
  \begin{align}
         \norm S
    &\le \N'_\infty(\lambda) \left( 2 \frac{d (\ell - 1)}{\ell (d - 1)} \norm*{Q_\lambda^{-\frac12} M Q_\lambda^{-\frac12}} + d \left( 3 + 2 \frac{d - \ell}{\ell (d - 1)} \right) \norm*{Q Q_\lambda^{-1}} \right)
  \\&\le \N'_\infty(\lambda) \left( 2 \frac{d (\ell - 1)}{\ell (d - 1)} d + d \left( 3 + 2 \frac{d - \ell}{\ell (d - 1)} \right) \right)
  \\&  = 5 d \N'_\infty(\lambda)
  .\end{align}
  Note also that $1 \le \ell \le d$
  implies $2 \frac{d (\ell - 1)}{\ell (d - 1)} \le 3 + 2 \frac{d - \ell}{\ell (d-1)}$
  for integral $\ell$ and $d$.
  Since $M \le d Q$,
  and like in \cref{thm:bernstein-sum-ops-bernoulli} we have that $\norm{Q Q_\lambda^{-1}} \ge \frac{1}{1 + \rho}$,
  we obtain that
  \begin{align}
         \norm S
    &  = \N'_\infty(\lambda) \norm*{- 2 \frac{d (\ell - 1)}{\ell (d - 1)} Q_\lambda^{-\frac12} M Q_\lambda^{-\frac12} + d \left( 3 + 2 \frac{d - \ell}{\ell (d - 1)} \right) Q Q_\lambda^{-1}}
  \\&\ge \N'_\infty(\lambda) \left( - 2 \frac{d (\ell - 1)}{\ell (d - 1)} \norm*{Q_\lambda^{-\frac12} M Q_\lambda^{-\frac12}} + d \left( 3 + 2 \frac{d - \ell}{\ell (d - 1)} \right) \norm*{Q Q_\lambda^{-1}} \right)
  \\&\ge \N'_\infty(\lambda) \left( - 2 \frac{d (\ell - 1)}{\ell (d - 1)} d + d \left( 3 + 2 \frac{d - \ell}{\ell (d - 1)} \right) \frac{1}{1 + \rho} \right)
  \\&  = d \N'_\infty(\lambda) \left( \left( 3 + 2 \frac{d - \ell}{\ell (d - 1)} \right) \frac{1}{1 + \rho} - 2 \frac{d (\ell - 1)}{\ell (d - 1)} \right)
  .\end{align}
  We then have that
  \[
    \frac{t}{s_*}
    = \frac{5 \Tr(Q) / \lambda}{\left( 3 + 2 \frac{d - \ell}{\ell (d - 1)} \right) \frac{1}{1 + \rho} - 2 \frac{d (\ell - 1)}{\ell (d - 1)}}
  ,\]
  which is well-defined and positive as long as
  either $\ell = 1$
  or $\left( 3 + 2 \frac{d - \ell}{\ell (d - 1)} \right) \frac{1}{1 + \rho} > 2 \frac{d (\ell - 1)}{\ell (d - 1)}$,
  i.e.\ $\rho < \frac12 \frac{\ell + 4 - 5 \frac{\ell}{d}}{\ell - 1}$;
  since $\frac{\ell}{d} \le 1$,
  it suffices that $\rho < \frac12$.
  The claim follows from \cref{thm:bernstein-sum-ops-gen}.
\end{proof}
An interesting special case of \cref{thm:bernstein-sum-ops-fixed-num} is $\ell = 1$, where $t / s_*$ reduces to $\frac{1 + \rho}{\lambda} \Tr(Q)$.

\begin{lemma} \label{thm:bernstein-sum-ops-all}
  Take the setup of \cref{thm:bernstein-sum-ops-gen}
  where each $U_i^a$ is identically $1$:
  we always sample all components of the considered points.

  For any $\rho \in (0, \frac12)$,
  $\lambda \in (0, \rho \norm{Q})$,
  and $\delta \ge 0$,
  it holds with probability at least $1 - \delta$ that
  \[
    \lambda_{\max}\left( Q_\lambda^{-\frac12} \left( Q - \frac1n \sum_{a=1}^n V_a \right) Q_\lambda^{-\frac12} \right)
    \le \frac{2 \beta}{3 n} + \sqrt{\frac{10 d \N'_\infty(\lambda) \beta}{n}}
    ,
    \qquad
    \beta := \log\frac{10 \Tr Q}{\lambda \delta \left(\frac{3}{1 + \rho} - 2 \right)}
  .\]
\end{lemma}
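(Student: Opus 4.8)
The plan is to obtain this as the simplest specialization of \cref{thm:bernstein-sum-ops-gen}, taking the trivial masking $U_i^a \equiv 1$. In that case $\mu_i = \E[U_i^a] = 1$, so $\nu_{ij} = \E[U_i^a U_j^a]/(\mu_i\mu_j) = 1$ for every pair (including $i = j$), and hence $\nu_i = \sum_{j=1}^d \nu_{ij} = d$. The sampled operator $V_a$ reduces to $\sum_{i=1}^d Y_i^a \otimes Y_i^a$, as expected. Writing $M := \E\big[(\sum_{i=1}^d Y_i) \otimes (\sum_{i=1}^d Y_i)\big]$ for the operator of \cref{thm:berstein-sum-ops-helper}, we get $\E[\sum_{i,j}^d \nu_{ij}(Y_i \otimes Y_j)] = M$ and $\E[\sum_{i=1}^d \nu_i (Y_i \otimes Y_i)] = dQ$, so the operator $S$ in \cref{thm:bernstein-sum-ops-gen} becomes $S = \N'_\infty(\lambda)\,Q_\lambda^{-\frac12}(2M + 3dQ)\,Q_\lambda^{-\frac12}$.

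Next I would extract the three scalars $t \ge \Tr S$, $s^* \ge \norm S$, and $s_* \le \norm S$ needed to invoke \cref{thm:bernstein-sum-ops-gen}. From \cref{thm:berstein-sum-ops-helper}, $\Tr(Q_\lambda^{-\frac12} M Q_\lambda^{-\frac12}) \le \frac d\lambda \Tr Q$ and $\norm{Q_\lambda^{-\frac12} M Q_\lambda^{-\frac12}} \le d$; together with $\Tr(Q Q_\lambda^{-1}) \le \frac1\lambda \Tr Q$ and $\norm{Q Q_\lambda^{-1}} \le 1$, this gives $\Tr S \le \frac{5d}{\lambda}\N'_\infty(\lambda)\Tr Q =: t$ and $\norm S \le 5d\,\N'_\infty(\lambda) =: s^*$.

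The only step needing a little care is the lower bound $\norm S \ge s_*$. Since $Q$ and $Q_\lambda$ commute, $S/\N'_\infty(\lambda) = 2Q_\lambda^{-\frac12} M Q_\lambda^{-\frac12} + 3d\,Q Q_\lambda^{-1}$, and the reverse triangle inequality gives $\norm S \ge \N'_\infty(\lambda)\big(3d\,\norm{Q Q_\lambda^{-1}} - 2\,\norm{Q_\lambda^{-\frac12} M Q_\lambda^{-\frac12}}\big)$. Using $\lambda < \rho\norm Q$ to get $\norm{Q Q_\lambda^{-1}} = \norm Q/(\norm Q + \lambda) \ge 1/(1+\rho)$, and $\norm{Q_\lambda^{-\frac12} M Q_\lambda^{-\frac12}} \le d$ again, this is at least $d\,\N'_\infty(\lambda)\big(\frac{3}{1+\rho} - 2\big) =: s_*$; the hypothesis $\rho < \frac12$ is precisely what forces $\frac{3}{1+\rho} > 2$, so $s_* > 0$ is a legitimate lower bound.

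Feeding $t$, $s_*$, $s^*$ into \cref{thm:bernstein-sum-ops-gen} closes the argument: the bound there is $\lambda_{\max}(\cdots) \le \frac{2\beta}{3n} + \sqrt{2 s^* \beta / n}$ with $\beta = \log\frac{2t}{\delta s_*}$, and substituting $s^* = 5d\,\N'_\infty(\lambda)$ and the values of $t, s_*$ turns this into $\frac{2\beta}{3n} + \sqrt{10 d\,\N'_\infty(\lambda)\beta/n}$ with $\beta = \log\frac{10\Tr Q}{\lambda\delta(3/(1+\rho) - 2)}$, exactly as stated. I do not anticipate any genuine obstacle: this is the $r = 0$, all-components case of the computation already carried out in \cref{thm:bernstein-sum-ops-bernoulli}, and the one non-mechanical point is the reverse-triangle lower bound on $\norm S$ together with the resulting constraint $\rho < \frac12$.
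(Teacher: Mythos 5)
Your proof is correct, and the calculations all check out: with $U_i^a \equiv 1$ you get $\nu_{ij} = 1$, $\nu_i = d$, $S = \N'_\infty(\lambda)\,Q_\lambda^{-1/2}(2M + 3dQ)\,Q_\lambda^{-1/2}$, and the bounds $t = \tfrac{5d}{\lambda}\N'_\infty(\lambda)\Tr Q$, $s^* = 5d\,\N'_\infty(\lambda)$, $s_* = d\,\N'_\infty(\lambda)\big(\tfrac{3}{1+\rho} - 2\big)$ feed into \cref{thm:bernstein-sum-ops-gen} to give exactly the stated inequality, with $\rho < \tfrac12$ guaranteeing $s_* > 0$.

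The one difference from the paper is purely presentational: the paper's proof is a single sentence observing that the statement is the $p=1$ case of \cref{thm:bernstein-sum-ops-bernoulli} (equivalently, the $\ell = d$ case of \cref{thm:bernstein-sum-ops-fixed-num}), whereas you redo the specialization of \cref{thm:bernstein-sum-ops-gen} from scratch. You even note at the end that this is the $r=0$, all-components instance of the Bernoulli computation, so you are aware of the shortcut; the paper simply takes it. Your route is more self-contained and makes the lemma readable without having first digested the Bernoulli or fixed-$\ell$ variants, at the cost of repeating a calculation that already exists. Either is acceptable; there is no mathematical gap.
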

\begin{proof}
Special case of either \cref{thm:bernstein-sum-ops-bernoulli} with $p = 1$
or \cref{thm:bernstein-sum-ops-fixed-num} with $\ell = d$.
\end{proof}

\subsection{Results on Hilbert space operators}
\cref{thm:proj-norm,thm:eigs} were proven and used by \citet{lessismore}.

\begin{lemma}[Proposition 3 of \citet{lessismore}] \label{thm:proj-norm}
  Let $\h_1$, $\h_2$, $\h_3$ be three separable Hilbert spaces,
  with $Z : \h_1 \to \h_2$ a bounded linear operator
  and $P$ a projection operator on $\h_1$ with $\range P = \overline{\range Z^*}$.
  Then for any bounded linear operator $F : \h_3 \to \h_1$
  and any $\lambda > 0$,
  \[
    \norm{(I - P) F} \le \sqrt\lambda \norm{(Z^* Z + \lambda I)^{-\frac12} F}
  .\]
\end{lemma}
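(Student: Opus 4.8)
The plan is to identify $I - P$ with the orthogonal projection onto $\ker Z$ and to exploit that $Z^* Z$ vanishes on this subspace, so that $(Z^* Z + \lambda I)^{-1/2}$ acts there simply as multiplication by $\lambda^{-1/2}$. Write $T := Z^* Z$, a bounded positive self-adjoint operator on $\h_1$, and $T_\lambda := T + \lambda I$.

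First I would record two standard identities: $\ker Z = \ker T$, immediate from $\langle v, T v \rangle_{\h_1} = \norm{Z v}_{\h_2}^2$; and $\overline{\range Z^*} = (\ker Z)^\perp$, obtained by taking orthogonal complements in $\ker Z = (\range Z^*)^\perp$. Combining these with the hypothesis $\range P = \overline{\range Z^*}$ shows that $P$ is the orthogonal projection onto $(\ker T)^\perp = \overline{\range T}$, so $I - P$ is the orthogonal projection onto $\ker T$.

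Next I would use that $\ker T$ and $\overline{\range T}$ are orthogonal, complementary, $T$-invariant subspaces, hence invariant under every bounded Borel function of $T$, in particular under $T_\lambda^{-1/2}$; moreover $T$ restricted to $\ker T$ is the zero operator, so $T_\lambda^{-1/2}$ restricted to $\ker T$ is $\lambda^{-1/2}$ times the identity. Thus for any $w \in \h_1$, decomposing $w = (I - P) w + P w$ with $(I - P) w \in \ker T$ and $P w \in \overline{\range T}$, we get
\[
  T_\lambda^{-\frac12} w = \lambda^{-\frac12} (I - P) w + T_\lambda^{-\frac12} P w ,
\]
and the two summands are orthogonal, the first lying in $\ker T$ and the second in the $T_\lambda^{-1/2}$-invariant subspace $\overline{\range T}$. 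The Pythagorean theorem then gives
\[
  \norm{T_\lambda^{-\frac12} w}_{\h_1}^2 = \lambda^{-1} \norm{(I - P) w}_{\h_1}^2 + \norm{T_\lambda^{-\frac12} P w}_{\h_1}^2 \ge \lambda^{-1} \norm{(I - P) w}_{\h_1}^2 ,
\]
i.e.\ $\norm{(I - P) w}_{\h_1} \le \sqrt\lambda \, \norm{T_\lambda^{-1/2} w}_{\h_1}$ for every $w \in \h_1$.

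Finally I would apply this with $w = F g$ for an arbitrary $g \in \h_3$ with $\norm{g}_{\h_3} \le 1$, obtaining $\norm{(I - P) F g}_{\h_1} \le \sqrt\lambda \, \norm{T_\lambda^{-1/2} F g}_{\h_1} \le \sqrt\lambda \, \norm{T_\lambda^{-1/2} F}$; taking the supremum over such $g$ yields $\norm{(I - P) F} \le \sqrt\lambda \, \norm{(Z^* Z + \lambda I)^{-1/2} F}$, as claimed. (Equivalently one can argue at the operator level: the same reasoning gives $I - P = \sqrt\lambda \, (I - P) T_\lambda^{-1/2}$, whence $\norm{(I - P) F} = \sqrt\lambda \, \norm{(I - P) T_\lambda^{-1/2} F} \le \sqrt\lambda \, \norm{T_\lambda^{-1/2} F}$ since $\norm{I - P} \le 1$.) The only point needing a moment's care is the functional-calculus claim that $\ker T$ and $\overline{\range T}$ reduce $T_\lambda^{-1/2}$ and that $T_\lambda^{-1/2}$ is scalar on $\ker T$ — this is where self-adjointness and positivity of $Z^* Z$ are used — and I expect that to be the main (mild) obstacle, the rest being a one-line orthogonality computation.
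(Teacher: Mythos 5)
The paper does not prove this lemma itself: it states it as Proposition~3 of \citet{lessismore} and defers to that reference, so there is no internal proof to compare against. Your argument is correct and self-contained. Writing $T = Z^*Z$, you correctly identify $I-P$ with the orthogonal projection onto $\ker Z = \ker T$; since $\ker T$ and $\overline{\range T}$ are reducing subspaces for the self-adjoint $T$, they are invariant under the bounded Borel function $(T+\lambda I)^{-1/2}$, which acts as $\lambda^{-1/2} I$ on $\ker T$, and the Pythagorean inequality (equivalently, the operator identity $I-P = \sqrt\lambda\,(I-P)(T+\lambda I)^{-1/2}$ together with $\norm{I-P}\le 1$) gives the claimed bound after applying it to $w = Fg$. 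The one point worth making explicit is that $P$ must be the \emph{orthogonal} projection onto $\overline{\range Z^*}$ — as it is everywhere the lemma is invoked in the paper, where $P_Y = VV^*$ from an SVD — since that is what justifies both the orthogonal decomposition and $\norm{I-P}\le 1$; with that noted, the proof is complete.
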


\begin{lemma}[Proposition 7 of \citet{lessismore}] \label{thm:eigs}
  Let $\h$ be a separable Hilbert space,
  with $A, B$ bounded self-adjoint positive linear operators on $\h$
  and $A_\lambda = A + \lambda I$,
  $B_\lambda = B + \lambda I$.
  Then for any $\lambda > 0$,
  \[
    \norm{A_\lambda^{-\frac12} B^\frac12}
    \le \norm{A_\lambda^{-\frac12} B_\lambda^\frac12}
    \le (1 - \gamma(\lambda))^{-\frac12}
  \]
  when
  \[
    \gamma(\lambda)
    := \lambda_{\max}\left( B_\lambda^{-\frac12} (B - A) B_\lambda^{-\frac12} \right)
    < 1
  .\]
\end{lemma}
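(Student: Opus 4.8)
The plan is to prove the two inequalities separately, in both cases reducing an operator-norm bound on $A_\lambda^{-\frac12} B_\lambda^{\frac12}$ (or $A_\lambda^{-\frac12} B^{\frac12}$) to a comparison of positive operators in the Loewner order, using only the elementary facts that $\norm{T}^2 = \norm{T T^*} = \norm{T^* T}$, that the operator norm is monotone on positive operators, and the continuous functional calculus for the self-adjoint positive operators $A_\lambda, B_\lambda$, which are boundedly invertible since $A, B \succeq 0$ and $\lambda > 0$.

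For the first inequality I would write $\norm{A_\lambda^{-\frac12} B^{\frac12}}^2 = \norm{A_\lambda^{-\frac12} B A_\lambda^{-\frac12}}$ and $\norm{A_\lambda^{-\frac12} B_\lambda^{\frac12}}^2 = \norm{A_\lambda^{-\frac12} B_\lambda A_\lambda^{-\frac12}}$, using self-adjointness of $B^{\frac12}$, $B_\lambda^{\frac12}$, $A_\lambda^{-\frac12}$. Since $0 \preceq B \preceq B + \lambda I = B_\lambda$, conjugating by $A_\lambda^{-\frac12}$ preserves the order, so $0 \preceq A_\lambda^{-\frac12} B A_\lambda^{-\frac12} \preceq A_\lambda^{-\frac12} B_\lambda A_\lambda^{-\frac12}$, and taking norms gives the claim.

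For the second inequality, note first that $\norm{A_\lambda^{-\frac12} B_\lambda^{\frac12}}^2 = \norm{B_\lambda^{\frac12} A_\lambda^{-1} B_\lambda^{\frac12}} = \norm{(B_\lambda^{-\frac12} A_\lambda B_\lambda^{-\frac12})^{-1}}$, the last equality because $B_\lambda^{-\frac12} B_\lambda^{\frac12} = I$. The key step is the algebraic rewrite $A_\lambda = A + \lambda I = (B + \lambda I) - (B - A) = B_\lambda - (B - A)$, which yields
\[
  B_\lambda^{-\frac12} A_\lambda B_\lambda^{-\frac12} = I - M, \qquad M := B_\lambda^{-\frac12} (B - A) B_\lambda^{-\frac12},
\]
a bounded self-adjoint operator with $\lambda_{\max}(M) = \gamma(\lambda)$. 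Hence $M \preceq \gamma(\lambda) I$, and since $\gamma(\lambda) < 1$ we get $B_\lambda^{-\frac12} A_\lambda B_\lambda^{-\frac12} = I - M \succeq (1 - \gamma(\lambda)) I \succ 0$; inverting (operator monotonicity of $t \mapsto t^{-1}$ on positive operators) gives $(B_\lambda^{-\frac12} A_\lambda B_\lambda^{-\frac12})^{-1} \preceq (1 - \gamma(\lambda))^{-1} I$. Taking operator norms and then a square root finishes it.

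There is no real obstacle here, only bookkeeping with the functional calculus: checking that every square root and inverse appearing is a well-defined bounded operator (guaranteed by $A, B \succeq 0$, $\lambda > 0$), that indeed $(B_\lambda^{-\frac12} A_\lambda B_\lambda^{-\frac12})^{-1} = B_\lambda^{\frac12} A_\lambda^{-1} B_\lambda^{\frac12}$, and that for bounded self-adjoint $M$ one has $M \preceq \lambda_{\max}(M) I$ via $\lambda_{\max}(M) = \sup_{\norm{u}_\h = 1} \langle u, M u \rangle_\h$; none of this uses compactness or separability of $\h$. As this is precisely Proposition~7 of \citet{lessismore}, one could otherwise simply cite it.
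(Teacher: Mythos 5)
Your proof is correct, and the paper does not give its own argument for this lemma (it is cited directly as Proposition~7 of \citet{lessismore}, with the remark that it was ``proven and used'' there). Your route---rewriting $A_\lambda = B_\lambda - (B-A)$, reducing to $\norm{(I-M)^{-1}}$ with $M = B_\lambda^{-\frac12}(B-A)B_\lambda^{-\frac12}$, and invoking operator anti-monotonicity of inversion---is the same standard argument used in that reference, so there is no genuine divergence to report.
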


\subsection{Distances between distributions in \texorpdfstring{$\p$}{P}}
\begin{lemma}[Distribution distances from parameter distances] \label{thm:dist-distances}
  Let $f_0, f \in \f$ correspond to distributions $p_0 = p_{f_0}, p = p_f \in \p$.
  Under \cref{assump:bounded-kernel},
  we have that for all $r \in [1, \infty]$:
  \begin{align}
       \norm{p - p_0}_{L^r(\Omega)}
  &\le 2 \kappa e^{2 \kappa \norm{f - f_0}_\h} e^{2 \kappa \min(\norm{f}_\h, \norm{f_0}_\h)}
       \norm{f - f_0}_\h \, \norm{q_0}_{L^r(\Omega)}
\\     \norm{p - p_0}_{L^1(\Omega)}
  &\le 2 \kappa e^{2 \kappa \norm{f - f_0}_\h} \norm{f - f_0}_\h
\\     \mathrm{KL}(f \| f_0)
  &\le c \kappa^2 \norm{f - f_0}_\h^2 e^{\kappa \norm{f - f_0}_\h} (1 + \kappa \norm{f - f_0}_\h)
\\     \mathrm{KL}(f_0 \| f)
  &\le c \kappa^2 \norm{f - f_0}_\h^2 e^{\kappa \norm{f - f_0}_\h} (1 + \kappa \norm{f - f_0}_\h)
\\     h(f, f_0)
  &\le \kappa e^{\frac12 \norm{f - f_0}_\h} \norm{f - f_0}_\h
  \end{align}
  where $c$ is a universal constant
  and $h$ denotes the Hellinger distance $h(p, q) = \norm{\sqrt{p} - \sqrt{q}}_{L^2(\Omega)}$.
\end{lemma}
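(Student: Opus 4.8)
This lemma essentially restates bounds of \citet{infdim}, and the plan is to follow that route: express each divergence as an integral of a scalar function of the pointwise log-density-ratio, then invoke elementary scalar inequalities. Write $g := f - f_0$, $p := p_f$, $p_0 := p_{f_0}$, and $\Delta(x) := \log\frac{p(x)}{p_0(x)}$. The foundation is a uniform bound on $g$: by \cref{assump:bounded-kernel} and the reproducing property, $\abs{g(x)} = \abs{\langle g, k(x,\cdot)\rangle_\h} \le \sqrt{k(x,x)}\,\norm{g}_\h \le \kappa\norm{g}_\h =: G$ for every $x \in \Omega$. Since $A(f) = A(f_0) + \log\E_{p_0}[e^{g}]$ and $e^{g(x)} \in [e^{-G},e^{G}]$, we have $\abs{A(f)-A(f_0)} \le G$, so $\Delta(x) = g(x) - (A(f)-A(f_0))$ satisfies $\abs{\Delta(x)} \le 2G$, i.e.\ $e^{-2G} \le p(x)/p_0(x) \le e^{2G}$ everywhere. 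Finally, $\int q_0 = 1$ forces $A(f_0) \ge \inf_x f_0(x) \ge -\kappa\norm{f_0}_\h$, hence $p_0(x) \le e^{2\kappa\norm{f_0}_\h}q_0(x)$, and symmetrically $p(x) \le e^{2\kappa\norm{f}_\h}q_0(x)$.

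For the $L^r$ bound, write $p(x) - p_0(x) = p_0(x)\bigl(e^{\Delta(x)}-1\bigr)$ and use $\abs{e^a - 1} \le \abs{a}\,e^{\abs a}$ with $\abs{\Delta(x)} \le 2G$ to bound the second factor by $2G\,e^{2G}$; then $\abs{p(x)-p_0(x)} \le 2G\,e^{2G}\,p_0(x) \le 2G\,e^{2G}\,e^{2\kappa\norm{f_0}_\h}q_0(x)$, and taking $L^r(\Omega)$ norms gives the claim with $\norm{f_0}_\h$. Repeating the argument with the roles of $p$ and $p_0$ (equivalently $f$ and $f_0$) exchanged gives the version with $\norm{f}_\h$, so the smaller of the two is the stated bound; for $r = 1$ the factor $e^{2\kappa\norm{f_0}_\h}\norm{q_0}_{L^1(\Omega)}$ collapses to $\norm{p_0}_{L^1(\Omega)} = 1$.

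For KL I would avoid using $\abs{\Delta} \le 2G$ directly, since that only yields a bound linear in $G$. Instead, from $A(f)-A(f_0) = -\log\E_{p}[e^{-g}]$ one gets $\mathrm{KL}(p \| p_0) = \E_{p}[g] - (A(f)-A(f_0)) = \log\E_{p}\bigl[e^{-(g - \E_{p}[g])}\bigr]$, a log-moment-generating function of a mean-zero random variable whose range is at most $2G$; Hoeffding's lemma bounds it by $\tfrac12 G^2$, which is dominated by the stated $c\kappa^2\norm{g}_\h^2 e^{\kappa\norm{g}_\h}(1+\kappa\norm{g}_\h)$, and the symmetric computation (swapping $p_0$ and $p$) handles $\mathrm{KL}(p_0\|p)$. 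For Hellinger, $h(p,p_0)^2 = \int p_0\,\bigl(e^{\Delta(x)/2}-1\bigr)^2\,\ud x$; writing $e^{\Delta/2} = e^{g/2}/\sqrt{\E_{p_0}[e^g]}$ shows this equals $2\bigl(1 - \E_{p_0}[e^{g/2}]/\sqrt{\E_{p_0}[e^g]}\bigr)$, and combining $1 - \sqrt{1-x} \le x$ (applied to the Cauchy--Schwarz defect of the numerator) with a mean-value estimate for $t\mapsto e^{t/2}$ on $[-G,G]$ gives $h(p,p_0) \le \kappa e^{\frac12\norm{g}_\h}\norm{g}_\h$.

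The one genuinely nontrivial point is the KL estimate: substituting the uniform control $\abs{\Delta(x)} \le 2G$ naively into $\mathrm{KL}(p\|p_0) = \int p\,\Delta\,\ud x$ gives only $\mathrm{KL} \le 2G$, linear rather than quadratic in $\norm{f-f_0}_\h$, so recovering the correct order forces the rewriting of KL as a log-MGF of a bounded, mean-zero variable and a sub-Gaussian-type estimate. Everything else — the scalar inequalities, the symmetrization producing the $\min(\norm{f}_\h, \norm{f_0}_\h)$ factor, and matching the exact exponential prefactors — is routine bookkeeping.
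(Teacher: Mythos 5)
Your proof is correct and hinges on the same key observation as the paper's — by \cref{assump:bounded-kernel} and the reproducing property, $\norm{f - f_0}_\infty \le \kappa\norm{f - f_0}_\h$ — but where the paper then invokes Lemma A.1 of \citet{infdim} directly, you reconstruct the content of that lemma from scratch. Your $L^r$ and KL arguments are carefully done: the identity $\Delta = g - (A(f) - A(f_0))$ with $\abs{A(f) - A(f_0)} \le \kappa\norm{g}_\h$ gives the pointwise bound, $A(f_0) \ge -\kappa\norm{f_0}_\h$ gives the $p_0 \le e^{2\kappa\norm{f_0}_\h}q_0$ domination, and the Hoeffding-lemma reformulation of KL as $\log\E_p\bigl[e^{-(g - \E_p g)}\bigr] \le \tfrac12 G^2$ is a clean way to get the quadratic rate that the naive $\abs{\Delta}\le 2G$ bound misses. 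That KL bound is in fact uniformly tighter than the stated $c\kappa^2\norm{g}_\h^2 e^{\kappa\norm g_\h}(1+\kappa\norm g_\h)$, so it certainly implies it. The one place your sketch is loose is Hellinger: the Cauchy--Schwarz defect plus mean-value argument, carried through carefully (bounding $\mathrm{Var}_{p_0}(e^{g/2})$ by $\tfrac14 e^{G} G^2$ via Lipschitzness of $e^{t/2}$ on $[-G,G]$, and $\E_{p_0}[e^g] \ge e^{-G}$), gives $h \le \tfrac{1}{\sqrt 2}\kappa e^{\kappa\norm g_\h}\norm g_\h$, whose exponent is $\kappa\norm g_\h$ rather than the lemma's $\tfrac12\norm g_\h$ — not the same bound. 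A simpler route that does land cleanly inside the stated form is to use the standard inequality $h^2(p_0,p) \le \mathrm{KL}(p_0\|p)$, which together with your Hoeffding bound yields $h \le \kappa\norm g_\h/\sqrt 2$.
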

\begin{proof}
  First note that
  \[
    \norm{f - f_0}_\infty
    = \sup_{x \in \Omega} \abs{f(x) - f_0(x)}
    = \sup_{x \in \Omega} \abs{\langle f - f_0, k(x, \cdot) \rangle_\h}
    \le \kappa \norm{f - f_0}_\h
  .\]
  Then, since each $f \in \h$ is bounded and measurable,
  $\p_\infty$ of Lemma A.1 of \citet{infdim} is simply $\p$,
  and the result applies directly.
\end{proof}

\end{appendices}

\printbibliography
\end{refsection}

\end{document}